\newtheorem{theorem}{Theorem}
\newtheorem{fact}{Fact}
\newtheorem{defn}{Definition}
\newtheorem{corr}{Corollary}
\newtheorem{prop}{Proposition}
\newtheorem{lemma}{Lemma}
\def\Pr{\mathop{\rm Pr}\nolimits}
\DeclareMathOperator*{\argmin}{arg\,min}
\DeclareMathOperator*{\argmax}{arg\,max}
\DeclareMathOperator{\Tr}{Tr}
\newcommand{\Dcal}{\mathcal{D}}
\renewcommand \vec [1]{\bm{#1}}
\title{Evaluating and Rewarding Teamwork Using Cooperative Game Abstractions}
\author{%
  Tom Yan\\
  Carnegie Mellon University\\
  Facebook AI Research \\
  \texttt{tyyan@cmu.edu} \\
  \And
  Christian Kroer\\
  Columbia University\\
  Facebook Core Data Science\\
  \texttt{christian.kroer@columbia.edu}\\
  \And
  Alexander Peysakhovich\\
  Facebook AI Research\\
  \texttt{alexpeys@fb.com}
  % examples of more authors
  % \And
  % Coauthor \\
  % Affiliation \\
  % Address \\
  % \texttt{email} \\
  % \AND
  % Coauthor \\
  % Affiliation \\
  % Address \\
  % \texttt{email} \\
  % \And
  % Coauthor \\
  % Affiliation \\
  % Address \\
  % \texttt{email} \\
  % \And
  % Coauthor \\
  % Affiliation \\
  % Address \\
  % \texttt{email} \\
}
\begin{document}

\maketitle

\begin{abstract}
Can we predict how well a team of individuals will perform together? How should individuals be rewarded for their  contributions to the team performance? Cooperative game theory gives us a powerful set of tools for answering these questions: the Characteristic Function (CF) and solution concepts like the Shapley Value (SV). There are two major difficulties in applying these techniques to real world problems: first, the CF is rarely given to us and needs to be learned from data. Second, the SV is combinatorial in nature. We introduce a parametric model called cooperative game abstractions (CGAs) for estimating CFs from data. CGAs are easy to learn, readily interpretable, and crucially allow linear-time computation of the SV. We provide identification results and sample complexity bounds for CGA models as well as error bounds in the estimation of the SV using CGAs. We apply our methods to study teams of artificial RL agents as well as real world teams from professional sports.
\end{abstract}

\section{Introduction}
Suppose we have a group of individuals out of which we need to select a team to perform a task. Besides maximizing team performance, we also wish to reward individuals fairly for their contributions to the team \cite{moulin2004fair}. This general problem arises in many real world contexts: choosing athletes for a sports team \cite{li2018learning}, choosing workers for a project \cite{salas2005there}, choosing a subset of classifiers to use in an ensemble \cite{rokach2010ensemble} etc. In this paper we ask: how can we use data on past performance to figure out which individuals complement each other? How can we then \emph{fairly} compensate team members? % To do this, we consider using the cooperative game abstractions (CGA) model.

Standard game theory (sometimes called `non-cooperative' game theory) explicitly specifies actions, players, and utility functions \cite{osborne1994course}. By contrast, cooperative game theory abstracts away from the `rules of the game' and simply has as primitives the agents and the characteristic function (henceforth CF) \cite{brandenburger2007cooperative}. The CF measures how much utility a coalition can create. Solution concepts in cooperative game theory have been developed to be `fair' divisions of the total utility created by the coalition. These solution concepts can be viewed either as prescriptive (i.e. this is what an individual `deserves' to get given their contribution) or predictive of what will happen in real world negotiations, where the intuition is that coalitions (or individuals) that don't receive fair compensations will opt to leave the game and simply transact amongst themselves.

These tools are useful for answering our main questions. The CF tells us how well a team will perform and the solution concepts will tell us how to divide value across individuals. For the purposes of this paper, we consider one of the most prominent solution concepts: the Shapley Value (SV). However, there are two hurdles to overcome. 

\begin{enumerate}
    \item The CF is unknown to us, and is combinatorial in nature, thus requiring a sensible parametric model through which we can learn the CF from team performance data.
    \item The SV requires an exponential number of operations to compute.
\end{enumerate}

We introduce the cooperative game abstraction (CGA) model that simultaneously addresses \emph{both} of these issues. In addition, CGA models are interpretable so as to aid analysts in understanding group synergy. Our main idea is motivated by a particular decomposition of the CF into an additive series of weights that capture $m$-way interaction between the $n$ players for $m=1,...,n$. When we zero out terms of order order $k+1$ and higher, this leaves behind an abstraction, a sketched version of the real cooperative game, which we refer to as a $k$th order CGA. 

% A $k$th order CGA model can be readily parameterized by $k$ tensors of dimension $1$ to $k$, thus making it amenable to easy optimization with modern autograd toolkits. This addresses the first issue. Moreover, CGA models are useful in that the Shapley Value of an individual can be computed directly from a weighted sum of the interaction forms (i.e parameters of the CGA model), thus addressing the second issue.

\textbf{Our Contribution:} To the best of our knowledge, we are the first to estimate characteristic functions with lossy \emph{abstractions} \cite{gilpin2007lossless} of the true characteristic function using parametric models, and bound the error of the estimated CF and SV. The second order variant of the CGA was first proposed in \cite{deng1994complexity}. We generalize this work to study CGA models of \emph{any order}. Our theoretical contributions are as follows: (i) sample complexity characterization of when a CGA model of order $k$ (for any order $k$) is identifiable from data (ii) sensitivity analysis of how the estimation error of the characteristic function propagates into the downstream task of estimating the Shapley Value.

Empirically, we first validate the usefulness of CGAs in artificial RL environments, in which we can verify the predictions of CGAs on counterfactual teams. Then, we model real world data from the NBA, for which we do not have ground truth, using CGAs and show that its predictions are consistent with expert knowledge and various metrics of team strength and player value.

%In our first experiment, we train multiple teams of deep RL agents in a pursuit-and-evasion game. This gives us an environment where we can explicitly validate the predictions of CGAs on counterfactual teams. We show that a second-order CGA does well at predicting team performance. We also examine the learned parameters to interpret what the CGA can tell us about the structure of teamwork in this task.

% For our second experiment, we consider a real world dataset from the US National Basketball Association. We use variation in starting lineups to learn a CGA model for NBA players. We cannot validate the CGA directly by e.g. constructing new teams and forcing them to play against each other, and thus we consider indirect means of evaluation. We show that NBA All-Star teams (which the model has not been trained on) are predicted to be in the $99.99^{th}$ percentile of constructed teams. We also see that the SV estimated from the CGA correlates well with player salaries and various measures of player contribution as used in NBA analytics, including value over replacement player (VORP) and win-share (WS). 

\section{Related Work}

Past works on ML for cooperative games have largely been theoretical and focus either on estimating the CF or estimating the Shapley Value directly without the CF. This differs from our goal, which is to model \emph{both} with a provably good model that demonstrates sound performance on real world data.

\textbf{Modeling Characteristic Functions:} As mentioned prievously, \cite{deng1994complexity} is the first to consider what we consider the second order variant of CGA. However, its focus was on the computational complexity of the \emph{exact} computation of the Shapley Value. We consider the generalization of this representation to any order and are concerned with using lower rank CGA as an abstraction of complex games for \emph{computational tractability}. As the low rank CGA is a \emph{lossy} estimator of the true CF, we study and obtain theoretical bounds on the estimation errors of what we aim to compute: the CF and the SV.

A related work is \cite{feige2015unifying} which proposes the MGH model for CFs. While the MGH model is like CGA in that both are complete representations, it contains nonlinearity that makes it harder to optimize and \emph{interpret}. More crucially, the MPH model \emph{does not} admit an easy computation of the SV. On the other hand, there are succinct representation models proposed for CFs that do allow the SVs to be readily computed. These are algebraic decision diagrams \cite{bahar1997algebric} and MC-nets \cite{ieong2005marginal}, which represent CFs with a set of logical rules. However, the key drawback is that these models cannot be readily parameterized in tensor form and optimized using modern auto-grad toolkits, unlike the CGA.

Lastly, there has also been work in learning theory \cite{balcan2015learning} that examines conditions under which a characteristic function can be PAC learned from samples. This work is concerned only with the theoretical learnability of the CF (and not the SV) for \emph{certain classes} of cooperative games. By contrast, we study a concrete, parametric model that can approximate the CF of \emph{any cooperative game}, study how approximation noise propagates into the SV and empirically verify that the model obtains good performance on real data.

\textbf{Computing the Shapley Value:} There has also been work that directly approximates the Shapley Value, without first learning the CF \cite{balkanski2017statistical}. This differs from our goal in that we are interested in estimating \emph{both} the Shapley and the CF. The latter is needed for applications such as counterfactual team performance prediction and optimal team formation, as we will demonstrate in the experiments.

\textbf{Team Performance Analysis from Data:} 
We note that all of the work cited above are theoretical and do not test their model on real world data. \cite{li2018learning} is one empirical work that does. They model e-Sports team performances using a 2nd order CGA. Our work differs in that i) we generalize their model and study CGA models \textit{any order} to obtain comprehensive sample complexity bounds ii) we are interested in \emph{fair payoff assignment} in addition to team strength. To this end, we show that CGA allows for easy computation of SV and derive noise bounds for the estimated SV.

\textbf{Abstraction in Games:} Abstraction is an idea often used in game theory to make the computation of solution concepts such as the Nash Equilibrium (NE) tractable. One can efficiently solve for the NE of a abstracted game and lift the strategy to the original game. In non-cooperative game theory, the relationship between the quality of abstraction and the quality of the lifted strategy with respect to the original game has been heavily studied \cite{lanctot2012no,kroer2014extensive,kroer2016imperfect}. Our analysis characterizes the relationship between the abstractions and the solution concept, here being the Shapley Value, for any cooperative game. To the best of our knowledge, our work is the first to apply abstraction for computational tractability in the context of cooperative games.

\section{Cooperative Game Theory Preliminaries}
We begin with definitions in cooperative game theory.

\begin{defn}
A \textbf{cooperative game} is defined by:
\begin{enumerate}
\item A set of agents $A = \lbrace 1, \dots, n \rbrace$ with generic element $i$
\item A characteristic function $v: 2^A \to \mathbb{R}$
\end{enumerate}
\end{defn}

We will refer to a subset of agents $C \in 2^A$ for which $v(C)$  measures how much utility a team $C$ can create and divide amongst themselves. A `fair division' of this value can be given according to the Shapley Value.

\begin{defn}
The Shapley Value of an agent $i$ with respect to team $A$ is:
 $$\varphi_{i}(v) =  \sum_{S \subseteq {A \setminus i}}  \frac{|S|! (n - |S| - 1)!}{ n! } (v (S \cup \lbrace i \rbrace) - v(S)) $$
\end{defn}

The Shapley Value is typically justified axiomatically. It is the unique division of total value that satisfies axioms of efficiency (all gains are distributed), symmetry (individuals with equal marginal contribution to all coalition get the same division), linearity (if two games are combined, the new division is the sum of the games' divisions), null player (players with $0$ marginal contribution to any coalition receive $0$ value). The Shapley value has been widely applied in ML, in domains such as cost-division \cite{tan2002application,nguyen2018resource}, feature importance \cite{lundberg2017unified}, and data valuation \cite{jia2019towards} to name a few.
%See \citet{shapley1953value} or \citet{winter2002shapley} for a much longer discussion.

%The other seminal solution concept in cooperative game theory is the Core. Core payoffs are those such that no coalition is incentivized to depart from the grand coalition. 
%Importantly, under a second order CGA the Shapley Value is in the Least Core \cite{deng1994complexity}. 
%Though it is not the main focus of this paper, for completeness, in the Appendix we include a short discussion of the computation of the Core using CGA models.

\section{Cooperative Game Abstractions}
\subsection{Motivation}

To model the characteristic function $v$, a natural set of abstractions can be derived from the fact that the characteristic function $v$ can be decomposed into a sum of interaction terms across subsets of agents. In what follows, we will denote abstractions of $v$ as $\hat{v}$.

\begin{fact}
  \label{thm:v interaction form}
  There exists a set of values $\omega_S$ for each $S = \lbrace i_1, \dots, i_k \rbrace  \subseteq A$
  such that any characteristic function can be decomposed into its interaction form where:
  \begin{align}
    \label{eq:v interaction form}
  v(C) = \sum_{k=1}^{|C|} \sum_{S \in 2^C_{k}} \omega_{S}.
  \end{align}
  where $2^C_{k}$ is the set of all coalitions of size $k$.
\end{fact}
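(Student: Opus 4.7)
The plan is to construct the weights $\omega_S$ explicitly via Möbius inversion on the Boolean lattice of subsets of $A$ and then verify the decomposition by a swap of summation. Concretely, under the standard convention $v(\emptyset) = 0$, I would set, for each nonempty $S \subseteq A$,
\[
\omega_S \defeq \sum_{T \subseteq S} (-1)^{|S| - |T|}\, v(T).
\]
These are the classical Harsanyi dividends, and they are well-defined for every characteristic function without any assumptions on $v$.

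The verification then proceeds as follows. Fix a coalition $C \subseteq A$, substitute the definition into $\sum_{\emptyset \neq S \subseteq C} \omega_S$, and exchange the order of summation so that the outer sum ranges over $T \subseteq C$. For each fixed $T$, the inner coefficient becomes $\sum_{S:\,T \subseteq S \subseteq C} (-1)^{|S|-|T|}$, which by the binomial theorem equals $(1-1)^{|C|-|T|}$ and therefore vanishes unless $T = C$. The $T = \emptyset$ contribution drops out by the convention $v(\emptyset) = 0$, so only the $T = C$ term survives, leaving exactly $v(C)$. Since the statement's double sum $\sum_{k=1}^{|C|} \sum_{S \in 2^C_k} \omega_S$ is simply a reindexing of $\sum_{\emptyset \neq S \subseteq C} \omega_S$ by cardinality, this completes the argument.

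An alternative and perhaps cleaner route is to define $\omega_S$ recursively by $\omega_S = v(S) - \sum_{\emptyset \neq T \subsetneq S} \omega_T$; then the identity holds at $C = S$ by construction, and an easy induction on $|C|$ yields it in general, bypassing the inclusion-exclusion computation entirely. I do not anticipate any real obstacle here: the argument is elementary, and the only point that requires mild care is the empty-set convention, which the statement implicitly assumes by starting the outer index at $k=1$. If one wished to drop the convention $v(\emptyset) = 0$, it would suffice to add a constant term $\omega_\emptyset = v(\emptyset)$ and let $k$ range from $0$.
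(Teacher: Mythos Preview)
Your proposal is correct. Your \emph{alternative} recursive route---defining $\omega_S = v(S) - \sum_{\emptyset \neq T \subsetneq S} \omega_T$ and arguing by induction on $|S|$---is exactly what the paper does; its appendix proof constructs the weights inductively in precisely this way and notes uniqueness along the way. Your \emph{primary} route via M\"obius inversion is a genuinely different and somewhat stronger argument: it produces an explicit closed form $\omega_S = \sum_{T \subseteq S} (-1)^{|S|-|T|} v(T)$ (the Harsanyi dividends), which the paper never writes down, and then verifies the decomposition by a clean swap-of-summation plus binomial-vanishing computation. The trade-off is that the paper's inductive argument is marginally shorter and immediately yields uniqueness of the $\omega_S$, whereas your M\"obius approach buys an explicit formula at the cost of a small inclusion--exclusion calculation; uniqueness in your approach follows from the fact that M\"obius inversion is a bijection, though you did not state this explicitly. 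Your handling of the empty-set convention is also slightly more careful than the paper's.
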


%For agents $S = \lbrace i_1, \dots, i_k \rbrace  \subseteq A$ we call the $\omega_{S}$ from Fact~\ref{thm:v interaction form} the $k^{th}$-order interaction form between agents $i_1, \dots, i_k$. This decomposition of set-valued functions has been used in discrete choice modeling \cite{batsell1985new} and computational game theory \cite{deng1994complexity}. 
Note that Fact \ref{thm:v interaction form} implies that CGA is a \emph{complete representation}: a CGA model of order $n$ can model \emph{any} set function. Its downside is that it has $2^n$ parameters to be learned from data. We may elect to truncate higher order terms and use an order $k$ CGA model $\hat{v}$ to model $v$ instead:

\begin{defn}
A $k$th CGA model is parameterized by weight vector $\omega$, which includes a weight $\omega_{C}$ for all coalitions $C$ with $| C | \leq k$. The corresponding $v(C)$ is defined as in equation \ref{eq:v interaction form}.
\end{defn}

%This can be viewed as a low rank abstraction of the real game, with $\hat{v}$ being a lossy approximation of the true characteristic function $v$.

A key property of CGA models is that the Shapley Value may be computed from a simple weighted sum of the CGA parameters.

\begin{fact}
The Shapley Value of an individual $i$ with respect to players $A$ may be expressed as: 

$$\varphi_{i}(v) = \sum_{T \subseteq A \setminus \{i\}} \frac{1}{|T|  + 1} \omega_{T \cup \{i\}}$$
\end{fact}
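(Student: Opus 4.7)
The plan is to recognize Fact~\ref{thm:v interaction form} as expressing the characteristic function as a linear combination of \emph{unanimity games}, and then exploit the linearity axiom of the Shapley value. For each nonempty $S \subseteq A$, define the unanimity game $u_S$ by $u_S(C) = 1$ if $S \subseteq C$ and $0$ otherwise. Then Fact~\ref{thm:v interaction form} can be rewritten as $v = \sum_{S \subseteq A,\, S\neq\emptyset} \omega_S\, u_S$, since $v(C) = \sum_{S \subseteq C} \omega_S = \sum_S \omega_S\, u_S(C)$.

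First I would compute the Shapley value of a single unanimity game $u_S$. By the null player axiom, $\varphi_j(u_S) = 0$ whenever $j \notin S$, since every coalition not containing $S$ has the same value after adding $j$. By the symmetry axiom, all players in $S$ receive the same share, and by the efficiency axiom the shares sum to $u_S(A) = 1$. Hence $\varphi_j(u_S) = 1/|S|$ for each $j \in S$. (If one prefers not to invoke the axioms as a black box, this can be verified by direct combinatorial computation from the Shapley formula, since $u_S(C \cup \{j\}) - u_S(C) = 1$ iff $j \in S$ and $S \setminus \{j\} \subseteq C \subseteq A \setminus \{j\}$, and the resulting sum telescopes.)

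Next, I would invoke linearity of the Shapley value on the decomposition $v = \sum_S \omega_S\, u_S$, giving
\begin{equation*}
\varphi_i(v) = \sum_{S \subseteq A,\, S\neq\emptyset} \omega_S\, \varphi_i(u_S) = \sum_{S \ni i} \frac{\omega_S}{|S|}.
\end{equation*}
Reindexing by $T = S \setminus \{i\}$, so that $T \subseteq A \setminus \{i\}$ and $|S| = |T|+1$, yields the claimed identity
\begin{equation*}
\varphi_i(v) = \sum_{T \subseteq A \setminus \{i\}} \frac{1}{|T|+1}\, \omega_{T \cup \{i\}}.
\end{equation*}

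The only real content is the Shapley value of a unanimity game; everything else is bookkeeping. If one wished to avoid invoking the Shapley axioms, the alternative is a direct combinatorial proof: substitute $v(S \cup \{i\}) - v(S) = \sum_{R \subseteq S} \omega_{R \cup \{i\}}$ into the definition, swap the order of summation over $S$ and $R$, and evaluate the inner sum $\sum_{S \supseteq R,\, S \subseteq A\setminus\{i\}} \tfrac{|S|!(n-|S|-1)!}{n!}$. Using the Beta integral identity $\tfrac{s!(n-s-1)!}{n!} = \int_0^1 x^s(1-x)^{n-s-1}\, dx$ and the binomial theorem collapses this inner sum to $\tfrac{1}{|R|+1}$, recovering the same expression. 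The main potential obstacle is handling this combinatorial identity cleanly, which is precisely why routing through unanimity games and linearity is preferable.
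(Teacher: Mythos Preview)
Your proof is correct and follows essentially the same approach as the paper: both substitute the interaction-form decomposition into the Shapley value and then, rather than evaluating the resulting combinatorial sum directly, use symmetry (every player in $T$ receives the same coefficient of $\omega_T$) together with efficiency (those coefficients must sum to $\omega_T$) to conclude each player in $T$ gets $\omega_T/|T|$. Your explicit framing via unanimity games and the linearity axiom is the standard textbook packaging of this argument and is arguably cleaner, but the underlying content is identical to the paper's proof.
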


\subsection{Learning a CGA}
We learn the CGA model from samples of coalition values from $v$. Given hypothesis class be $\mathcal{H}$, we perform empirical risk minimization (ERM) with criterion: $\min_{\hat{v} \in \mathcal{H}} \sum_{(C, v(C)) \in \mathcal{D}_P} (\hat{v}(C) - v(C))^2$

%That is $d \in \mathcal{D}_P$ can be written as a tuple $(C, v(C))$. This format of data is what we would receive, for example, if we are trying to estimate the productivity of engineers that work together on joint projects; $C$ is the team assembled and $v(C)$ is some performance metric.

%The first thing to notice is that a $k^{th}$ order CGA has $d_k$ distinct terms and thus requires at least $d_k$ distinct samples. Thus, the sample complexity rises quickly with CGA order. Therefore, 

An important question that immediately follows is: when can a CGA model be identified from data? We define an exact identification notion as below:

\begin{defn}
Suppose that we have a set of hypotheses $\mathcal{H}$ from which we will choose $\hat{v}$ via minimization of the criterion above. Suppose the dataset $\mathcal{D}_P$ is actually generated via a true $v^* \in \mathcal{H}$, we say that $\mathcal{D}_P$ identifies $v^*$ if $v^*$ is the unique minimizer of the criterion.
\end{defn}

Identification is an important question for three reasons. We are interested in the parameters of the model since they will be used to (i) predict the performance of unseen teams (ii) compute the Shapley value (iii) understand complementarity and substitutability between team members. If there are multiple sets of parameters consistent with the data, then none of the inferences we perform to answer those questions (e.g the marginal contribution of players) will be well defined.

We now give some sufficiency and necessity conditions on $\mathcal{D}_P$ for $v^*$ to be identified exactly. These results generalize known sample complexity bounds from %results about second-order discrete choice models
~\cite{seshadri2019discovering}, expanding their bounds for only order $2$ to \emph{any} order $k$.
%but expand on them substantially under the same assumption by extending them to arbitrary orders beyond $2$. 
%We split the theorems into necessary and sufficient conditions:

\begin{theorem}[Sufficiency for Identification]
\label{thm: suff_id}
Suppose $\mathcal{H}$ includes all $k^{th}$ order CGAs and $v^*$ is a $k^{th}$ order CGA. If $\mathcal{D}_P$ include performances from all teams of at least $k$ different sizes $s_1, \dots, s_k \in [k, n-1]$, then $\mathcal{D}_P$ identifies $v^*.$
\end{theorem}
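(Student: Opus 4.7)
The plan is to translate identification into a polynomial vanishing problem. If $\hat v \in \mathcal{H}$ is any other CGA matching $v^*$ on the data, the weights $\delta_S := \hat\omega_S - \omega^*_S$ satisfy $\sum_{S \subseteq C,\, |S| \leq k} \delta_S = 0$ for every coalition $C$ with $|C| \in \{s_1,\ldots,s_k\}$, and it suffices to show $\delta \equiv 0$. I would encode $\delta$ as the multilinear polynomial $p(x) := \sum_{1 \leq |S| \leq k} \delta_S \prod_{i \in S} x_i$ in the Boolean quotient ring $\mathbb{R}[x_1,\ldots,x_n]/(x_i^2 - x_i)$, so $\deg p \leq k$, $p(\mathbf{0}) = 0$, and $p$ vanishes on every Boolean $x$ with $|x| \in \{s_1,\ldots,s_k\}$. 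Treating $p(\mathbf{0}) = 0$ as vanishing on the singleton slice $\{|x| = 0\}$, the polynomial vanishes on $k+1$ distinct Hamming slices $\{0, s_1, \ldots, s_k\} \subseteq [0, n-1]$, with the last $k$ lying in $[k, n-1]$.

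The heart of the proof is the following inductive lemma on $k$: a multilinear polynomial $q$ of degree $\leq k$ on $m \geq 2k$ variables, vanishing on $k+1$ distinct Hamming slices of which at least $k$ lie in $[1, m-1]$, is identically zero. The base case $k = 0$ is a constant vanishing at a point, and $k = 1$ follows by differences of coalitions that differ in one element (handling the three subcases where the two slices are both interior, or split between $\{0\}$ and interior, or between interior and $\{m\}$). A counter-example $x_1 - x_2$ with slices $\{0, 2\}$ shows the interiority assumption is needed.

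For the inductive step, fix distinct $a, b \in [m]$ and introduce the auxiliary polynomial $\tilde q_{a,b}$ on the $m-2$ variables $\{x_i : i \neq a, b\}$ with weights $\tilde\delta_T := \delta_{\{a\} \cup T} - \delta_{\{b\} \cup T}$; this is multilinear of degree $\leq k-1$. For any slice $t \in [1, m-1]$ on which $q$ vanishes, pick $C$ of size $t$ with $a \in C$, $b \notin C$ and let $C' := (C \setminus \{a\}) \cup \{b\}$; subtracting $q(\mathbf{1}_C) - q(\mathbf{1}_{C'}) = 0$ and pairing each subset $S \ni a$, $b \notin S$ with its swap $(S \setminus \{a\}) \cup \{b\}$ telescopes to $\sum_{T \subseteq C \setminus \{a\}} \tilde q_{a,b}(T) = 0$, which says $\tilde q_{a,b}$ vanishes on slice $t-1$ of the $(m-2)$-variable game. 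Applied to each of $q$'s $\geq k$ interior slices, $\tilde q_{a,b}$ inherits $\geq k$ slices, with at least $k-1$ of them interior: the hypothesis $s_j \geq k \geq 2$ rules out $s_j = 1$ (so nothing is pushed to the lower boundary), and at most one $s_j$ equals $m-1$ (so at most one slice is pushed to the upper boundary $m-2$). The inductive hypothesis therefore applies and forces $\tilde q_{a,b} \equiv 0$. Varying $a, b$, $\delta_U$ depends only on $|U|$; write $\delta_U = f(|U|)$ with $f(0) = 0$.

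Plugging $\delta_U = f(|U|)$ into $q(\mathbf{1}_C) = 0$ for $|C| = t$ gives $Q(t) := \sum_{u=0}^{k} f(u) \binom{t}{u} = 0$. Since each $\binom{t}{u}$ is a polynomial in $t$ of degree $u$, $Q$ has degree $\leq k$ in $t$; it has $k+1$ distinct roots (the slice sizes), so $Q \equiv 0$, forcing $f(u) = 0$ for every $u$ and therefore $\delta \equiv 0$. The main obstacle will be the bookkeeping on slice ranges at each inductive step — the assumption $s_j \in [k, n-1]$ is exactly what is needed to guarantee that enough slices remain interior after the shift $s_j \mapsto s_j - 1$, and without it the lemma fails (as the $k=1$ counter-example illustrates). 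The algebraic swap identity itself is a clean expansion but demands careful subset pairing to verify.
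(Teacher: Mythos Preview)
Your approach is correct in spirit and genuinely different from the paper's. The paper proves identifiability by showing the design matrix $M$ relating CGA weights to coalition values has full column rank: it block-decomposes $M$ by subset size and weight order, performs a row reduction to put zeroes below the antidiagonal, and then proves (via a somewhat intricate double induction on $k$ and $n$) that each antidiagonal block $M_{ntk}$ is full rank. Your route instead recasts the kernel condition as a multilinear polynomial of degree $\leq k$ vanishing on $k+1$ Hamming slices, reduces degree by a swap argument, and finishes with symmetry plus a univariate root count. Your argument is more conceptual and connects the result to polynomial vanishing on slices; the paper's argument is more hands-on about the matrix structure and yields the rank statement directly.

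There is, however, one genuine slip you should fix: the inductive lemma as you state it---``$k+1$ slices with at least $k$ in $[1,m-1]$''---is \emph{false}. Take $k=2$, $m=4$, slices $\{0,1,3\}$, and
\[
q(x)=x_1x_2+x_3x_4-\tfrac12\bigl(x_1x_3+x_1x_4+x_2x_3+x_2x_4\bigr);
\]
this vanishes on slices $0,1,3$ (two of which lie in $[1,3]$) but not on slice $2$. The hypothesis you actually use in the inductive step, and correctly flag as essential in your final paragraph, is $s_j\in[k,m-1]$, not merely $[1,m-1]$. If you restate the lemma with ``at least $k$ slices in $[k,m-1]$'', the induction closes: after the swap the new slices lie in $[k-1,m-2]$, at most one hits the upper boundary $m-2$, and none hits the lower boundary since $k-1\geq 1$; so $\tilde q_{a,b}$ inherits $k$ slices with $\geq k-1$ in $[k-1,(m-2)-1]$, matching the level-$(k-1)$ hypothesis. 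The rest of your plan---the swap identity, Johnson-graph connectivity giving $\delta_U=f(|U|)$, and the $\binom{t}{u}$-basis argument for $Q(t)$---is sound.
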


\begin{theorem}[Necessity for Identification]
Suppose $\mathcal{H}$ includes all $k^{th}$ order CGAs and $v^*$ is a $k^{th}$ order CGA. If $\mathcal{D}_P$ contains performances of teams of only $m < k$ different sizes, then $\mathcal{D}_P$ does not always identify $v^*.$
\end{theorem}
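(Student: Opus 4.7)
The plan is to establish non-identification by explicitly constructing two distinct $k$th-order CGAs that agree on every coalition whose size lies in the given set $\{s_1, \ldots, s_m\}$. Denote the weight difference by $\Delta_S := \omega_S - \omega^*_S$ for coalitions $S$ with $1 \leq |S| \leq k$. Since the truncated M\"obius transform on subsets of size at most $k$ is invertible, distinct weight vectors correspond to distinct characteristic functions, so it suffices to exhibit a nonzero $\Delta$ such that $\sum_{\emptyset \neq S \subseteq C,\, |S| \leq k} \Delta_S = 0$ for every $C$ with $|C| \in \{s_1, \ldots, s_m\}$.

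My main idea is to restrict attention to a symmetric ansatz: take $\Delta_S = \alpha_{|S|}$ depending only on the size of $S$, with scalar unknowns $\alpha_1, \ldots, \alpha_k$. Because a team of size $s$ contains exactly $\binom{s}{j}$ subsets of size $j$, the indistinguishability condition on coalitions of size $s$ collapses to $\sum_{j=1}^{\min(k,s)} \binom{s}{j}\, \alpha_j = 0$. Assembling these for $s = s_1, \ldots, s_m$ yields a homogeneous linear system of $m$ equations in $k$ unknowns. Since $m < k$, the $m \times k$ coefficient matrix cannot have full column rank, so the system admits a nonzero solution $\alpha^* \neq 0$.

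With such an $\alpha^*$ in hand, I would set $v^* \equiv 0$ (all weights zero) and let $v'$ be the $k$th-order CGA with weights $\omega'_S = \alpha^*_{|S|}$. Both lie in $\mathcal{H}$, both achieve zero ERM loss on any dataset $\mathcal{D}_P$ drawn exclusively from teams of sizes in $\{s_1, \ldots, s_m\}$, and they represent distinct characteristic functions because $\alpha^* \neq 0$ makes the weight vectors unequal. Hence $\mathcal{D}_P$ does not identify $v^*$.

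The one step that deserves a moment of care, and is the main (mild) obstacle, is confirming that a nonzero weight difference truly induces a different characteristic function. This is the statement that the linear map from $\{\omega_S : 1 \leq |S| \leq k\}$ to the corresponding values $\{v(C) : 1 \leq |C| \leq k\}$ is an invertible (triangular) bijection, i.e., that the M\"obius/inclusion-exclusion inversion still works when restricted to subsets of size at most $k$; this is standard and makes the symmetric construction above suffice to witness non-identification.
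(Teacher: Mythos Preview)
Your proof is correct and in fact establishes something stronger than the paper does. The paper's argument is a bare dimension count in the special case $k=2$, $m=1$: taking all teams of size $n-1$ yields $\binom{n}{n-1}=n$ rows, which is fewer than the $\binom{n}{1}+\binom{n}{2}$ columns, so the linear system $M\vec{w}=\vec{v}$ is underdetermined. That suffices to exhibit \emph{one} instance of non-identification, which is all the theorem (``does not always identify'') literally requires, but it leaves the general-$k$ case implicit.

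Your symmetric ansatz $\Delta_S=\alpha_{|S|}$ is a genuinely different and cleaner route: it works uniformly for every $k$ and, more strongly, for \emph{every} choice of $m<k$ team sizes $\{s_1,\ldots,s_m\}$, not just one hand-picked size. The reduction to an $m\times k$ homogeneous system $\sum_{j=1}^{\min(k,s_i)}\binom{s_i}{j}\alpha_j=0$ is exactly right, and your final check---that a nonzero $\alpha^*$ yields a nonzero characteristic function because the $k\times k$ matrix $\bigl(\binom{s}{j}\bigr)_{1\le s,j\le k}$ is unit lower-triangular and hence invertible---cleanly disposes of the only subtlety. So your approach buys generality and an explicit witness of the ambiguity; the paper's approach buys brevity at the cost of treating only a single illustrative case.
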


% We relegate the full proofs to the Appendix, as they are quite involved. To provide some intuition, in the sufficiency result, the argument uses induction to exploit structure in the matrix to arrive at the conditions under which its null space is empty, which implies that the matrix is full rank and the CGA is identifiable. In the complementary necessity result, we offer counterexamples that show even with all subsets of $m < k$ different sizes, the matrix corresponding to the system of linear equations may not be full rank, thus making the CGA model non-identifiable from data.

These results show that if the order $k = O(1)$, identification is possible with $\text{poly}(n)$ samples and that if $k = O(n)$, the number of samples becomes exponential in $n$. Therefore, we suggest that practitioners should focus on the lowest order CGAs that they believe are suitable. For us, we find that low rank, second order CGAs demonstrate good performance in our experiments.

One consideration is that these bounds may be too pessimistic in requiring \emph{exact} recovery of the true $v$. In the appendix, we provide sample bounds for identification of CGA under a PAC/PMAC \cite{balcan2011learning} framework (Proposition 1). In particular, we have that under the looser, PMAC approximation notion, only $O(n)$ instead of $O(d_k)$ samples are needed for approximate estimation of most coalition values.

\section{Approximate Shapley Values}

With our approximation of the CF $\hat{v}$ in hand, we examine the fidelity of the SV computed from $\hat{v}$. We denote the approximated SV of player $i$ as $\varphi_{i}(\hat{v})$ and the real SV $\varphi_{i}(v)$. As is typical in sensitivity analysis, we derive bounds relating the error in $v$ to the error in the Shapley value.

These bounds may be of independent interest since often in ML applications $v$ is stochastic. For instance, SV is widely used in interpretability literature \cite{cohen2007feature, datta2015influence, lundberg2017unified,chen2018shapley,ghorbani2019data},
where $v$ is taken to be the model performance. The model performance is typically stochastic, since it is a function of the random samples of data used to train the model and the randomness in the optimization, which can converge to differing local optima due to the nonconvexity of the losses e.g of deep models. 

Let $\varphi(v)$ be the vector of Shapley Values. We start with a worst-case error bound for $\ell_2$ when the adversary can choose how to distribute a fixed amount of error into $v$ to construct $\hat{v}$.

%\ref{app: noise_bound_theorems}
%We will consider a several bounds based on different models of error. The proofs are available in the Appendix.

\begin{theorem}
\label{thm: l2_worst_case}
The $\ell_2$ norm of the estimation error of the Shapley Values is bounded by:

 \begin{equation}
  \| \varphi(v) -  \varphi(\hat{v}) \|_2^2 \leq \frac{2}{n} \| v - \hat{v} \|_2^2
 \end{equation}
\end{theorem}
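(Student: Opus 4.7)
The plan is to exploit linearity and the $S_n$-symmetry of the Shapley map. Setting $\Delta = v - \hat{v}$, linearity of $\varphi$ in $v$ gives $\varphi(v) - \varphi(\hat{v}) = \varphi(\Delta) = \Phi\Delta$ for a fixed matrix $\Phi \in \mathbb{R}^{n \times 2^n}$ whose $(i,C)$-entry is $\tfrac{(|C|-1)!(n-|C|)!}{n!}$ when $i \in C$ and $-\tfrac{|C|!(n-|C|-1)!}{n!}$ when $i \notin C$. The theorem reduces to showing the operator-norm bound $\|\Phi\|_{\mathrm{op}}^2 = \lambda_{\max}(\Phi\Phi^{\top}) \leq \tfrac{2}{n}$.

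First, by the permutation symmetry of the Shapley value across players, the $n \times n$ Gram matrix $\Phi\Phi^{\top}$ must have constant diagonal $\alpha$ and constant off-diagonal $\beta$, so $\Phi\Phi^{\top} = \alpha I + \beta(J - I)$, whose spectrum is $\{\alpha + (n-1)\beta,\; \alpha - \beta\}$. Second, the efficiency axiom $\sum_i \varphi_i(\Delta) = \Delta(A) - \Delta(\emptyset)$ translates to $\Phi^{\top}\mathbf{1} = e_A - e_{\emptyset}$; applying $\Phi$ and using $\Phi_{iA} = \tfrac{1}{n}$ and $\Phi_{i\emptyset} = -\tfrac{1}{n}$ for every $i$, one gets $\Phi\Phi^{\top}\mathbf{1} = \tfrac{2}{n}\mathbf{1}$, pinning down $\alpha + (n-1)\beta = \tfrac{2}{n}$.

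The main obstacle is then the other eigenvalue $\alpha - \beta$. Using the linear relation just derived, the inequality $\alpha - \beta \leq \tfrac{2}{n}$ is algebraically equivalent to the row-norm bound $\alpha \leq \tfrac{2}{n}$. I would establish this by grouping $\alpha = \sum_C \Phi_{1C}^2$ according to whether $1 \in C$ and by $|C|$; after the reindexing $t = |C| - 1$ in the first piece, the two contributions coincide and the sum collapses to
\[
\alpha \;=\; \frac{2}{n^2}\sum_{t=0}^{n-1}\frac{1}{\binom{n-1}{t}}.
\]
Since each summand is at most $1$ and there are exactly $n$ of them, $\alpha \leq \tfrac{2}{n}$, and combined with $\alpha + (n-1)\beta = \tfrac{2}{n}$ this gives $\lambda_{\max}(\Phi\Phi^{\top}) = \tfrac{2}{n}$, which is the claimed bound (and tight, attained on any multiple of $e_A - e_\emptyset$).

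A more elementary attempt via Jensen/Cauchy--Schwarz applied inside the Shapley formula (using $\sum_{S \not\ni i} w_S = 1$ for $w_S = \tfrac{|S|!(n-|S|-1)!}{n!}$, followed by $(a-b)^2 \leq 2(a^2 + b^2)$) leaves per-coalition coefficients of order $4/\binom{n}{|C|}$; these exceed $2/n$ for coalitions of size $1$ or $n-1$, so without further cancellation this route cannot reach the sharp constant, and I would not pursue it.
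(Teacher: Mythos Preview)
Your proposal is correct and follows the same high-level route as the paper: both view the Shapley value as a linear map $\Phi$ (the paper writes $S_n$), bound $\|\varphi(v)-\varphi(\hat v)\|_2$ by $\|\Phi\|_{\mathrm{op}}\|v-\hat v\|_2$, and compute the operator norm by analyzing the $n\times n$ Gram matrix $\Phi\Phi^\top$, which by permutation symmetry has constant diagonal and constant off-diagonal (so the spectrum is $\{\alpha+(n-1)\beta,\ \alpha-\beta\}$).

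The difference is in how the eigenvalues are evaluated. The paper writes out $d_1=\alpha$ and $d_2=\beta$ as explicit combinatorial sums and then performs a page of algebra to show that $d_1+(n-1)d_2=\tfrac{2}{n}$; it then asserts this is the top eigenvalue without separately verifying $d_1-d_2\le \tfrac{2}{n}$. Your argument is tighter on both counts: you obtain $\alpha+(n-1)\beta=\tfrac{2}{n}$ in one line from the efficiency identity $\Phi^\top\mathbf 1=e_A-e_\emptyset$ (so $\Phi\Phi^\top\mathbf 1=\tfrac{2}{n}\mathbf 1$), and you close the remaining eigenvalue by reducing $\alpha-\beta\le\tfrac{2}{n}$ to the row-norm bound $\alpha\le\tfrac{2}{n}$, which follows immediately from $\alpha=\tfrac{2}{n^2}\sum_{t=0}^{n-1}\binom{n-1}{t}^{-1}$ and the trivial bound $\binom{n-1}{t}^{-1}\le 1$. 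This buys you a much shorter proof and, as a byproduct, an explicit check that $\lambda_{\max}=\tfrac{2}{n}$ really is the maximum over \emph{both} eigenvalues. Your remark that Jensen/Cauchy--Schwarz applied termwise cannot reach the sharp constant is also accurate.
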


Though this result is \emph{tight}, it assumes a non-smooth, adversarial distribution that places infinite density on the eigenvector corresponding to the largest singular value of the SV operator. Below, we consider average case bounds assuming that the error is of fixed norm and drawn from a \textit{smooth} distribution; this type of assumption is often used in smooth analysis \cite{gupta2017pac}.

\begin{theorem}
\label{thm: l2_average_case}
Assuming that $v - \hat{v}$ is drawn from distribution $\mathcal{D}_{B_r}$ with support equal to a sphere and smooth in that $\kappa_0 \leq \Pr_{\mathcal{D}_{B_r}}(x) \leq \kappa_1$ for any point $x$ in its support, then:

 \begin{equation}
  \mathbb{E}_{v - \hat{v} \sim \mathcal{D}_{B_r}}[\| \varphi(v) -  \varphi(\hat{v}) \|_2^2] \leq \frac{6}{n} \frac{\kappa_1}{\kappa_0} \frac{\| v - \hat{v} \|_2^2}{2^n}
 \end{equation}

\end{theorem}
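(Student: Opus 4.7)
The plan is to exploit linearity of the Shapley operator and reduce the expectation to a quadratic form on a sphere. Writing $\Phi \in \mathbb{R}^{n \times 2^n}$ as the matrix with rows representing $\varphi_i(\cdot)$, so that $\Phi_{i,C} = \frac{(|C|-1)!(n-|C|)!}{n!}$ when $i \in C$ and $-\frac{|C|!(n-|C|-1)!}{n!}$ otherwise, and setting $\Delta \defeq v - \hat{v}$ and $M \defeq \Phi^{\top}\Phi$, linearity of $\varphi$ gives
\begin{equation*}
\|\varphi(v) - \varphi(\hat{v})\|_2^2 = \Delta^{\top} M \Delta.
\end{equation*}
So the problem reduces to bounding $\mathbb{E}_{\mathcal{D}_{B_r}}[\Delta^{\top} M \Delta]$, where $\Delta$ lies on a sphere $S^{d-1}_r$ of radius $r = \|v-\hat{v}\|_2$ inside $\mathbb{R}^{2^n}$.

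Second, I would strip the smooth distribution down to the uniform measure $u$ on that sphere. Because the density $p$ of $\mathcal{D}_{B_r}$ integrates to $1$ and satisfies $p \geq \kappa_0$, the surface volume obeys $\mathrm{vol}(S^{d-1}_r) \leq 1/\kappa_0$; combined with $p \leq \kappa_1$ this immediately gives $\mathbb{E}_p[g] \leq (\kappa_1/\kappa_0)\,\mathbb{E}_u[g]$ for any nonnegative integrand, and in particular for $g(\Delta) = \Delta^{\top} M \Delta \geq 0$. For the uniform measure, rotational symmetry forces $\mathbb{E}_u[\Delta\Delta^{\top}] = (r^2/2^n)\,I$, so
\begin{equation*}
\mathbb{E}_u[\Delta^{\top} M \Delta] = \frac{r^2}{2^n}\,\mathrm{Tr}(M),
\end{equation*}
which produces exactly the $r^2/2^n$ scaling in the statement.

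All that is left is the Frobenius-norm bound $\mathrm{Tr}(M) = \|\Phi\|_F^2 \leq 6/n$, and this is the main obstacle. Fixing player $i$ and summing $\Phi_{i,C}^2$ over all $C$, the coalitions with $i \in C$ of size $k$ contribute $\binom{n-1}{k-1}\bigl((k-1)!(n-k)!/n!\bigr)^2$ and those with $i \notin C$ of size $k$ contribute $\binom{n-1}{k}\bigl(k!(n-k-1)!/n!\bigr)^2$. A reindexing $k \mapsto k-1$ shows the two sums are identical, and after using $\binom{n-1}{k}\,k!(n-k-1)! = (n-1)!$ each simplifies to
\begin{equation*}
\|\Phi_i\|_2^2 \;=\; \frac{2}{n^2}\sum_{k=0}^{n-1}\binom{n-1}{k}^{-1}.
\end{equation*}
The reciprocal-binomial sum $\sum_{k=0}^{m}\binom{m}{k}^{-1}$ is uniformly bounded by $3$ for every $m \geq 0$ (it peaks at $8/3$ around $m=3$ and tends to $2$ as $m \to \infty$, an elementary check by separating the two endpoint terms from the $O(1/m)$ interior terms), so $\|\Phi_i\|_2^2 \leq 6/n^2$ and $\mathrm{Tr}(M) = \sum_i \|\Phi_i\|_2^2 \leq 6/n$. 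Combining the three steps yields the stated bound. The trickiest piece is the combinatorial cancellation and the reciprocal-binomial estimate; the rest is clean linear algebra plus the density-ratio argument.
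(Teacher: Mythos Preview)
Your proof is correct and follows the same overall skeleton as the paper's: write the Shapley map as a matrix $\Phi$, reduce the expectation to a quadratic form $\Delta^\top M \Delta$ with $M=\Phi^\top\Phi$, use the smoothness bounds to pass to the symmetric case on the sphere, and finish by bounding $\Tr(M)=n\|\Phi_i\|_2^2\le 6/n$.

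The execution differs in two places, and in both your route is somewhat cleaner. For the $\kappa_1/\kappa_0$ reduction, the paper expands $\Delta$ in the eigenbasis of $M$, bounds the largest coordinate variance $\mathbb{E}[\alpha_{k^*}^2]$ by $(\kappa_1/\kappa_0)\,r^2/2^n$ via a symmetry swap, and then sums $\lambda_j\,\mathbb{E}[\alpha_j^2]\le \mathbb{E}[\alpha_{k^*}^2]\sum_j\lambda_j$; you bypass the eigendecomposition entirely with the density-ratio bound $\mathbb{E}_p[g]\le(\kappa_1/\kappa_0)\mathbb{E}_u[g]$ and the exact identity $\mathbb{E}_u[\Delta\Delta^\top]=(r^2/2^n)I$. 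For the trace bound, the paper rewrites $d_1$ through a telescoping identity before bounding interior terms by $\binom{n-2}{1}^{-1}$, whereas you go directly with the uniform estimate $\sum_{k=0}^{m}\binom{m}{k}^{-1}\le 3$. Both land on the same constant $6/n$; the paper's route makes the intermediate structure of $S_nS_n^\top$ more explicit (which it also needs for the worst-case Theorem~\ref{thm: l2_worst_case}), while yours is more self-contained for this theorem alone.
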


We can generalize these results to any noise distribution thus:

\begin{corr}
\label{corr: l2_average}
Suppose noise $v - \hat{v} \sim \mathcal{D}_n$ is such that its conditional distribution satisfies $\kappa_0(r) \leq \Pr_{\mathcal{D}_n}(x | \| x \|_2^2 = r^2) \leq \kappa_1(r)$ for all $r$ and $x$ in $\mathcal{D}_n$'s support, then:

$$\mathbb{E}_{v - \hat{v} \sim \mathcal{D}_n}[\| \varphi(v) -  \varphi(\hat{v}) \|_2^2] \leq \frac{6}{n} \mathbb{E}_r \left[ \frac{\kappa_1(r)}{\kappa_0(r)} \left ( \frac{r^2}{2^n} \right) \right]$$
\end{corr}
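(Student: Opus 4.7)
The plan is to reduce Corollary~\ref{corr: l2_average} to Theorem~\ref{thm: l2_average_case} by conditioning on the norm of the noise. Concretely, I would let $R \defeq \|v - \hat v\|_2$ with marginal distribution $\mathcal{D}_R$, and then invoke the tower property:
\begin{equation*}
\mathbb{E}_{v - \hat v \sim \mathcal{D}_n}\!\big[\|\varphi(v) - \varphi(\hat v)\|_2^2\big] \;=\; \mathbb{E}_{r \sim \mathcal{D}_R}\!\Big[\mathbb{E}\big[\|\varphi(v) - \varphi(\hat v)\|_2^2 \,\big|\, R = r\big]\Big].
\end{equation*}

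Next, for each fixed $r$ the conditional distribution of $v - \hat v$ given $R = r$ is supported on the sphere of radius $r$, and by the corollary's hypothesis its conditional density lies between $\kappa_0(r)$ and $\kappa_1(r)$. Hence it satisfies \emph{exactly} the hypotheses of Theorem~\ref{thm: l2_average_case} with parameters $(\kappa_0(r), \kappa_1(r))$ and with the fibrewise squared norm equal to $r^2$. Applying Theorem~\ref{thm: l2_average_case} on each sphere then gives
\begin{equation*}
\mathbb{E}\big[\|\varphi(v) - \varphi(\hat v)\|_2^2 \,\big|\, R = r\big] \;\leq\; \frac{6}{n}\,\frac{\kappa_1(r)}{\kappa_0(r)}\,\frac{r^2}{2^n}.
\end{equation*}
Substituting this fibrewise bound into the tower-property identity and pulling the constant $6/n$ outside the outer expectation yields the stated bound.

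The main obstacle I anticipate is not algebraic but measure-theoretic: I need the disintegration of $\mathcal{D}_n$ along spheres (the conditional distributions $\mathcal{D}_n \mid R = r$) to be well-defined, so that the conditional density $\Pr_{\mathcal{D}_n}(x \mid \|x\|_2^2 = r^2)$ appearing in the hypothesis is meaningful and Theorem~\ref{thm: l2_average_case} can be invoked on each fibre. Since the corollary's hypothesis is already phrased in terms of this conditional density, that assumption is effectively delegated to the statement, and no further technical work beyond the conditioning argument above should be required.
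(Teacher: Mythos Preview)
Your proposal is correct and matches the paper's own proof essentially line for line: the paper also writes the expectation via iterated expectation over $r$, applies Theorem~\ref{thm: l2_average_case} to the inner conditional expectation on each sphere, and then pulls out the constant $6/n$. Your remark on the measure-theoretic disintegration is a nice bit of care that the paper simply leaves implicit.
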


Intuitively, this means that if the error $v - \hat{v}$ is on average spread out in that $\mathcal{D}_n$ is fairly smooth in expectation across concentric spheres in its support, then the $\ell_2$ error of the Shapley value is small on average. Indeed, an astute reader may worry that only a $\frac{2}{n}$ reduction in the \textit{aggregate} approximation error $\| v - \hat{v} \|_2^2$ is not large enough since $v - \hat{v} \in \mathbb{R}^{2^n}$. Theorem~\ref{thm: l2_average_case} and Corollary~\ref{corr: l2_average} 
show that the SV actually induces a $\frac{6}{n} \frac{\kappa_1}{\kappa_0}$ scaling of the \textit{average} approximation error. 

We also obtain analogous worst and average-case $\ell_1$ bounds with scaling factors on the same order. Due to space constraints, please see Theorem 5 and 6 in the appendix for the results.

% \begin{theorem}
% The $\ell_1$ norm of the estimation error of the SV is bounded by:

% \begin{equation}
%   \| \varphi(v) -  \varphi(\hat{v}) \|_1 \leq \| v - \hat{v} \|_1
%  \end{equation}

% Assuming there is no error in estimating the grand coalition nor the empty set and $n \geq 3$, then we can give a stronger bound on the sum of absolute errors:

% \begin{equation}
%   \| \varphi(v) -  \varphi(\hat{v}) \|_1 \leq \frac{2}{n} \| v - \hat{v} \|_1
%  \end{equation}

% Furthermore, assume players are divided into m equal sized teams, $G_1, ..., G_m$, where $|G_i| = n / m$. 
% Then if we compute their Shapley Values just with respect to their own teams, then:

% \begin{equation}
%  \| \varphi(v) -  \varphi(\hat{v}) \|_1 \leq \frac{2m}{n} \| v - \hat{v} \|_1
%  \end{equation}
% \end{theorem}

% \begin{theorem}%[Average case Shapley noise L1 bound]
% \label{thm: average_l1}

% Assuming that the error $v - \hat{v}$ is such that vector $| v - \hat{v}| / r$ (where absolute value here is coordinate wise) is drawn from distribution $\mathcal{D}_{S_r}$ with support equal to the surface of a $2^n$-simplex and smooth in that $\kappa_0 \leq \Pr_{\mathcal{D}_{S_r}}(x) \leq \kappa_1$ for any point $x$ in its support, then:

%  \begin{equation}
%   \mathbb{E}_{v - \hat{v} \sim \mathcal{D}_{S_r}}[\| \varphi(v) -  \varphi(\hat{v}) \|_1] \leq  2 \frac{\kappa_1}{\kappa_0} \frac{\| v - \hat{v} \|_1}{2^n}
%  \end{equation}

% \end{theorem}

Lastly, we note that these bounds are general. In the appendix, we obtain a simple derivation of the CGA-specific bias, which can be plugged into these bounds for the SV bias. Note that bias in the estimation of the CF only arises due to model misspecification, i.e if order $k$ is used to model a game of order $r$ for $r > k$. This description covers \emph{all cases} as any CF of a game necessarily corresponds to a CGA model of a certain order (Fact \ref{thm:v interaction form}) and estimation error only arises due to a smaller order being specified. Certainly, we note that more refined bounds are a natural future extension to this work.

\section{Experiments}

\subsection{Virtual Teams}
We generate team performance data from the OpenAI particle environment \cite{lowe2017multi} \footnote{\url{https://github.com/openai/multiagent-particle-envs}}. 
The task in this environment is team-based and requires cooperation:
%(see Figure \ref{coop_nav} for a picture): 
$3$ agents are placed in a map and $3$ landmarks are marked, and agents have a limited amount of time to reach the landmarks and are scored according to the minimum distance of any agent to any landmark. In addition, negative rewards are incurred for colliding with other agents. Thus, a team which can cooperate well is able to assign a single landmark per agent in real time and spread out to cover them without colliding with each other.

%Included in the GitHub repo \url{https://github.com/openai/multiagent-particle-envs} is a set of continuous control multi-agent tasks as well as code for training agents according to the reinforcement learning algorithm introduced by \citet{lowe2017multi}. 

\begin{figure}[ht]

\begin{minipage}{.5\columnwidth}
  \centering
  \includegraphics[width=\columnwidth]{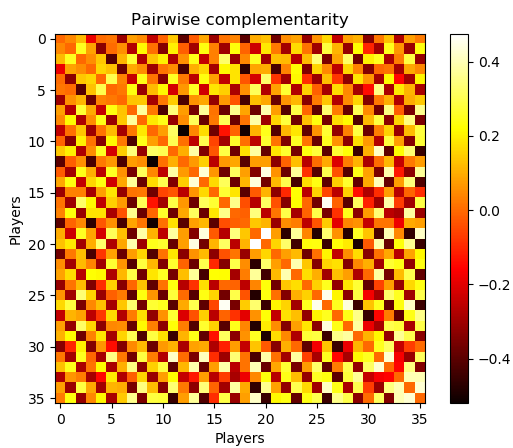}
\end{minipage}
\begin{minipage}{.5\columnwidth}
  \centering
  \includegraphics[width=\columnwidth]{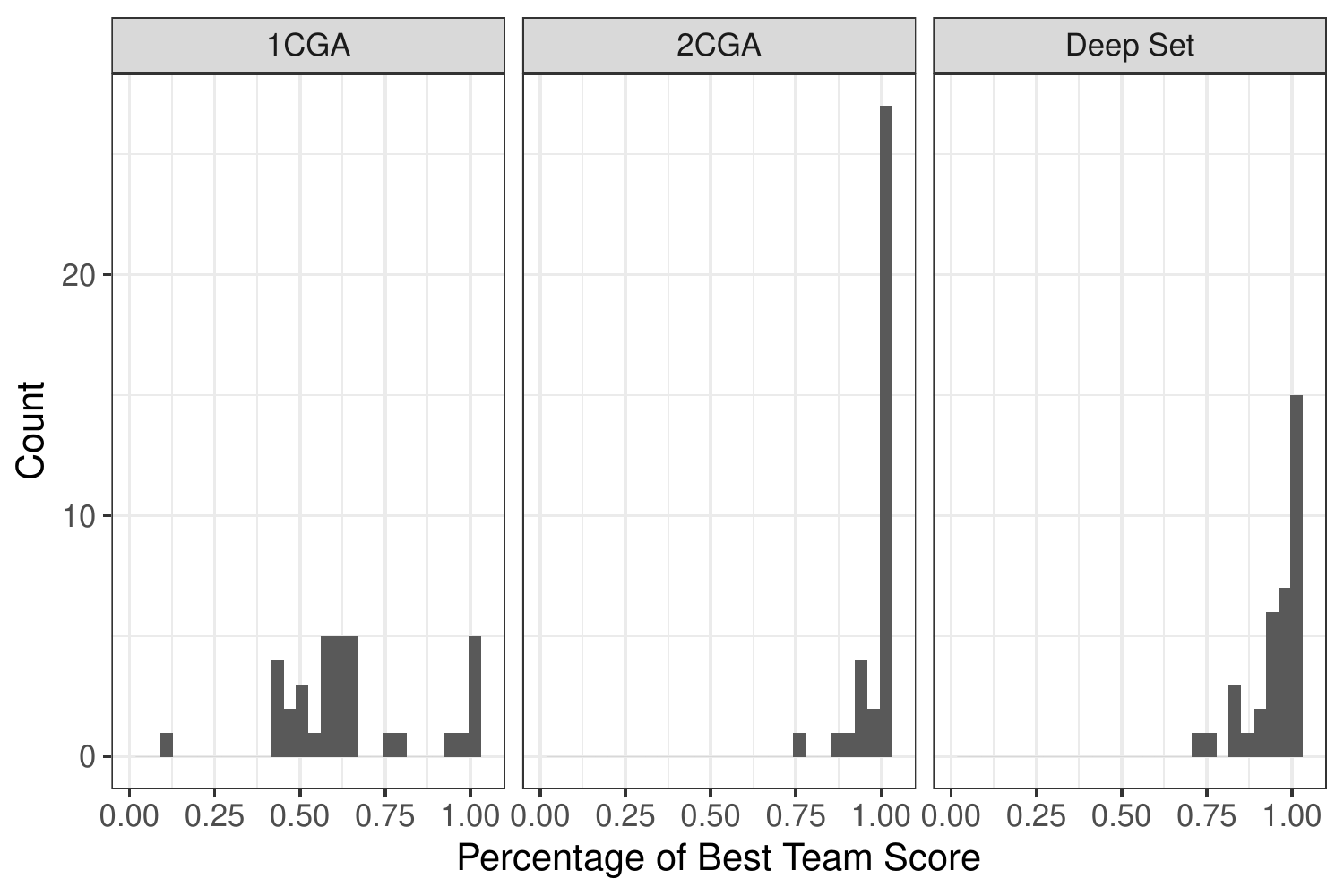}
\end{minipage}

\caption{Left: Interaction matrix from second order CGA model. Players are clustered by original training team ($\lbrace 0,1,2 \rbrace$ trained together as did $\lbrace 3,4,5 \rbrace$, etc...). We see complex patterns of complementarity and substitutability as well as a clear replication of the well known fact that agents that train together can coordinate much better than agents which are trained separately - this can be seen in the figure by the strong complementarity in the diagonal blocks of size $3$ compared to other $3 \times 3$ off-diagonal blocks.  Right: Histograms of ratios of the score attained by the completed team chosen by the model normalized by the score of the actual best team containing a given initial agent.}
\label{coop_nav}
\end{figure}

We train $12$ teams of agents ($36$ agents total) using the default algorithm and parameters from the OpenAI GitHub repo. We then evaluate all $\binom{36}{3}$ $= 7140$ mixed teams of these agents evaluated over 100000 episodes. The train/validation/test split is 50/10/40. We fit a baseline first order (where team = sum of members) CGA and a second order CGA to predicting the final score of each team. We also compare to a more general, state of the art model for learning set functions, DeepSet~\cite{zaheer2017deep}, which is designed purely for prediction. 

%\footnote{The Deep Set model works as follows: for every agent $i$ we learn a vector $\text{vec}(i)$ and a global composition function $\phi$ to minimize $(v(C) - \phi (\sum_{i \in C} \text{vec}(i)))^2.$ If the vectors are high enough dimensional and the set of possible $\phi$ is general enough, then Deep Set can approximate any set valued function. We tried multiple settings for the model and found best results with parameterizing $\phi$ via a neural network with one hidden layer.} 

Since DeepSet contains more parameters than the CGA model, we expect it to fit data better. However, unlike the CGA model, Deepset is (i) less easily identified due to the larger sample complexity needed (ii) \textit{not readily interpretable} due to the non-linearity of $\phi$ (iii) and importantly, one cannot readily compute or estimate the Shapely. To compute the Shapley values exactly, one would have to first compute $v(C)$ for each coalition $C$, thus requiring $2^n$ feed-foward passes through the network. Even to approximate the Shapley value, it is known that $O(n\log n)$ evaluations of the model (network) are needed \cite{jia2019towards}. In contrast, to compute the Shapley with CGA, only one weighted sum of the CGA model parameters is needed and thus takes $O(1)$ number of evaluation.

\textbf{Prediction}: The first order CGA model achieves an test set MSE of of $.79$, the second order model achieves an order of magnitude smaller at $.07$. These results show that in this environment teams are not just sums of their parts. The DeepSet model achieves an MSE of $.042$, showing that we give up some predictive accuracy (but not that much) from using the simpler $2^{nd}$ order CGA. We emphasize that the goal of this experiment is \emph{not} to find the most predictive model. Rather, it is to show that the much smaller, second order CGA model is roughly comparable to Deepset, all the while conferring the advantages of: 1. being interpretable 2. allowing easy computation of the Shapley Value.

\textbf{Interpretability:} To the first point, we visualize the learned matrix $\widehat{V}$ of the second order CGA in a heatmap (Figure \ref{coop_nav}) that allows us to discern players that complement/substitute each other.

\textbf{Best Team Formation:} For each of the 36 agents we have trained, we ask: what is the best set of $2$ agents to add to them to make a team? More generally, this problem of optimal player addition is one often faced by real world sports teams, as they choose new players to draft or sign so as to further bolster their team performance. In this virtual setting, we can evaluate all possible additions to the team so as to gauge the predictive performance of our models.

In our setup, we restrict only to possible teammates which the original agent was not trained with. Figure \ref{coop_nav} shows the histogram of ratios of the score attained by the completed team, which was selected by the model, normalized by the score of the actual best team. While the first-order CGA fails to construct good teams (since it does not consider any complementarities), the second order CGA and DeepSet model achieve more than $\sim 95 \%$ of the possible value. Thus, the complementarity patterns learned via the 2nd order CGA are, in fact, important for this task. We also note that the second order CGA model outperforms DeepSet on this task.

\subsection{Real World Sports Teams}
We now consider a more complex, real world problem: predicting team performance in the NBA. We collect the last $6$ seasons of NBA games (a total of 7380 games) from Kaggle along with  the publicly available box scores
\footnote{ \url{https://www.kaggle.com/drgilermo/nba-players-stats}}. Unlike in the dataset above, we do not observe absolute team performance, rather we only observe relative performance (who wins). We model matchup outcomes using the Bradley–Terry model. In particular, given the team strengths, the probability of team $i$ winning in a match against team $j$ as: 

$$\Pr[w = 1 \mid \hat{v}, C_i, C_j] = \dfrac{\text{exp} (\hat{v} (C_i))}{\text{exp}(\hat{v} (C_i)) + \text{exp}(\hat{v} (C_j))}$$

This gives us a well defined negative log likelihood (NLL) criterion of the data $\mathcal{D}$, which we optimize with respect to $\hat{v}$. 
%Intuitively, $\hat{v}(C_1)$ can be thought of as a scalar proportional to the likelihood of team 1 winning. E.g if team 1 is strong and beats team 2 often, then we should learn that $\hat{v}(C_1) >> \hat{v}(C_2)$. 
We set each team in each game to be represented by its starting lineup ($5$ individuals). Then we learn $\hat{v}$ such that it minimizes the negative log likelihood using standard batch SGD with learning rate $0.001$. Because basketball teams are of a fixed size (only one set of sizes), we use L2 regularization to choose one among the many possible set of models parameters.

As with the RL experiment above we compare a first order CGA, a second order CGA, and a DeepSet model. We split the dataset randomly into 80 percent training, 10 percent validation, and 10 percent test subsets. We set hyperparameters by optimizing the loss on the validation set. %See Appendix for full sweep of rank and regularization. %In our experiments, we also consider a low rank approximations of $\widehat{V}$.

\subsubsection{Results}
\textbf{Prediction:} How well does the CGA perform in this task? We begin by studying an imperfect metric: out-of-sample predictive performance. First, we see that the NBA performance can be fit fairly well with just a first order CGA - that is, we can think of most teams roughly as the sum of their parts. The first order CGA yields an out of sample mean negative log likelhood of $-.631$ which is slightly improved to $-.627$ under the second order CGA. We do also experiment with a third order CGA which did not improve over the second order CGA performance. This suggests that the second order is an apt choice for the abstraction. Finally, we observe that the DeepSet model is not able to outperform the CGA yielding an out of sample mean NLL of $-.63.$

Overall, we find that predictive accuracy is low, at only about $\sim 65 \%$, as a result of the league being very competitive and teams being fairly evenly matched. Thus, predictive accuracy does not tell the whole story and is not the focus of the experiment. Note that the data at hand is observational and while players do move across teams and starting lineups change due to factors such as injuries, time in the season, etc... who plays with whom is highly correlated across years and starting lineups are endogenous (for example, a coach may not start one of their best players when playing a much weaker team to avoid risking injury). Thus, we cannot evaluate counterfactual teams. Instead, we supplement our the predictive analysis with analyses of the competing models to see if they are truly able to extract insights from the data consistent with NBA analytics experts.

\textbf{Unseen teams:} We consider teams the model has not seen: NBA All Star teams. During each season, fans and professional analysts vote to select `superstar' teams of players that then play each other in an  exhibition game, which is not included in our training data. We collect every All Star team from the time period spanning our training set and compare our second-order CGA model scores given to All Star teams with those of $1000$ randomly generated, `average' teams.

Recall that in the matchup datasets, the difference in scores between two teams is reflective of the probability that one team will win in a matchup. Thus, there is no natural zero point like when we are predicting $v$ directly and we have chosen one particular normalization where the average score is zero. If our model does generalize well, it should predict that these all star teams are far above average despite never seeing this combination of players in the training set.

We also investigate whether the CGA has learned things about whole teams (e.g ``the Cavaliers usually win'') and whether there is sufficient variation in starting lineups that we have learned the disentangled contributions of individual players (e.g the team's success is largely due to Lebron's brilliance). We investigate this by constructing synthetic `same-team-All-Star' teams where we replace each player in a real All Star team with a randomly selected teammate from their real NBA team from that year.  

Figure \ref{nba_dist} shows the distribution of scores for randomly constructed teams with red lines representing predicted scores for the real All Star teams and blue lines for predicted scores for the `same-team-All-Star' teams. These results show that the predictive performance of the CGA in win rate prediction comes from meaningful player-level assessment, not just that certain teams usually win (or lose).
%We see that the second order CGA predicts that the NBA All Star Teams (never used in training) are far above the $99^{th}$ percentile of all randomly generated teams. Replacing each All Star with their team-level replacement gives teams that are better than average but still very much within the distribution of random teams. 

\begin{figure}[ht]

\begin{minipage}{.5\columnwidth}
  \centering
  \includegraphics[width=\columnwidth]{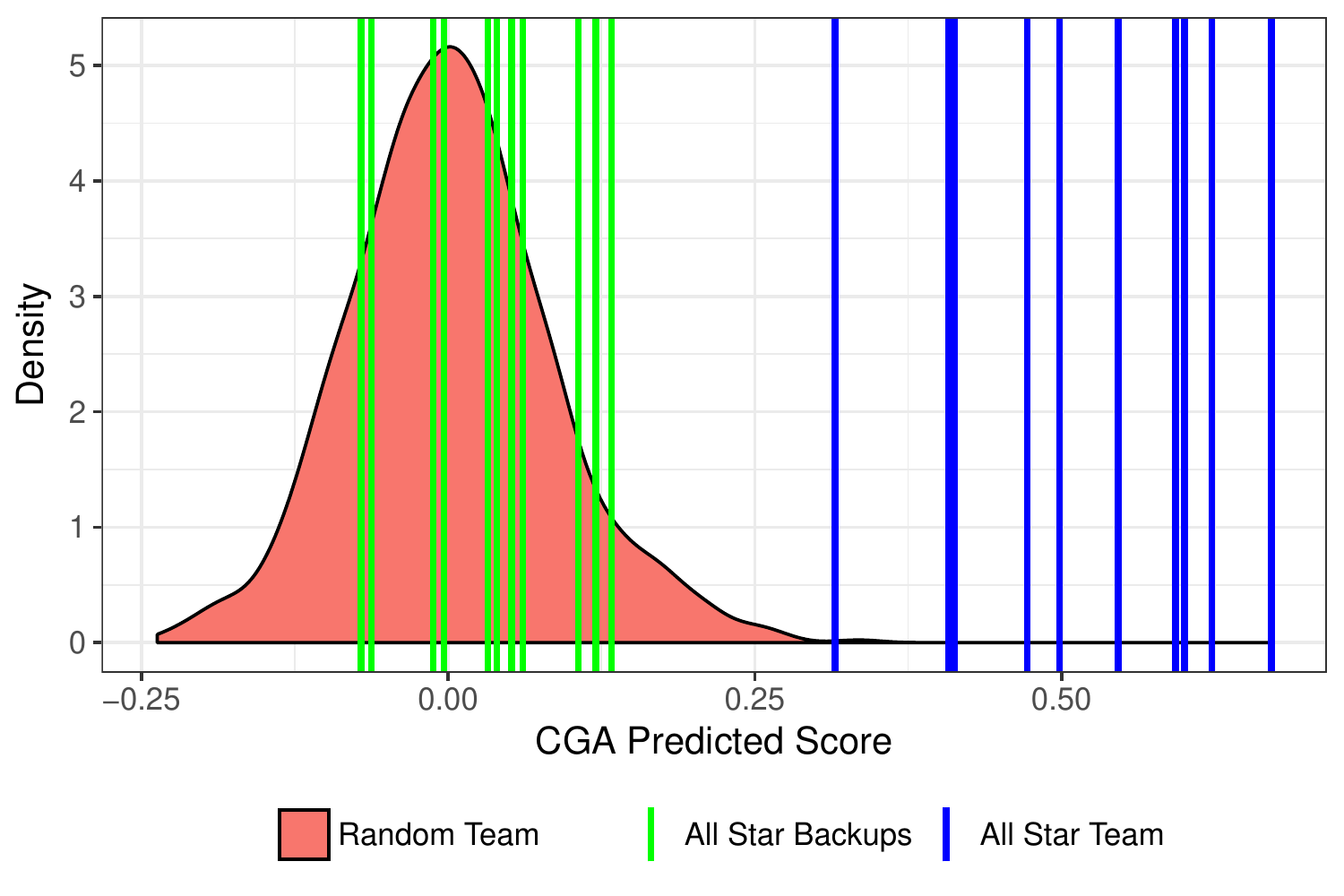}
\end{minipage}
\begin{minipage}{.5\columnwidth}
  \centering
  \includegraphics[width=\columnwidth]{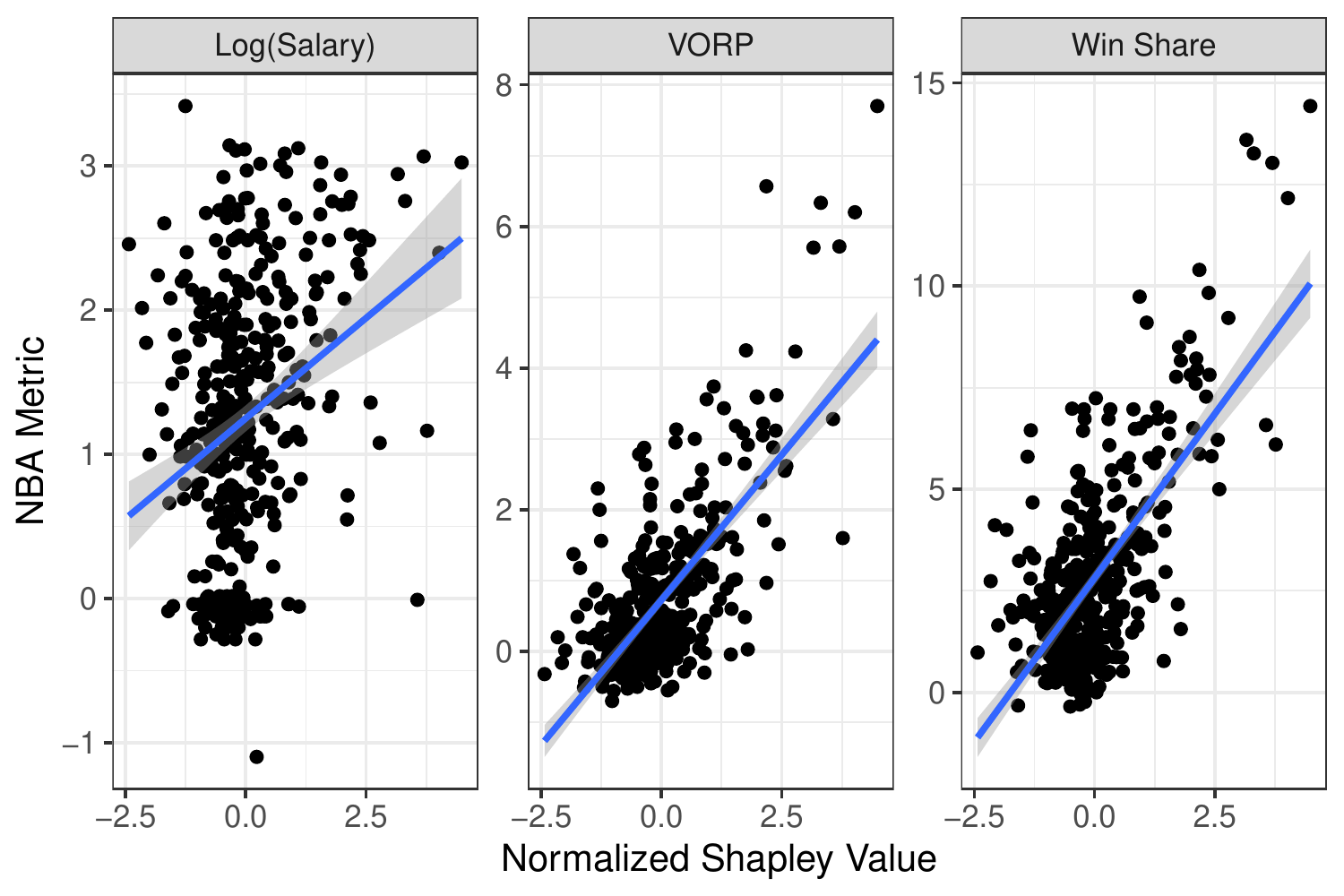}
\end{minipage}

\caption{Left Panel: The second order CGA predicts that All Star Teams are far above the $99^{th}$ percentile of random teams. Replacing each All Star with their team-level replacement gives much worse teams. These results show that the predictive performance of the CGA in win rate prediction comes from player-level assessment, and not just memorization of certain teams usually wining or losing. Right Panel: Marginal contributions of individual NBA players, as measured by the Shapley Value from the second order CGA, correlate well with measures of player-level value add used by NBA analysts (VORP, Win-Share) as well as market-level value-add (salary).}
\label{nba_dist}
\end{figure}

\textbf{Shapley Value as Individual Measure:} So far we have asked whether our CGA captures team-level performance. We now turn to asking whether it captures individual-level marginal contribution. For each team, we compute the team members' Shapley Values \textit{with respect to that team}. Since our dataset contains multiple years and individuals move across teams, we average an individual's computed Shapley values across all his teams. We correlate the Shapley Value based contribution scores with real world metrics used to evaluate basketball players' marginal contributions. We consider $3$ measures commonly used in NBA analytics.

First, we look at the value-over-replacement metric player\footnote{\url{https://www.basketball-reference.com/leaders/vorp_career.html}} (VORP). In basketball analytics, VORP tries to compute what would happen if the player were to be removed from the team and replaced by a random player in their position. Second, we look at win-share\footnote{\url{https://www.basketball-reference.com/about/ws.html}} (WS). Win-share tries to associate what percent of a team's performance can be attributed to a particular player. Finally, we use individual salaries, which are market measures of individual value add. Of course, a players' salary reflects much more than an individuals' contribution to team wins and losses (e.g their popularity, scarcity, etc...) and is extremely right tailed in the case of the NBA, so we consider its log. For each of these metrics, for each player, we average their values across the same years as our dataset.
%\footnote{\url{http://www.espn.com/nba/salaries}}

Figure \ref{nba_dist} plots CGA Shapley values against these measures. We see that there is a strong positive relationship between the CGA predicted Shapley value and other measures of individual contribution. Taken together, these results suggest that CGA indeed learns meaningful individual-level contribution measures, in a way that is consistent with expert knowledge.

\textbf{Remark:} overall, our experiments highlight the computational benefit of CGAs. In many cases like the NBA, team sizes are small relative to the number of players. We show that this structural prior can be encoded in a low rank CGA model, which does just as well (or better) than more complex, agnostic estimators like DeepSet (our main baseline), and is also interpretable to the benefit of users.

\section{Conclusion}
Cooperative game theory is a powerful set of tools. However, the CF is combinatorial and computing solution concepts like the Shapley is difficult. We introduce CGAs as a scalable, interpretable model for approximating the CF, and easily computing the SV. We provide a bevy of theoretical and empirical results so as to guide the application of CGA to model real world data.

Non-cooperative Game Theory has received much attention from the Machine Learning and AI community \cite{shoham2007if,lazaridou2016multi,lanctot2017unified,lerer2017maintaining,brown2018superhuman}, while Cooperative Game Theory has been less explored. We believe that the intersection of Machine Learning and Cooperative Game Theory is rich with topics ranging from Multi-agent RL to Federated Learning. Our broader hope is that our work provides a springboard for future research in this area.

\section{Broader Impact}

In this work, we introduce and analyze a general model for team strength and player value. While our end goal is to ensure accurate assessment of team strength, and as a result \emph{fair} distribution of team value, there is the risk of model misspecification and resultant bias in the estimators. Furthermore, often times the team performance data we see is observational and the data we observe may be biased due to individuals being of disparate background. Indeed, accounting for such confounding factors is an important extension to our work that we would like to highlight.

\bibliographystyle{plain}
\bibliography{refs}

\onecolumn

\appendix

\section{Appendix to ``Evaluating and Rewarding Teamwork Using Cooperative Game Abstractions''}

\setcounter{fact}{0}
\setcounter{theorem}{0}
\setcounter{prop}{0}
\setcounter{corr}{0}

\subsection{Identification Theorem Proofs}
\label{app: id_theorems}

\begin{theorem}[Sufficiency for Identification]
Suppose $\mathcal{H}$ includes all $k^{th}$ order CGAs and $v^*$ is a $k^{th}$ order CGA. If $\mathcal{D}_P$ include performances from all teams of at least $k$ different subset sizes $s_1, \dots, s_k \in [k, n-1]$, then $\mathcal{D}_P$ identifies $v^*.$
\end{theorem}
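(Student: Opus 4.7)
My plan would be to first observe that any minimizer $\hat{v}$ of the ERM criterion must agree with $v^*$ at every data point (since $v^* \in \mathcal{H}$ already achieves zero squared error), so the difference $\Delta := \hat{v} - v^*$ is itself a $k$-th order CGA, with some weights $\delta_S$ for $1 \le |S| \le k$, that vanishes on every coalition whose size lies in $\{s_1, \ldots, s_k\}$. It thus suffices to prove the following purely combinatorial statement: any $k$-th order CGA on $[n]$ that vanishes on all coalitions of $k$ distinct sizes $s_1, \ldots, s_k \in [k, n-1]$ is identically zero. I would do this by induction on $k$; the base case $k=1$ is a direct swap argument: comparing $s_1$-coalitions that differ in a single element gives $\delta_i = \delta_j$ for all $i, j$, and then $s_1 \delta_i = \Delta(C) = 0$ with $s_1 \ge 1$ forces $\delta_i = 0$.

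For the inductive step the key device is the pairwise first difference
\[ \tilde{\Delta}_{ij}(C) \;\defeq\; \Delta(C \cup \{i\}) - \Delta(C \cup \{j\}), \qquad C \subseteq [n]\setminus\{i,j\}, \]
which by a direct expansion is a multilinear polynomial of degree $\le k-1$ on the $n-2$ variables indexed by $[n]\setminus\{i,j\}$, with weights $\delta_{S\cup\{i\}} - \delta_{S\cup\{j\}}$ for $0 \le |S| \le k-1$ (including potentially the constant $\delta_i - \delta_j$), and which vanishes on the $k$ distinct sizes $\{s_\ell - 1\}_\ell \subseteq [k-1, n-2]$. To apply the inductive hypothesis I first need to kill that constant, which I would do by taking the symmetric slice-average $P(t) \defeq \binom{n-2}{t}^{-1}\sum_{|C|=t}\tilde{\Delta}_{ij}(C)$; expanding in the falling-factorial basis shows $P(t) = \sum_{b=0}^{k-1} \frac{t^{\underline{b}}}{(n-2)^{\underline{b}}} \sum_{|S|=b}(\delta_{S\cup\{i\}} - \delta_{S\cup\{j\}})$, a univariate polynomial of degree $\le k-1$ with $k$ distinct zeros $s_\ell - 1$, so $P \equiv 0$. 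In particular its constant coefficient $\delta_i - \delta_j$ vanishes, making $\tilde{\Delta}_{ij}$ a bona fide $(k-1)$-th order CGA.

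Of the $k$ sizes on which $\tilde{\Delta}_{ij}$ vanishes, at most one equals $n-2$ (arising from $s_\ell = n-1$); the remaining $k-1$ lie in $[k-1, n-3]$, which is exactly the range $[k-1, (n-2)-1]$ required by the inductive hypothesis on the ground set $[n]\setminus\{i,j\}$ of size $n-2$. The induction then yields $\tilde{\Delta}_{ij} \equiv 0$ for every pair $i \ne j$, i.e.\ $\delta_T = \delta_{T \triangle \{i,j\}}$ whenever $T$ contains exactly one of $i, j$. Since any two equal-sized subsets of $[n]$ are connected by a sequence of such single-element swaps, $\delta_T$ depends only on $|T|$; write $\delta_T = \alpha_{|T|}$.

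Putting this together, $\Delta(C) = \sum_{j=1}^{k} \alpha_j \binom{|C|}{j}$ is a polynomial in $|C|$ of degree $\le k$ that vanishes at the $k+1$ distinct integers $0, s_1, \ldots, s_k$ (recall $s_\ell \ge k \ge 1$), so it is identically zero as a polynomial, yielding $\alpha_j = 0$ for all $j$ and therefore $\Delta \equiv 0$. I expect the main technical obstacle to be the bookkeeping around the constant term $\delta_i - \delta_j$ of $\tilde{\Delta}_{ij}$: a naive induction cannot invoke itself on $\tilde{\Delta}_{ij}$ until that constant is killed, and the symmetric-average step does exactly that using the sharp fact that a degree-$(k-1)$ univariate polynomial with $k$ distinct zeros must vanish, which is precisely why the hypothesis demands $k$ distinct sizes (one more than the order of the first difference) rather than $k-1$.
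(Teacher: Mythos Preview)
Your argument is correct and takes a genuinely different route from the paper's. The paper sets up the linear system $M\vec{w}=\vec{v}$, where rows of $M$ correspond to the sampled coalitions and columns to the weights $\omega_S$ with $|S|\le k$, and shows $M$ has full column rank by block row-reduction: it arranges the row blocks by coalition size, uses a combinatorial counting identity to zero out entries below the block antidiagonal, and then proves the antidiagonal blocks $M_{ntk}$ are full rank via a double induction on $(k,n)$. Your proof instead works directly with the difference $\Delta=\hat v-v^*$, inducts on $k$ using the first-difference operator $\tilde\Delta_{ij}$, and exploits the polynomial structure twice: once to kill the constant $\delta_i-\delta_j$ via the degree-$(k-1)$ slice-average $P(t)$ having $k$ distinct roots, and once at the end to kill the size-dependent residuals $\alpha_j$ via a degree-$k$ polynomial with $k+1$ roots. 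Your approach is more conceptual and makes transparent \emph{why} exactly $k$ sizes are needed (one more root than the degree of the relevant univariate polynomial), whereas the paper's matrix-rank argument is more mechanical but closer to an explicit inversion procedure. Both rely implicitly on $n\ge 2k$, which is forced by the existence of $k$ distinct sizes in $[k,n-1]$ and which you use when checking that at least $k-1$ of the shifted sizes $s_\ell-1$ land in $[k-1,(n-2)-1]$ for the inductive call.
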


\begin{proof}

Let $\vec{w}$ be the first-through-$k$'th order weights we seek to learn, with the first $n$ indices corresponding to $\omega_{S}$ such that $|S|=1$, the next $n \choose 2$ indices corresponding to $|S|=2$, and so on up through the last $\binom{n}{k}$ terms corresponding to $|S|=k$. Let $\vec{v}$ be the corresponding coalitional values we observe.

Finding a $k$'th-order CGA corresponding to $\mathcal{D}$ can be formulated as finding a solution to $M\vec{w} = \vec{v}$, where matrix $M$ is a matrix whose rows correspond to the data points and each entry in the matrix $\in \{0,1\}$. For a given datapoint $(S,v(S))$, the corresponding row has ones in all entries corresponding to interaction terms $\omega_T$ such that $T\subseteq S$. Note that we only consider subset sizes $\geq k$, since subsets sizes smaller than $k$ would not exhibit $k$th order interaction.

To show identifiability, it suffices to show that $M$ has rank equal to the column size, since otherwise the null space is non-empty and there exist multiple $\vec{w}$ which satisfies the equation. Equivalently, a full rank matrix ensures that the optimization criterion is strictly convex and that the minimizer is unique. Define matrix $M_{ntk}$ to be the submatrix consisting of all rows from all subsets of size $t$ and columns corresponding only to that of the $k$'th order weights.

\begin{center}
$M$ = \bordermatrix{~ & \text{first order weight} & ... & \text{$k$'th order weight} \cr
                  \text{rows from subsets of size $s_1$} & M_{ns_11} & ... & M_{ns_1k} \cr
                  ... & ... & ... & ... \cr
                  \text{rows from subsets of size $s_k$} & M_{ns_k1} & ... & M_{ns_kk} \cr}
\end{center}

We will now show that we can perform row reductions on the decomposition into submatrices, such that we end up with all zeroes below the antidiagonal.

First we note that every row in $M_{nbk}$ is a linear combination of rows in $M_{nak}$ for any $a < b$.
Consider a row $s_b$ corresponding to subset $\{i_1,...,i_b\}$. 
We take all the rows in $M_{nak}$ corresponding to subsets $s'$ where $s' \subseteq \{i_1,...,i_b\}$ and $|s'| = a$.
We sum all $\binom{b}{a}$ of these rows, and denote this row $s'_b$.
Looking at a particular $k$th order weight, say WLOG corresponding to $\{i_1,...,i_k\} \subseteq \{i_1,...,i_b\}$, there is a $1$ in this column in $s_b$.
This subset of size $k$ shows up in $\binom{b-k}{a-k}$ subsets of size $a$. Therefore, the corresponding entry in row $s'_b$ is $\binom{b-k}{a-k}$.
And so, we can derive that $\binom{b-k}{a-k}^{-1} s'_b = s_b$ as they both have the same support: every subset of size $k$ in $\{i_1,...,i_b\}$ can be found in a subset of $\{i_1,...,i_b\}$ of size $a$.

Thus we may use an appropriate multiple $\alpha$ of the first row to replace $M_{ns_ik}$ with a zero for any $i>1$. However, this changes the whole row, and so the $j$'th order term changes to $M_{ns_ij} - \alpha M_{ns1j}$.
But by the same logic as for $k$, summing the $l$'th weights gives the same row scaled by $\binom{b - l}{a - l}$, and thus $M_{ns_ij} - \alpha M_{ns1j} = \left(1 - \frac{\binom{b - l}{a - l}}{\binom{b-k}{a-k}}\right)M_{ns_ij}$.

The above shows that we can perform row reduction using the first row of submatrices in order to put zeroes in the last column while retaining all submatrices in other columns (up to rescaling). But now we may apply this logic inductively, by considering only the submatrices corresponding to first through $k-1$'th order weights and rows from subsets of size $s_2$ or greater, and so on. We get that the matrix $M$ looks as follows after row reduction:

\begin{center}
\bordermatrix{~ & \text{first order weight} & ... & \text{(k-1)th order weight} & \text{kth order weight} \cr
                  \text{rows from subsets of size $s_1$} & M_{ns_11} & ... & M_{ns_1(k-1)} & M_{ns_1k} \cr
                  \text{rows from subsets of size $s_2$} & M'_{ns_21} & ... & M_{ns_2(k-1)} & 0 \cr
                  ... & ... & ... & 0 & 0 \cr
                  \text{rows from subsets of size $s_k$} & M_{ns_k1} & ...& 0 & 0 \cr}    
\end{center}

where $M'_{ns_21}$ denotes submatrices above the antidiagonal that have been rescaled (note that the first row does not need rescaling). It is then sufficient to show that $M_{ns_1k}, M_{ns_2(k-1)} ..., M_{ns_k1}$ (note that each of these submatrices has more rows than columns) are all full rank to show $M$ is full rank.

To do this, we first prove a lemma.

\begin{lemma}
  \label{lem:identification_sublemma}
When $t \geq k$ and $n = t + k$, the matrix $M_{ntk}$ is full rank.
\end{lemma}
\begin{proof}
  
We will proceed by induction on $k$.

\emph{Base case ($k=1$)}: 
Since $k=1$ each row corresponds to an all-one row with a single zero for the agent left out. Since we have such a row for each agent that can be left out, we get $n$ linearly-independent rows.

Thus the matrix is full rank.

\emph{Induction step ($k>1$)}: 

Assuming this statement holds for orders $1,\ldots, k - 1$. We will prove the statement for when the order is $k$.
To do this we will use induction on $n$:

\emph{Base case ($n=2k$)}: 
$n = 2k \Rightarrow t = k$, and so $M_{ntk}$ is the identity matrix and is thus full rank.

\emph{Induction step ($n > 2k$)}: 
Assume the matrix is full rank for when number of players is equal to $2k, ..., n$.
To prove the matrix is full rank for $n + 1$, consider the following decomposition of $M_{(n+1)(n+1-k)k}$.

\begin{center}
\bordermatrix{~ & \text{weights of subsets including 1} & \text{weights of excluding including 1} \cr
                  \text{subsets including 1} & A & B \cr
                  \text{subsets excluding 1} & 0 & C \cr}    
\end{center}

Observe that matrix $B$ corresponds to $M_{n(n-k)k}$ and is thus full rank by induction hypothesis. This means we can use linear combinations of rows of $B$ to reduce rows in $C$.
In particular, we can performs row reductions such that we replace $C$ with zeroes:
For each row in $C$ corresponding to a team of size of $n + 1 - k$ selected from $[2,..., n]$, we consider all $n - k$ subsets of this team in $B$ and sum them. For any subset of this team of size $k$, then we see that it shows up in the sum: $\binom{ n + 1 - k -(k)}{n - k - (k)} = n + 1 -2k$ times. Therefore, the sum is $n + 1- 2k$ times the row in $C$.

Moreover, let $D$ be the $n + 1 - k$ rows from $A$ summed together when performing the row reduction, let the resultant matrix be $D$. The reduced matrix looks like:

\begin{center}
$M_{(n+1)(n+1-k)k}$ = \bordermatrix{~ & \text{weights of subsets including 1} & \text{weights of excluding including 1} \cr
                  \text{subsets including 1} & A & B \cr
                  \text{subsets excluding 1} & D & 0 \cr}    
\end{center}

Then, we observe that $D$ corresponds to a scaled version of $M_{n(n+1-k)(k-1)}$, which is full rank by the inductive assumption. 
The scaling factor is calculated as follows:
For a subset $\{1,...,k\}$, the $k - 1$ elements show up in the $n+1-k$ subset row of $C$, then shows up in $\binom{n + 1 -k -(k-1)}{n - k - (k-1)} = n + 2 - 2k$ of the $n-k$ subsets. And so, $D$ is a $-\frac{n + 2 -2k}{n + 1- 2k}$ scaled version of $M_{n(n+1-k)(k-1)}$.
%TODO

$B$ remains unchanged after the row reduction and is full rank and thus the whole matrix is full rank.
\end{proof}

With this lemma in hand we can return to the main proof.
To complete the proof, we will show that for any $k$, any $n \geq 2k$ and any $t \in[k, n-k]$, $M_{ntk}$ is full rank. 

Note that $t \in[k, n-k] \Rightarrow \binom{n}{t} \geq \binom{n}{k}$ (as otherwise number of rows is already fewer than number of columns and the matrix will have rank less than column size).

We will use induction on $k$.

\emph{Base case ($k=1$)}: by Lemma~\ref{lem:identification_sublemma}, the matrix is full rank when $k=1$ for any team size $t$ and number of players $n$.

\emph{Induction step ($k>1$)}: assumes this holds for orders $1,...,k-1$ and any $t$ and $n$.
For order $k$, fix some $t \geq k$, we will show the matrix is full rank for all $n \geq t + k$ by induction on $n$.
For the base case $n = t + k$ the matrix is full rank by Lemma~\ref{lem:identification_sublemma}.
For the induction step assume $n>t+k$: assume the matrix is full rank when the number of players is in $\{t+k,...,n-1\}$.
Now when the number of players is $n$, we may decompose the matrix into columns corresponding to weights of $k$-size subsets containing player $1$, and rows into teams including or excluding player $1$.

\begin{center}
$M_{ntk} $= \bordermatrix{~ & \text{weights of subsets including 1} & \text{weights of subsets excluding 1} \cr
                  \text{subsets including 1} & U_1 & U_3 \cr
                  \text{subsets excluding 1} & 0 & U_2 \cr}    
\end{center}

In doing so, we first observe that
$U_1$ is exactly $M_{(n-1)(t-1)(k-1)}$ and is full rank from the induction hypothesis on $k$.
Secondly,
$U_2 = M_{(n-1)tk}$ and is thus full  rank by the induction hypothesis on $n$.
Therefore the matrix $M_{ntk}$ is full rank which concludes the inductive step on $n$. But this also concludes the inductive step on $k>1$, and thus we we get that all $M_{ntk}$ along the antidiagonal of $M$ are full rank. It follows that $M$ is full rank, and thus $M\vec{w}=\vec{v}$ has a unique solution.

Finally, because we are choosing $k$ subset sizes from $[k, n-1]$, it's easy to see that if we sort subset size $s$  by $\binom{n}{s}$, then the $j$th subset size in this sorted order $s_i$ is such that $\binom{n}{s_j} \geq \binom{n}{k - j + 1}$, which means the above condition applies.
\end{proof}

\begin{theorem}[Necessity for Identification]
Suppose $\mathcal{H}$ includes all $k^{th}$ order CGAs and $v^*$ is a $k^{th}$ order CGA. If $\mathcal{D}_P$ contains performances of teams of only $m < k$ different sizes, then $\mathcal{D}_P$ does not always identify $v^*.$
\begin{proof}

We will provide an instance when $k=2$ such that $v^*$ is not identified.
In that case $m=1$, and we may pick teams of size $n - 1$. That gives us $\binom{n}{n - 1} = n$ rows which is fewer than the number of columns $\binom{n}{2} + \binom{n}{1}$. Thus there will be more than one solution.

Moreover, the conditions specified in Theorem 1 are also tight in the sense that: if we allowed $m=k$ subset sizes, but over a wider interval, then $\mathcal{D}$ does not always identify $v^*.$ To see this, consider $k = 2$ again. Widening the interval means the inclusion of either subset size $k-1$ or $n$.

If we can pick $k-1$, consider $m=2$ subset sizes $k-1$ and $n-1$, which together gives $\binom{n}{1} + \binom{n}{n-1}$ rows, which is fewer than the number of columns $\binom{n}{2} + \binom{n}{1}$.

If we can pick $n$, consider $m=2$ subset sizes $n-1$  and $n$, which together gives $\binom{n}{n-1} + \binom{n}{n}$ rows, which is fewer than the number of columns $\binom{n}{2} + \binom{n}{1}$.

\end{proof}
\end{theorem}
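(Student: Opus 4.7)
The plan is to recycle the linear-algebraic setup from the proof of Theorem 1 and exhibit, for every $m<k$, an explicit choice of $m$ team sizes whose constraint matrix $M$ has strictly fewer rows than columns, so its null space is nonzero and two distinct $k$-th order CGAs agree on all data in $\mathcal{D}_P$.

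Concretely, I would write identification as the linear system $M\vec w = \vec v$ with $d_k \defeq \sum_{j=1}^{k}\binom{n}{j}$ unknowns, where the rows of $M$ index the observed teams and the columns index the weights $\omega_S$ for $|S|\leq k$. If the sizes present in $\mathcal{D}_P$ are $s_1,\ldots,s_m \in [k,n-1]$ then $M$ has $R \defeq \sum_{i=1}^{m}\binom{n}{s_i}$ rows. Whenever $R < d_k$ the column rank of $M$ is deficient, so some $\vec u \neq 0$ satisfies $M\vec u = 0$; the weight vector $\vec w^* + \vec u$ then defines a $k$-th order CGA distinct from $v^*$ that agrees with $v^*$ on every coalition in $\mathcal{D}_P$, so identification fails.

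To produce such sizes for every $m < k$, I would simply take $s_i = n - i$ for $i = 1, \ldots, m$, yielding $R = \sum_{i=1}^{m}\binom{n}{i} < \sum_{i=1}^{k}\binom{n}{i} = d_k$. These are valid whenever $n \geq m + k$, so all $s_i$ lie in $[k, n-1]$. This recipe specializes to the excerpt's $k=2,\,m=1$ counterexample (size $n-1$, giving $n < \binom{n}{2}+n$ columns) and extends uniformly to all $k$. A nearly identical row-versus-column count also recovers the tightness remark that the interval $[k, n-1]$ in Theorem 1 cannot be widened to include $k-1$ or $n$: in each case, replacing any interior size with one of these endpoints strictly decreases the left-hand side of the count $R \geq d_k$.

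The main, and essentially only, obstacle is to verify that adding a null-space element to $\vec w^*$ really produces a distinct hypothesis in $\mathcal{H}$ rather than a parametrization artifact; this is immediate because a $k$-th order CGA is, by definition, parametrized linearly and injectively by the weight vector $\omega$, so distinct weights give distinct CGAs. Unlike the sufficiency direction, no inductive combinatorial machinery of the sort used in Lemma 1 is required, since we only need the existence of a single null direction rather than full column rank of $M$.
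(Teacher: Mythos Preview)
Your proposal is correct and uses exactly the same row-versus-column counting argument as the paper: exhibit team sizes for which the constraint matrix $M$ has fewer rows than the $d_k$ columns, so its null space is nontrivial and identification fails. The only difference is scope: the paper's proof treats only the case $k=2$, $m=1$ (size $n-1$), whereas you carry the identical dimension count through for arbitrary $k$ and $m<k$ via the sizes $n-1,\ldots,n-m$, which is strictly more general but requires no new ideas. Your explicit remark that distinct weight vectors yield distinct CGAs (via the injective linear parametrization, i.e.\ Fact~1) is a point the paper leaves implicit; including it is a nice touch. The only place your write-up is slightly loose is the tightness remark: the paper's argument is not about ``replacing an interior size'' but about exhibiting specific $m=k$ size-sets that include an endpoint and still have $R<d_k$; your row-count idea is right, but the phrasing should match that.
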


\subsection{PAC Analysis}
\label{app: pac}

Another natural paradigm through which we may analyze sample complexity of learning a CGA is the PAC framework. Before we proceed, a word about why PAC bounds are not our main focus for sample complexity. One drawback of PAC bounds we considered is that it is only with high probability that \emph{most} coalition values are well approximated. Therefore, it could still be that there is one $\hat{v}(S)$ that is arbitrarily off. Thus, the resultant estimated Shapley value will inherit this large bias. Since we hope to use the estimated Shapley Value for fair credit assignment in practice, we opt for what may be considered more “pessimistic”, exact identification guarantees similar to those in ~\cite{seshadri2019discovering}. 

Below, we provide two results based on PAC and PMAC notions of approximation. We prove the result assuming that we have correct CGA order specification. The result follows similarly when a higher order than that of the true CF is specified.

Consider a random sample $S$ of $m$ $(C, v(C))$ data points with $C$ uniformly sampled from $2^A$. There are at most $m$ distinct coalitional values in that sample. Call them $\vec{v_{\widehat{S}}}$. We will solve $M^{nk}_{\widehat{S}} \vec{\hat{\omega}} = \vec{v_{\widehat{S}}}$ where $M^{nk}_{\widehat{S}}$ denotes the matrix consisting of all rows corresponding to coalitions in $\widehat{S}$. This is feasible since there exist $\vec{\omega}$ s.t $M^{nk} \vec{\omega} = \vec{v}$. Note that this step assuming feasibility relies on the CGA model being of order $k$ or higher; if not, $M^{nk}_{\widehat{S}} \vec{\hat{\omega}} = \vec{v_{\widehat{S}}}$ may not be feasible. 

In both parts of the proposition below, we will appeal to uniform convergence results to show that this construction yields a $\vec{\hat{\omega}}$ and the corresponding $\vec{\hat{v}}$ such that it approximates $\vec{v}$ with high probability. In all the sample complexity results that follow, let $c$ denote a generic constant.

\begin{prop}
\label{prop: pac}
Suppose $\vec{v}$ is a $k$th order CGA model with parameter vector $\omega$ of bounded $\ell_1$ norm. Then, with a set $\widehat{S}$ of $(C, v(C))$ data points of size $m \geq c \left ( \frac{d_k + \log(1 / \Delta)}{\delta^2} \right )$ uniformly sampled from $2^A$, we may compute $\vec{\hat{\omega}}$ and its corresponding $\vec{\hat{v}}$ as above such that, with probability at least $1 - \Delta$ over the samples $\widehat{S}$:

$$\Pr_{C \sim 2^A} [ \hat{v}(C) = v(C)] \geq 1- \delta$$

\end{prop}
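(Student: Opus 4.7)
My plan is to treat this as a standard realizable PAC learning problem for a $d_k$-dimensional linear function class and invoke uniform convergence via a VC-dimension bound. Each $k$th-order CGA is a linear function $v_{\omega}(C) = \omega^\top \phi(C)$, where $\phi(C) \in \{0,1\}^{d_k}$ indicates which subsets $S \subseteq C$ with $|S| \le k$ are present. Because the true $v$ is itself of this form, the linear system $M^{nk}_{\widehat{S}} \hat{\omega} = \vec{v}_{\widehat{S}}$ is always feasible, and any solution $\hat{\omega}$ perfectly fits the sample; thus the empirical $0$-$1$ error is zero, and it suffices to bound the generalization error of the loss class
$$\mathcal{L} = \{ C \mapsto \mathbf{1}[\hat{v}(C) \neq v(C)] : \|\hat{\omega}\|_1 \leq B\}.$$

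The crux is to bound $\mathrm{VCdim}(\mathcal{L}) = O(d_k)$. Substituting $\xi = \hat{\omega} - \omega$, each element of $\mathcal{L}$ is the indicator of the complement of the zero set $\{C : \xi^\top \phi(C) = 0\}$ of a linear functional in $d_k$ variables. The class of affine half-spaces in $\mathbb{R}^{d_k}$ has VC dimension $O(d_k)$, and $\{\xi^\top \phi = 0\}$ is the intersection of two half-spaces, so standard closure results give the desired $O(d_k)$ bound; restricting to $\|\hat{\omega}\|_1 \le B$ can only tighten this bound.

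With this capacity bound in hand, the standard VC-based uniform convergence theorem implies that $m \geq c(d_k + \log(1/\Delta))/\delta^2$ i.i.d. samples suffice for the empirical and population $0$-$1$ risks to agree up to $\delta$ for every $\hat{\omega}$ in the class, with probability at least $1-\Delta$ over $\widehat{S}$. Combined with the fact that our $\hat{\omega}$ has empirical risk zero, this yields $\Pr_{C \sim 2^A}[\hat{v}(C) \neq v(C)] \leq \delta$, proving the claim. The main obstacle is the VC-dimension step: exact equality is a subtle $0$-$1$ loss, and one must verify carefully that the zero-set indicator class truly inherits the $O(d_k)$ capacity of the underlying hyperplane class rather than admitting pathological shattering via coincidental cancellations. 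A clean alternative route is to bound the Rademacher complexity of the $\ell_1$-constrained linear class directly (using the $\ell_\infty$-boundedness of $\phi$) and pass through a Lipschitz surrogate for the $0$-$1$ loss; this yields the same $(d_k + \log(1/\Delta))/\delta^2$ rate and makes explicit use of the bounded $\ell_1$ norm assumption on $\omega$.
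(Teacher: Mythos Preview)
Your main line is correct and close in spirit to the paper's argument: both reduce to a VC bound for a $d_k$-dimensional linear class, observe that any solution of $M^{nk}_{\widehat S}\hat\omega=\vec v_{\widehat S}$ has zero empirical error, and invoke uniform convergence. The difference is in how the equality predicate $\mathbf 1[\xi^\top\phi(C)=0]$ is handled. You bound the VC dimension of the zero-set indicator class directly, writing $\{\xi^\top\phi=0\}$ as the intersection of two half-spaces and appealing to closure of VC classes under intersection; this works, though one has to be a little careful (it gives $O(d_k)$ up to a possible $\log d_k$ factor, and you yourself flag the shattering subtlety). The paper sidesteps this entirely with a clean ``doubling'' trick: it builds a new classification problem over $2\cdot 2^n$ points, pairing each $C$ with features $[-M^{nk}_C,\,v(C)]$ and $[M^{nk}_C,\,-v(C)]$, all labeled $+1$, so that a linear classifier $[\omega,1]$ misclassifies iff $M^{nk}_C\omega\neq v(C)$; this reduces the equality guarantee to a \emph{single} half-space VC bound in dimension $d_k+1$, with no closure argument needed. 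Your route is more direct conceptually; the paper's is tighter and avoids the technical worry you raised.

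One caution on your alternative route: the Rademacher-plus-Lipschitz-surrogate argument is what the paper uses for the \emph{multiplicative} PMAC guarantee (Proposition~2), but it does not obviously deliver the exact-equality conclusion $\Pr[\hat v(C)=v(C)]\ge 1-\delta$ here, since the indicator of exact equality has no useful Lipschitz surrogate. I would drop that alternative and stick with the VC argument.
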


\begin{proof}

The proof is motivated by the observation that $\vec{\omega}$ may be viewed as a linear classifier with dimension $d_k$. Indeed, if $\vec{\omega}$ is the true weight, then $M^{nk} \vec{\omega} = \vec{v}$, which is equivalent to:

$$ [-M^{nk}_C, v(C)]^T [\vec{\omega}, 1] \geq 0 \text{ and } [M^{nk}_C, -v(C)]^T [\vec{\omega}, 1] \geq 0 \text{ for all C}$$

where $M^{nk}_C$ denotes the row of $M_{nk}$ corresponding to coalition $C$ and $[\vec{a},\vec{b}], \vec{a} \in \mathbb{R}^n, \vec{b} \in \mathbb{R}^m$ denotes the $n + m$-dimensional vector obtained by concatenation of $\vec{a}$ and $\vec{b}$. Thus, if we define a classification task with $2 \cdot 2^N$ data points that have features $[-M^{nk}_C, v(C)]$, $[M^{nk}_C, -v(C)]$ and labels $1$ for all the points, we know there exists a  classifier $f(\vec{x}) = \text{sign}([\vec{\omega}, 1]^T \vec{x})$ which achieves zero loss; here we take the sign of $0$ to be $1$.

Define data distribution $\mathcal{D}$ to be the uniform distribution over these $2 \cdot 2^N$ data points. A draw of size $m$ from $\mathcal{D}$ may be simulated by sampling coalitions from the uniform distribution over $2^A$ and then for each chosen coalition $C$, randomly choosing between $[-M^{nk}_C, v(C)]$ and $[M^{nk}_C, -v(C)]$ with equal probability.

Now, we are ready to prove that the $\hat{v}$ satisfies the statement in Proposition \ref{prop: pac}. To do this, we use the uniform convergence result below (Theorem 6.8 from ~\cite{shalev2014understanding}): 

\begin{lemma}
\label{lem: uniform_convergence}
Let $\mathcal{H}$ be a hypothesis class for the classifier, and let $f$ be the true underlying classifier. If $\mathcal{H}$ has VC-dimension $d$, then with 
$$
m \geq c \left(\frac{d + \log\left(\frac 1 \Delta\right)}{\delta^2}\right)
$$ 
i.i.d data points $\vec{x}_1,...,\vec{x}_m \sim \Dcal$, 
$$\delta \geq \left|\Pr_{\vec{x} \sim D}[h(\vec{x}) \neq f(\vec{x})] - \frac{1}{m} \sum_{i=1}^m \mathds{1}_{h(\vec{x}^i) \neq f(\vec{x}^i)}\right|$$
for all $h \in \mathcal{H}$ and with probability $1 - \Delta$ over the sampled data points.
\end{lemma}

By construction, the classifier defined by $\vec{\hat{\omega}}$, $h(x) = \text{sign}([\vec{\hat{\omega}}, 1]^Tx))$, achieves zero empirical risk on $\hat{S}$ since $h(\vec{x}^i) = 1 = f(\vec{x}^i)$. So, we apply the uniform convergence result Lemma \ref{lem: uniform_convergence} with $\delta / 2$ to get that with probability $1 - \Delta$ over the sampled data points $\vec{x}$ from $\mathcal{D}$:

\begin{align*}
    \frac{\delta}{2} & \geq \Pr_{\vec{x} \sim D}[h(\vec{x}) \neq f(\vec{x})] \\
    & = \Pr_{\vec{x} \sim D} [[\vec{\hat{\omega}}, 1]^T \vec{x} < 0)] \\
    & = \frac{1}{2} \Pr_{C \sim 2^A} [M^{nk}_C \vec{\hat{\omega}} > v_C] + \frac{1}{2} \Pr_{C \sim 2^A} [M^{nk}_C \vec{\hat{\omega}} < v_C]\\
    & = \frac{1}{2} \left( 1 - \Pr_{C \sim 2^A} [M^{nk}_C \vec{\hat{\omega}} = v_C] \right)
\end{align*}

Therefore, the guarantee for $\vec{\hat{\omega}}$ over distribution $\mathcal{D}$ translates to the guarantee over the uniform distribution $2^A$ that $\hat{v} = M^{nk} \vec{\hat{\omega}}$ can overpredict or underpredict for at most $\delta$ percent of all coalitions.

To finish, we note that $[\vec{\omega}, 1]$ belongs to the hypothesis class of linear classifiers of dimension $d_k + 1$, which is known to have VC Dimension $d_k + 1$. So $\mathcal{H} = \{[\vec{\omega}, 1] \mid \vec{\omega} \in \mathbb{R}^{d_k} \}$ has VC dimension $d \leq d_k + 1$. And so, our sample complexity needed for $\vec{\hat{\omega}}$ to attain small generalization risk using Lemma \ref{lem: uniform_convergence} is $O(\frac{d_k + \log(1 / \Delta)}{\delta^2})$.

\end{proof}

We remark that the sample complexity needed is on the same order as that shown by Theorem 1 in section \ref{app: id_theorems} 

Next, we provide a PMAC-like guarantee \cite{balcan2011learning} with much smaller sample complexity.

\begin{prop}

With samples of size $m \geq c \left( \frac{\log(d_k) + \log(1 / \Delta)}{\epsilon^2 \delta^2} \right)$ uniformly sampled from $2^A$, we may compute $\vec{\hat{\omega}}$ and its corresponding $\vec{\hat{v}}$ as above such that, with probability at least $1 - \Delta$ over the samples:

$$\Pr_{C \sim 2^A}\left[   (1 - \epsilon) \hat{v}(C) \leq v(C) \leq (1 + \epsilon) \hat{v}(C)\right] \geq 1- \delta$$
\end{prop}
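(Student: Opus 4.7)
The plan is to strengthen the argument of Proposition~\ref{prop: pac} into a PMAC guarantee by keeping its linear-classifier reformulation intact but replacing the VC-dimension-based uniform convergence bound with a Rademacher-complexity bound that exploits an $\ell_1$ bound on the CGA parameters (the same assumption already used for the PAC statement). This replaces the effective dimension $d_k$ with $\log d_k$ in the sample complexity, at the cost of moving from exact to multiplicative agreement, which is exactly the gap between the two propositions.

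As in the PAC proof, I would construct augmented features $\vec{x}^{\pm}_C = [\mp M^{nk}_C,\, \pm v(C)]$ for each sampled coalition and view any CGA weight vector $\vec{\omega}$ as a classifier $[\vec{\omega},1]$, so that solving $M^{nk}_{\widehat{S}}\vec{\hat{\omega}} = \vec{v_{\widehat{S}}}$ yields a $\vec{\hat{\omega}}$ that matches $v$ exactly on the sampled coalitions. Since $\vec{\omega}^*$ is itself feasible, I would pick $\vec{\hat{\omega}}$ to be the minimum-$\ell_1$-norm interpolator (computable by an LP), ensuring $\|\vec{\hat{\omega}}\|_1 \le \|\vec{\omega}^*\|_1 \eqdef W$. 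Then the residual $r(C) := v(C) - \hat{v}(C) = M^{nk}_C(\vec{\omega}^* - \vec{\hat{\omega}})$ lies in the class $\mathcal{F} = \{C \mapsto M^{nk}_C \vec{u} : \|\vec{u}\|_1 \le 2W\}$, it vanishes on the sample, and by the Massart/Khintchine bound the empirical Rademacher complexity of $\mathcal{F}$ on $\{0,1\}$-valued features is $\hat{\mathcal{R}}_m(\mathcal{F}) = O(W\sqrt{\log d_k / m})$.

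To turn this into the multiplicative guarantee, I would rescale the features by $1/\hat{v}(C)$: the bad event $(1-\epsilon)\hat{v}(C) > v(C)$ or $v(C) > (1+\epsilon)\hat{v}(C)$ is equivalent to $|r(C)|/|\hat{v}(C)| > \epsilon$, i.e.\ an additive margin violation of size $\epsilon$ for the rescaled problem. A standard margin-loss uniform convergence argument (Lipschitz contraction of an $\epsilon$-clipped ramp loss, followed by McDiarmid) then yields, with probability at least $1-\Delta$, that the expected fraction of violating $C$ is bounded by $O((\sqrt{\log d_k/m} + \sqrt{\log(1/\Delta)/m})/\epsilon)$. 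Since $\vec{\hat{\omega}}$ achieves zero empirical margin violation by interpolation, a Markov step turns this expectation bound into the tail bound $\Pr_C[\text{fail}] \le \delta$ once we choose $m = \Omega((\log d_k + \log(1/\Delta))/(\epsilon^2\delta^2))$, matching the claim.

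The main obstacle is the per-point rescaling by $1/\hat{v}(C)$: when $\hat{v}(C)$ is nearly zero the rescaled features are unbounded and the Rademacher bound degrades. I would handle this either by an assumption that $|v(C)|$ is bounded away from zero on the support of the coalition distribution, or, more carefully, by a truncation argument that absorbs the small-measure set on which $\hat{v}(C)$ is tiny into the $\delta$ slack (using that $\hat{v}$ and $v$ are both close on the bulk of coalitions by the Rademacher step). A secondary technicality is verifying that the minimum-$\ell_1$-norm selection of $\vec{\hat{\omega}}$ really preserves the norm bound uniformly; this follows immediately because $\vec{\omega}^*$ is a feasible interpolator, but must be flagged since the generic PAC construction picked any feasible $\vec{\hat{\omega}}$.
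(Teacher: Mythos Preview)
Your high-level strategy is the right one and is essentially what the paper does: keep the linear reformulation, exploit the $\ell_1$ bound on the weights to replace the VC bound by a Rademacher/margin bound, and thereby trade $d_k$ for $\log d_k$ in the sample complexity. The paper's own proof is terse: it simply cites Theorem~5 of \cite{balkanski2017statistical} (for the lower inequality) and Theorem~2 of \cite{yan2020if} (for the upper inequality), instantiated with features $M^{nk}_C / v(C)$ and the hinge loss $[\,M^{nk}_C\vec{\omega}/v(C) - 1\,]_+$, observes that the interpolating $\vec{\hat{\omega}}$ has zero empirical loss, and union-bounds the two one-sided guarantees. So your proposal is a from-scratch rederivation of what those two references already package.

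There is, however, one genuine technical misstep in your execution. You rescale by $1/\hat{v}(C)$, but $\hat{v}(C)=M^{nk}_C\vec{\hat{\omega}}$ depends on the training sample, so the rescaled ``features'' are data-dependent and the standard Rademacher/uniform-convergence machinery no longer applies to them as stated; your proposed truncation or lower-bound workaround does not address this circularity. The paper (through the cited theorems) rescales by the fixed quantity $v(C)$ instead, which gives data-independent features $M^{nk}_C/v(C)$ and makes the Khintchine/Massart step go through directly. Since $|\hat{v}(C)/v(C)-1|\le \epsilon'$ for suitable $\epsilon'$ implies $(1-\epsilon)\hat{v}(C)\le v(C)\le (1+\epsilon)\hat{v}(C)$, this change costs nothing in the conclusion and removes the obstacle you flagged. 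Your observation that one must pick the minimum-$\ell_1$-norm interpolant to keep $\|\vec{\hat{\omega}}\|_1$ controlled is a point the paper's sketch glosses over, so you are actually more careful there.
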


\begin{proof}

The proof follows from combining two known theorems adapted to our setting.

The left hand side of the probabilistic guarantee follows from a straightforward adaptation of the proof of Theorem 5 in \cite{balkanski2017statistical}. In particular, the only tweak to the proof is that the features of the data points are to be instantiated as $M^{nk}_C / v(C)$ instead of $\mathds{1}_C / v(C)$. Since $\vec{\omega}$ is bounded by our assumption, we do not need to bound it in terms of values of $v(C)$'s as is done in the proof of \cite{balkanski2017statistical}. We note the loss function would then be defined as $\ell(\vec{\omega}, (M^{nk}_C / v(C), y)) = [\frac{M^{nk}_C \vec{\omega}}{v(C)} - 1]_{+}$ and $\vec{\hat{\omega}}$ achieves zero empirical loss because $M^{nk}_{\widehat{S}} \vec{\hat{\omega}} = \vec{v_{\widehat{S}}} \Rightarrow M^{nk}_C \vec{\hat{\omega}} = v_C \Rightarrow \frac{M^{nk}_C \vec{\hat{\omega}}}{v(C)} - 1 = 0$ for all $C \in S$. Altogether, we may arrive at the statement below:

\textit{With a set of $m \geq c \left(\frac{\log(d_k) + \log(1 / \Delta)}{\epsilon^2 \delta^2} \right)$ coalitions uniformly sampled from $2^A$, $\vec{\hat{\omega}}$ constructed as above is such that:}

$$\Pr_{C \sim 2^A}\left[  (1 - \epsilon) M^{nk}_C \vec{\hat{\omega}} \leq v(C)  \right] \geq 1- \delta$$

\textit{with probability at least $1 - \Delta$ over the samples.}

The right hand side follows from a related theorem, Theorem 2 in \cite{yan2020if} with the same change in the data features. Again, we can verify that $\vec{\hat{\omega}}$ achieves zero empirical loss:

\textit{With a set of $m \geq c \left( \frac{\log(d_k) + \log(1 / \Delta)}{\epsilon^2 \delta^2} \right)$ coalitions uniformly sampled from $2^A$, $\vec{\hat{\omega}}$ constructed as above is such that:}

$$\Pr_{C \sim 2^A}[ v(C) \leq (1 + \epsilon) M^{nk}_C \vec{\hat{w}} ] \geq 1- \delta$$

\textit{with probability at least $1 - \Delta$ over the samples.}

With this, we can initialize both theorems with $\Delta / 2$ and $\delta / 2$. We first union bound over the random draw of $m-$ size samples to conclude that with probability $\geq 1 - \Delta$, both inequalities hold for $\vec{\hat{\omega}}$, meaning that by union bound again for the random draw of $C$ over $2^A$:

$$\Pr_{C \sim 2^A} [ (1 - \epsilon) M^{nk}_C \vec{\hat{w}} \leq v(C) \leq (1 + \epsilon) M^{nk}_C \vec{\hat{w}} ] \geq 1- \delta$$

\end{proof}

In summary, this means that under an even looser definition of approximability of the CGA model, the sample complexity needed is much smaller: only $O(\log (d_k))$ number of points are needed. Since $d_k \leq 2^n$, this means at most $O(n)$ samples are needed to estimate \emph{most} of the coalition values \emph{approximately} with high probability.

\textbf{Remark:} more generally, we may obtain the above two guarantees under the same sample complexity for any setting where we are looking to estimate solutions $\vec{x}$ to large scale linear programs $A\vec{x} = \vec{b}$, knowing apriori that $\| x \|_1$ is bounded. In such cases, we may obtain a PAC and PMAC-like result by computing $\vec{\hat{x}}$ from randomly sampled constraints $\vec{a_i}^T \vec{x} = b_i$. Notice here that $A \in \mathbb{R}^{2^n \times d_k}$ and the PMAC notion avoids needing the exponential sample complexity that is required to construct $\vec{b}$ to compute an exact solution. 

This result may be of independent interest.

\subsection{Shapley Noise Bound Theorem Proofs}
\label{app: noise_bound_theorems}

\begin{theorem}[Shapley noise L2 bound]

The L2 norm of the estimation error of the Shapley values is bounded by:
 
 \begin{equation}\label{eq:4a}
  \sum_{i=1}^n \left ( \varphi_{i}(v) -  \varphi_i(\hat{v}) \right )^2 \leq \frac{2}{n}\sum_{C \in 2^A} \left (v(C) - \hat{v} (C) \right)^2
 \end{equation}

\end{theorem}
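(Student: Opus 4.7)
The plan is to recast the bound as a statement about the Shapley operator $L:\R^{2^n}\to\R^n$ defined by $\varphi(v)=Lv$. Setting $\delta = v-\hat v$, the claim is exactly $\|L\delta\|_2^2 \le \tfrac{2}{n}\|\delta\|_2^2$, i.e.\ $\|L\|_{\mathrm{op}}^2\le \tfrac{2}{n}$. Equivalently, the largest eigenvalue of the $n\times n$ matrix $LL^T$ is at most $2/n$.

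The symmetry of the Shapley value reduces the computation of $LL^T$'s spectrum to essentially one nontrivial number. For any permutation $\pi$ of $[n]$, the axiom that permuting players permutes their Shapley values gives $P_\pi L = L Q_\pi$ (with $P_\pi,Q_\pi$ the permutation actions on $\R^n$ and $\R^{2^n}$), so $LL^T$ commutes with all $P_\pi$ and must take the form $LL^T = (\alpha-\beta)I + \beta \mathbf{1}\mathbf{1}^T$. Its eigenvalues are $\alpha + (n-1)\beta$ (on $\mathbf{1}$) and $\alpha-\beta$ (on $\mathbf{1}^\perp$), so $\|L\|_{\mathrm{op}}^2 = \max\bigl(\alpha+(n-1)\beta,\ \alpha-\beta\bigr)$.

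Efficiency pins down the all-ones eigenvalue. Because $\sum_i\varphi_i(v)=v(A)-v(\emptyset)$, we have $L^T\mathbf{1}=e_A-e_\emptyset$, hence $\mathbf{1}^T LL^T\mathbf{1}=\|e_A-e_\emptyset\|_2^2 = 2$. Comparing with $\mathbf{1}^TLL^T\mathbf{1}=n(\alpha+(n-1)\beta)$ yields $\alpha+(n-1)\beta=\tfrac{2}{n}$. So the large eigenvalue is already exactly $2/n$, and it only remains to show the other eigenvalue does not exceed it, i.e.\ that $\alpha-\beta\le \alpha+(n-1)\beta$, which is equivalent to $\beta\ge 0$.

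The main (and only) obstacle is verifying $\beta\ge 0$, and here the computation cooperates beautifully. Write the Shapley weights as $\alpha_s = s!(n-s-1)!/n!$, so the $(i,T)$ entry of $L$ is $+\alpha_{|T|-1}$ if $i\in T$ and $-\alpha_{|T|}$ otherwise. To compute $\beta = (LL^T)_{12} = \sum_T L_{1T}L_{2T}$, partition coalitions by $T\cap\{1,2\}$, writing $U = T\setminus\{1,2\}\subseteq A\setminus\{1,2\}$. The four cases contribute, respectively, $\alpha_{|U|}^2,\ -\alpha_{|U|}\alpha_{|U|+1},\ -\alpha_{|U|}\alpha_{|U|+1},\ \alpha_{|U|+1}^2$, which telescope into a perfect square:
\begin{equation*}
\beta \;=\; \sum_{U\subseteq A\setminus\{1,2\}} \bigl(\alpha_{|U|+1}-\alpha_{|U|}\bigr)^2 \;=\; \sum_{s=0}^{n-2}\binom{n-2}{s}(\alpha_{s+1}-\alpha_s)^2 \;\ge\; 0.
\end{equation*}
Combining the two eigenvalue computations, $\|L\|_{\mathrm{op}}^2 = \alpha+(n-1)\beta = 2/n$, and the theorem follows. (As a sanity check, the bound is tight on $\delta = e_A - e_\emptyset$, where direct calculation gives $\varphi_i(\delta) = 2/n$ for every $i$, so $\|\varphi(\delta)\|_2^2 / \|\delta\|_2^2 = (4/n)/2 = 2/n$.)
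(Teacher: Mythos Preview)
Your proof is correct and follows the same high-level strategy as the paper: view the Shapley value as a linear map $L$, bound via the operator norm, and exploit the fact that $LL^T$ has the two-parameter form $(\alpha-\beta)I+\beta\,\mathbf{1}\mathbf{1}^T$ by permutation symmetry, with eigenvalues $\alpha+(n-1)\beta$ and $\alpha-\beta$. Where you differ is in the execution, and the differences are worth recording. The paper obtains $\alpha+(n-1)\beta=2/n$ by a lengthy direct expansion of the binomial sums defining $\alpha$ (their $d_1$) and $\beta$ (their $d_2$); you get the same number in one line from the efficiency axiom, via $L^T\mathbf{1}=e_A-e_\emptyset$ and hence $\mathbf{1}^TLL^T\mathbf{1}=2$. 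This is genuinely cleaner. Likewise, the paper only checks $\alpha>\beta$ by Cauchy--Schwarz (which shows the \emph{other} eigenvalue is positive) and tacitly treats $\alpha+(n-1)\beta$ as the top eigenvalue without verifying $\beta\ge 0$; your perfect-square identity $\beta=\sum_U(\alpha_{|U|+1}-\alpha_{|U|})^2$ closes that gap explicitly. So your argument is essentially the paper's, but streamlined by invoking the Shapley axioms (symmetry and efficiency) in place of brute-force summation.
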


\begin{proof}

First we observe that the Shapley value is a linear map $\mathbb{R}^{2^n} \rightarrow \mathbb{R}^n$ taking $v$ to $\varphi(v)$. We may describe this map with matrix $S_n \in \mathbb{R}^{n \times 2^n}$ where $n$ is the number of players in the cooperative game. Our work extends a line of work that studies properties of $S_n$, including~\cite{beal2019games} that studies its nullspace.

We have that:

$$|| \varphi(v) - \varphi(\hat{v})||_2 = ||S_n v - S_n \hat{v}||_2 \leq ||S_n||_{op} ||v - \hat{v}||_2$$

It suffices then to obtain the operator norm of $S_n$. We know that $||S_n||_{op} = \sqrt{\sigma_{\max}(S_n^TS_n)}$. $S_n^TS_n$ is complicated to analyze, so we opt to analyze $\sigma_{\max}(S_n S_n^T)$ since we know that the nonzero eigenvalues of $S_n^TS_n$ are the same as those of $S_nS_n^T$. $S_nS_n^T$ has nice structure in that all its off-diagonal entries are the same and all its diagonal entries are the same.

Take the $i$th row of $S_n$, $(S_n)_i$, we know that the entry in this row corresponding to subset $S$ is:

\begin{enumerate}
	\item $\frac{1}{n} \binom{n - 1}{|S| - 1}^{-1}$ if $i \in S$
	\item  $-\frac{1}{n} \binom{n - 1}{|S|}^{-1}$ if $i \not \in S$
\end{enumerate}

Therefore, let $d_1$ denote its diagonal entries, then:

\begin{align*}
	d_1 &= (S_n)_i^T  (S_n)_i \\
		& = \sum_{S \in 2^{[n]}, i \in S}  ( \frac{1}{n} \binom{n - 1}{|S| - 1}^{-1})^2 + \sum_{S \in 2^{[n]}, i \not \in S} (- \frac{1}{n} \binom{n - 1}{|S|}^{-1})^2 \\
		& = \frac{1}{n^2} \sum_{k=1}^n \binom{n-1}{k-1} \binom{n-1}{k-1}^{-2} +  \frac{1}{n^2} \sum_{k=0}^{n-1} \binom{n-1}{k} \binom{n-1}{k}^{-2}
\end{align*}

Let $d_2$ denote its off-diagonal entries. Consider the $i,j$th entry of $S_nS_n^T$, we can characterize the weights in the dot product as follows:

\begin{enumerate}
	\item $(\frac{1}{n} \binom{n - 1}{|S| - 1}^{-1})^2$ if $i, j \in S$
	\item $(\frac{1}{n} \binom{n - 1}{|S| - 1}^{-1})(-\frac{1}{n} \binom{n - 1}{|S|}^{-1})$ if $i \in S$, $j \not \in S$
	\item $(-\frac{1}{n} \binom{n - 1}{|S|}^{-1})(\frac{1}{n} \binom{n - 1}{|S| - 1}^{-1})$ if $i \not \in S$, $j \in S$
	\item  $(-\frac{1}{n} \binom{n - 1}{|S|}^{-1})^2$ if $i, j \not \in S$
\end{enumerate}

Therefore, when we sum these together:

\begin{align*}
	d_2 &= (S_n)_i^T  (S_n)_j \\
		& = \sum_{S \in 2^{[n]}, i, j  \in S}  ( \frac{1}{n} \binom{n - 1}{|S| - 1}^{-1})^2 - \sum_{S \in 2^{[n]}, i \in S, j \not \in S} (\frac{1}{n} \binom{n - 1}{|S| - 1}^{-1})(-\frac{1}{n} \binom{n - 1}{|S|}^{-1}) \\
		& \qquad \qquad - \sum_{S \in 2^{[n]}, i \not \in S, j \in S} (-\frac{1}{n} \binom{n - 1}{|S|}^{-1})(\frac{1}{n} \binom{n - 1}{|S| - 1}^{-1}) +  \sum_{S \in 2^{[n]}, i, j  \not \in S} (- \frac{1}{n} \binom{n - 1}{|S|}^{-1})^2 \\
		& = \sum_{k=2}^n \binom{n-2}{k-2} \binom{n-1}{k-1}^{-2} - 2\sum_{k=1}^{n-1} \binom{n-2}{k-1} \binom{n-1}{k}^{-1} \binom{n-1}{k-1}^{-1} + \sum_{k=0}^{n-2} \binom{n-2}{k} \binom{n-1}{k}^{-2}
\end{align*}

It's easy to check that $d_1 > d_2$ since $d_2 =  (S_n)_i^T  (S_n)_j \leq ||(S_n)_i||_2 ||(S_n)_j||_2 = (S_n)_i^T  (S_n)_i = d_1$. 

And so, we may write:

$$S_nS_n^T = (d_1 - d_2) I_{n} + d_2 1_n$$

where $1_n$ is the all ones matrix.

This allows us to characterize all the eigenvalues of $S_nS_n^T$ and in particular the biggest one. 

If the SVD of $1_n = U  D U^T$, then we know that $D$ is a diagonal matrix with one entry being $n$ as this is an eigenvalue of $1_n$ and the rest being $0$ since $1_n$ is only rank $1$. And so, $$S_nS_n^T = U[(d_1 - d_2) I_{n} + d_2 D] U^T$$

This means that the top eigenvalue is $d_1 - d_2 + n \cdot d_2$ and the rest are all $d_1 - d_2$.

Evaluating $d_1 - d_2 + n \cdot d_2 = d_1 + (n-1) d_2$:

\begin{align*}
	d_1 + (n-1) d_2 & = \frac{1}{n^2} (\sum_{k=2}^{n-2} \binom{n-1}{k-1}^{-1} + \binom{n-1}{k}^{-1} \\
	& \qquad + (n-1) [\frac{k-1}{n-1} \binom{n-1}{k-1}^{-1} - 2 \frac{k}{n-1} \binom{n-1}{k-1}^{-1} + \binom{n-2}{k} \binom{n-1}{k}^{-2}]) + \frac{1}{n^2} r \\
	& =  \frac{1}{n^2} ((\sum_{k=2}^{n-2} k  \binom{n-1}{k-1}^{-1} +  \binom{n-1}{k}^{-1} - 2k  \binom{n-1}{k-1}^{-1} + (n-1)  \binom{n-2}{k} \binom{n-1}{k}^{-2}) + \frac{1}{n^2} r \\
	& =  \frac{1}{n^2} ((\sum_{k=2}^{n-2} -k \binom{n-1}{k-1}^{-1} + \binom{n-1}{k}^{-1} + (n - 1 -k )\binom{n-1}{k}^{-1}) + \frac{1}{n^2} r \\
	& =  \frac{1}{n^2} ((\sum_{k=2}^{n-2} - \frac{k! (n-k)!}{(n-1)!} + (n-k) \frac{k! (n-1-k)!}{(n-1)!}) + \frac{1}{n^2} r \\
	&= \frac{1}{n^2} r
\end{align*}

It just remains to evaluate $r$ which are the residual terms from the sums, they are:

\begin{align*}
	r & =[1 + 1 + \frac{1}{n-1}] + [1 + \frac{1}{n-1} + 1] \text{ (from the two sums in } d_1) \\
	& \qquad + (n - 1) ( ( [1 + \frac{n-2}{(n-1)^2}] - 2[\frac{1}{n-1} + \frac{1}{n-1}] + [1 + \frac{(n-2)}{(n-1)^2}]) \text{ (from the three sums in } d_2) \\
	& = 4 + \frac{2}{n-1} + 2n - 2 -4 + \frac{2(n-2)}{n-1} \\
	& = 2n
\end{align*}

To summarize, we get that:

$$\sigma_{\max}(S_n S_n^T) = d_1 + (n-1) d_2 = \frac{1}{n^2} 2n = \frac{2}{n}$$

$$\Rightarrow ||S_n||_{op} = \sqrt{\sigma_{\max}(S_n^T S_n)} =  \sqrt{\sigma_{\max}(S_n S_n^T)} = \sqrt{ \frac{2}{n}}$$

which proves that $|| \varphi(v) - \varphi(\hat{v})||_2 \leq \sqrt{\frac{2}{n}} ||v - \hat{v}||_2$ \eqref{eq:4a}, as desired.

\end{proof}

The above is a worst case analysis by computing the largest singular value of the Shapley matrix. It turns out, most singular values of the Shapley matrix are very small and won't lead to a large amplification of the noise in the characteristic function.

We perform average case analysis by assuming that the error in the characteristic function is drawn uniformly from a smooth distribution, which is not very "peaky" anywhere, over all noise $v - \hat{v}$ with the same L2 norm.

\begin{theorem}
Assuming that $v - \hat{v}$ is drawn from distribution $\mathcal{D}_{B_r}$ with support equal to a sphere and smooth in that $\kappa_0 \leq \Pr_{\mathcal{D}_{B_r}}(x) \leq \kappa_1$ for any point $x$ in its support, then:

 \begin{equation}
  \mathbb{E}_{v - \hat{v} \sim \mathcal{D}_{B_r}}[\| \varphi(v) -  \varphi(\hat{v}) \|_2^2] \leq \frac{6}{n} \frac{\kappa_1}{\kappa_0} \frac{\| v - \hat{v} \|_2^2}{2^n}
 \end{equation}
\end{theorem}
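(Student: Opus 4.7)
The plan is to replace the worst-case operator-norm bound of Theorem 3 by an average-case Frobenius-norm bound, exploiting the isotropy of the uniform distribution on the sphere. First, I would pass from $\mathcal{D}_{B_r}$ to the uniform distribution $U_r$ on the sphere $S_r \subset \mathbb{R}^{2^n}$ of radius $r = \|v - \hat{v}\|_2$. Since the density $p$ of $\mathcal{D}_{B_r}$ satisfies $\kappa_0 \leq p(x) \leq \kappa_1$ and integrates to $1$, the surface area must satisfy $|S_r| \leq 1/\kappa_0$, and for any nonnegative $f$,
$$\mathbb{E}_{\mathcal{D}_{B_r}}[f(x)] \;\leq\; \kappa_1 \int_{S_r} f(x)\,dx \;=\; \kappa_1 |S_r|\, \mathbb{E}_{U_r}[f(x)] \;\leq\; \frac{\kappa_1}{\kappa_0}\,\mathbb{E}_{U_r}[f(x)].$$

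Next, I would apply this with $f(x) = \|S_n x\|_2^2$, so that $f(v - \hat{v}) = \|\varphi(v) - \varphi(\hat{v})\|_2^2$, where $S_n \in \mathbb{R}^{n \times 2^n}$ is the Shapley matrix from the proof of Theorem 3. Under $U_r$, the covariance is isotropic, $\mathbb{E}_{U_r}[x x^T] = (r^2/2^n)\, I$, so
$$\mathbb{E}_{U_r}\bigl[\|S_n x\|_2^2\bigr] \;=\; \mathrm{tr}\bigl(S_n^T S_n\, \mathbb{E}_{U_r}[x x^T]\bigr) \;=\; \frac{r^2}{2^n}\,\|S_n\|_F^2.$$
Combining these two reductions, the theorem follows once I establish $\|S_n\|_F^2 \leq 6/n$.

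For this final inequality, I would reuse the spectral decomposition from the proof of Theorem 3, namely $S_n S_n^T = (d_1 - d_2)\, I_n + d_2 \mathbf{1}_n$, which gives $\|S_n\|_F^2 = \mathrm{tr}(S_n S_n^T) = n\, d_1$, with $d_1 = (2/n^2)\sum_{k=0}^{n-1}\binom{n-1}{k}^{-1}$. A short combinatorial estimate bounds this harmonic-type sum by $3$: the two endpoint terms ($k = 0, n-1$) contribute $1$ each, the next-to-endpoint terms contribute $1/(n-1)$ each, and the remaining interior terms are each at most $2/((n-1)(n-2))$ and sum to at most $2/(n-1)$. This yields $\|S_n\|_F^2 \leq 6/n$, which combined with the earlier reductions gives the claim.

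The main subtlety is the reduction-to-uniform step, which implicitly requires the density of $\mathcal{D}_{B_r}$ to be a proper $(2^n - 1)$-dimensional surface density so that $\int p = 1$ really does force $|S_r| \leq 1/\kappa_0$. The isotropy identity is standard, and the combinatorial bound on the sum of inverse binomial coefficients is elementary (the sum in fact tends to $2$ as $n \to \infty$, so $3$ is a safe constant for all $n$). Conceptually, the improvement over Theorem 3 comes from replacing the largest eigenvalue $2/n$ of $S_n S_n^T$ by the \emph{average} eigenvalue $d_1 = O(1/n^2)$ and then paying the $1/2^n$ dilution factor of norm-$r^2$ noise spread across the sphere, reflecting that generic error directions are far from the top singular direction of the Shapley operator.
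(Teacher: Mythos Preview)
Your proof is correct and its skeleton matches the paper's: both reduce to $\Tr(S_n S_n^T) = n d_1$ and then bound $d_1 \leq 6/n^2$ via essentially the same sum-of-inverse-binomials estimate. The one genuine difference is in how the smoothness assumption is processed. The paper works directly in the eigenbasis $\{u_j\}$ of $S_n^T S_n$: writing $x = \sum_j \alpha_j u_j$, it uses sphere symmetry together with the density bounds to show $\max_j \mathbb{E}[\alpha_j^2] \leq (\kappa_1/\kappa_0)\, r^2/2^n$, and then bounds $\sum_j \lambda_j\, \mathbb{E}[\alpha_j^2]$ by $(\max_j \mathbb{E}[\alpha_j^2])\sum_j \lambda_j$. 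You instead first dominate $\mathcal{D}_{B_r}$ by the uniform measure $U_r$ on the sphere (absorbing the $\kappa_1/\kappa_0$ factor in one clean inequality via $|S_r| \leq 1/\kappa_0$), and then invoke exact isotropy $\mathbb{E}_{U_r}[xx^T] = (r^2/2^n)I$ to read off the Frobenius norm directly. Your route is a bit more modular and makes the isotropy step transparent; the paper's version avoids naming the uniform measure but is slightly more computational. The quantitative content---$\|S_n\|_F^2 = n d_1 \leq 6/n$---is identical in both.
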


\begin{proof}

To obtain the bound in the theorem, we first prove the lemma below:

\begin{lemma}
\label{lem: average_singular_value}

Let $\mathcal{D}_{B_r}$ be a distribution with support equal to a sphere with radius $r$
%\christian{so a sphere :) Should specify } 
and smooth in that $\kappa_0 \leq \Pr_{\mathcal{D}_{B_r}}(\vec{x}) \leq \kappa_1$ for any $\vec{x}$ in its support.
Consider any %any vector $\vec{x} \in \mathbb{R}^{m_2}$ that is drawn from $\mathcal{D}_{B_r}$ and consider 
matrix $A \in \mathbb{R}^{m_1 \times m_2}$:

$$\mathbb{E}_{\vec{x} \sim \mathcal{D}_{B_r}}[\| A \vec{x} \|_2^2] \leq \frac{\kappa_1}{\kappa_0} \frac{\Tr(A^TA)}{m_2} \cdot r^2$$
\end{lemma}

\begin{proof}
Since $A^TA$ is symmetric and thus diagonalizable, consider its $m_2$ orthonormal eigenvectors $\vec{u_1}, .., \vec{u_{m_2}}$. We know that $\vec{u_1}, .., \vec{u_{m_2}}$ forms a basis of $\mathbb{R}^{m_2}$ and we can then write any $\vec{x}$ in the support of $\mathcal{D}_{B_r}$ as $\sum_{j=1}^{m_2} \alpha_{j} \vec{u_j}$. Moreover, 

$$r^2 = \| \vec{x} \|^2 = (\sum_{j=1}^{m_2} \alpha_{j} \vec{u_j})^T(\sum_{j=1}^{m_2} \alpha_{j} \vec{u_j}) = \sum_{j=1}^{m_2} \alpha_{j}^2$$

since $\vec{u_j}^T \vec{u_i} = 0$ for $i \neq j$ and $ \| \vec{u_j} \|_2^2 = 1$. 

Define $\mathcal{D}'_{B_r}$ to be the distribution over $\vec{\alpha}$ that corresponds to each $\vec{x}$ drawn from $\mathcal{D}_{B_r}$ and set $S_{D'}$ be its support (which may be characterized as a $m_2$ dimensional standard simplex as defined by $(\alpha_1^2 / r^2, ..., \alpha_{m_2}^2 / r^2)$. We abuse notation in letting $x(\vec{\alpha})$ be the corresponding $x$ to coefficients vector $\vec{\alpha}$. It's a 1-1 correspondence, and so from the smoothness assumption on $\mathcal{D}_{B_r}$,  $\Pr_{\mathcal{D}'_{B_r}}(\vec{\alpha}) = \Pr_{\mathcal{D}_{B_r}}(x(\vec{\alpha})) \in [\kappa_0, \kappa_1]$.

Define $k^* = \argmax_{k \in [m_2]} \mathbb{E}_{\vec{\alpha} \sim \mathcal{D}'_{B_r}}[\alpha_{k}^2]$, then for any $i \neq k^*$:

\begin{align*}
    \mathbb{E}_{\vec{\alpha} \sim \mathcal{D}'_{B_r}}[\alpha_{k^*}^2] & = \int_{S_{D'}} \alpha_{k^*}^2 \Pr_{\mathcal{D'}_{B_r}}(\vec{\alpha}) d \vec{\alpha} \\
    & \leq \int_{S_{D'}} \alpha_{k^*}^2 \kappa_1 d \vec{\alpha}\\
    & = \int_{S_{D'}} \alpha_i^2 \kappa_1 d \vec{\alpha}\\
    & \leq \int \alpha_i^2  \frac{\kappa_1}{\kappa_0} \Pr_{\mathcal{D'}_{B_r}}(\vec{\alpha}) d \vec{\alpha} \\
    & =  \frac{\kappa_1}{\kappa_0} \mathbb{E}_{\vec{\alpha} \sim \mathcal{D}'_{B_r}}[\alpha_{i}^2]
\end{align*}
where the second equality follows from the symmetry of the support of $\mathcal{D}_{B_r}$, which is a sphere.

This implies that:

$$\mathbb{E}_{\vec{\alpha} \sim \mathcal{D}'_{B_r}}[\alpha_{k^*}^2]  \leq \frac{\sum_{j=1}^{m_2} \frac{\kappa_1}{\kappa_0} \mathbb{E}_{\vec{\alpha} \sim \mathcal{D}'_{B_r}}[\alpha_{j}^2 ]}{m_2} = \frac{\kappa_1}{\kappa_0} \frac{\mathbb{E}_{\vec{\alpha} \sim \mathcal{D}'_{B_r}}[\sum_{j=1}^{m_2} \alpha_{j}^2 ]}{m_2} = \frac{\kappa_1}{\kappa_0} \frac{r^2}{m_2}$$

Therefore:
\begin{align*}
    \mathbb{E}_{\vec{x} \sim \mathcal{D}_{B_r}}[\| A \vec{x} \|_2^2] & = \mathbb{E}_{\vec{x} \sim \mathcal{D}_{B_r}}[\vec{x}^T A^TA \vec{x}] \\
    & = \mathbb{E}_{\vec{x} \sim \mathcal{D}_{B_r}}[\vec{x}^T (\sum_{j=1}^{m_2} \alpha_{j}  \lambda_j \vec{u_j})] \\
    & = \mathbb{E}_{\vec{\alpha} \sim \mathcal{D}'_{B_r}}[\sum_{j=1}^{m_2} \lambda_j \alpha_{j}^2 ] \\
    & = \sum_{j=1}^{m_2} \lambda_j \mathbb{E}_{\vec{\alpha} \sim \mathcal{D}'_{B_r}}[\alpha_{j}^2 ] \\
    & \leq \mathbb{E}_{\vec{\alpha} \sim \mathcal{D}'_{B_r}}[\alpha_{k^*}^2 ] \sum_{j=1}^{m_2} \lambda_j \\
    & \leq  \frac{\kappa_1}{\kappa_0} \frac{r^2}{m_2} \sum_{j=1}^{m_2} \lambda_j
    %& = \frac{r^2}{m_2} \sum_{j=1}^{m_2} \lambda_j
\end{align*}

\end{proof}

Using this Lemma \ref{lem: average_singular_value}, we can then perform an average case analysis:

$$\mathbb{E}[ \| S_n \vec{x} \|_2^2] \leq \frac{\kappa_1}{\kappa_0} \frac{\Tr(S_n^TS_n)}{2^n} \| \vec{x} \|_2^2 = \frac{\kappa_1}{\kappa_0} \frac{\Tr(S_nS_n^T)}{2^n} \| \vec{x} \|_2^2$$

We know that $\Tr(S_nS_n^T) = nd_1$ so the average case multiplier of the noise is:

\begin{align*}
    d_1 & = \frac{1}{n^2} \sum_{k=1}^n \binom{n-1}{k-1}^{-1} +  \frac{1}{n^2} \sum_{k=0}^{n-1} \binom{n-1}{k}^{-1} \\
    & = \frac{1}{n^2} (2 + \sum_{k=1}^{n-1} \binom{n-1}{k-1}^{-1} + \binom{n-1}{k}^{-1}) \\
    & = \frac{2}{n^2} + \frac{1}{n^2}(\sum_{k=1}^{n-1} \frac{(k-1)!(n-k-1)!(k + n-k)}{(n-1)!})\\
    & = \frac{2}{n^2} + \frac{1}{n(n-1)}(\sum_{k=1}^{n-1} \binom{n-2}{k-1}^{-1}) \\
    & = \frac{2}{n^2} + \frac{2}{n(n-1)} + \frac{1}{n(n-1)}(\sum_{k=2}^{n-2} \binom{n-2}{k-1}^{-1}) \\
    & \leq \frac{2}{n^2} + \frac{2}{n(n-1)} + \frac{1}{n(n-1)}((n - 3) \binom{n-2}{1}^{-1}) \\
    & = \frac{2}{n^2} + \frac{3n-7}{n(n-1)(n-2)} \\
    & \leq \frac{6}{n^2}
\end{align*}

So, the multiplier is $\frac{\kappa_1}{\kappa_0} \frac{6}{n2^n}$ over the distribution $\mathcal{D}_{B_r}$.
\end{proof}

Next, we can obtain a more general result by integrating across all L2 norms $r$ that $v - \hat{v}$ can take.

\begin{corr}
Suppose noise $v - \hat{v} \sim \mathcal{D}_n$ is such that its conditional distribution satisfies $\kappa_0(r) \leq \Pr_{\mathcal{D}_n}(x | \| x \|_2^2 = r^2) \leq \kappa_1(r)$ for all $r$ and $x$ in $\mathcal{D}_n$'s support, then:

$$\mathbb{E}_{v - \hat{v} \sim \mathcal{D}_n}[\| \varphi(v) -  \varphi(\hat{v}) \|_2^2] \leq \frac{6}{n} \mathbb{E}_r \left[ \frac{\kappa_1(r)}{\kappa_0(r)} \left ( \frac{r^2}{2^n} \right) \right]$$
\end{corr}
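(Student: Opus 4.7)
The plan is to reduce the corollary to the spherical case in Theorem~\ref{thm: l2_average_case} by conditioning on the $\ell_2$ norm of the noise. Concretely, I would introduce the random radius $R \defeq \| v - \hat{v} \|_2$ and use the tower property of conditional expectation to write
\begin{equation*}
  \mathbb{E}_{v - \hat{v} \sim \mathcal{D}_n}\bigl[\| \varphi(v) - \varphi(\hat{v}) \|_2^2\bigr]
  = \mathbb{E}_R\Bigl[\, \mathbb{E}\bigl[\| \varphi(v) - \varphi(\hat{v}) \|_2^2 \,\bigm|\, R = r\bigr] \Bigr].
\end{equation*}
Since the Shapley map is linear, the inner expectation only depends on the conditional distribution of $v - \hat{v}$ given $\| v - \hat{v} \|_2 = r$, which by construction is supported on the sphere of radius $r$.

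Next I would verify that this conditional law satisfies the smoothness hypothesis of Theorem~\ref{thm: l2_average_case} with constants $\kappa_0(r)$ and $\kappa_1(r)$. This is immediate from the assumption $\kappa_0(r) \leq \Pr_{\mathcal{D}_n}(x \mid \| x \|_2^2 = r^2) \leq \kappa_1(r)$, since the conditional density (with respect to the uniform measure on the sphere of radius $r$) is exactly this quantity up to the normalizing Jacobian, which cancels in the ratio $\kappa_1(r)/\kappa_0(r)$ appearing in the bound. Therefore Theorem~\ref{thm: l2_average_case} applies to the conditional distribution and yields
\begin{equation*}
  \mathbb{E}\bigl[\| \varphi(v) - \varphi(\hat{v}) \|_2^2 \,\bigm|\, R = r\bigr] \;\leq\; \frac{6}{n}\,\frac{\kappa_1(r)}{\kappa_0(r)}\,\frac{r^2}{2^n}.
\end{equation*}

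Plugging this into the tower expansion and pulling the deterministic factor $6/(n\,2^n)$ outside the expectation over $r$ gives the claimed inequality. The only mildly delicate step is the second one: justifying that the disintegration of $\mathcal{D}_n$ along level sets of $\| \cdot \|_2$ is well defined and that the pointwise bounds $\kappa_0(r), \kappa_1(r)$ transfer to the conditional density on each sphere. For the distributions relevant to the paper (e.g.\ those absolutely continuous on $\mathbb{R}^{2^n}$) this follows from a standard polar-coordinates disintegration; so I do not anticipate any serious obstacle beyond stating this reduction cleanly.
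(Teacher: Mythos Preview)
Your proposal is correct and matches the paper's proof essentially line for line: the paper also applies iterated expectation over the radius $r$, invokes Theorem~\ref{thm: l2_average_case} on the conditional (spherical) distribution to bound the inner expectation by $\frac{6}{n}\frac{\kappa_1(r)}{\kappa_0(r)}\frac{r^2}{2^n}$, and then pulls out the constant. Your extra remark about justifying the disintegration via polar coordinates is more careful than the paper, which simply asserts the tower identity without comment.
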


\begin{proof}

This follows from iterated expectation:

\begin{align*}
    \mathbb{E}_{v - \hat{v} \sim \mathcal{D}}[\| \varphi(v) -  \varphi(\hat{v}) \|_2^2] & = \mathbb{E}_r [\mathbb{E}_{v - \hat{v} \sim \mathcal{D}_{B_r}} [ \| \varphi(v) -  \varphi(\hat{v}) \|_2^2 \mid \| v - \hat{v} \|_2^2 = r^2] ] \\
    & \leq \mathbb{E}_r \left [ \left ( \frac{\kappa_1(r)}{\kappa_0(r)} \frac{6}{n2^n} \right ) r^2 \right ] \\
    & = \frac{6}{n 2^n} \mathbb{E}_r \left [ \frac{\kappa_1(r)}{\kappa_0(r)} r^2 \right]
\end{align*}

where the inequality holds by Theorem \ref{thm: l2_average_case}.

\end{proof}

\textbf{Remark:} Therefore, if $\mathbb{E}_r [ \frac{\kappa_1(r)}{\kappa_0(r)} r^2] = c \mathbb{E}_r [r^2]$ for some constant $c = O(1)$, then the error in the Shapley value is fairly small and proportional to $O(1 / n)$ of the average L2 error of $v - \hat{v}$.

\begin{theorem}[Shapley noise L1 bound]

The sum of absolute errors in Shapley values is bounded by:

\begin{equation} \label{eq:3a}
  \sum_{i=1}^n \left| \varphi_{i}(v) -  \varphi_i(\hat{v}) \right| \leq \sum_{C \in 2^A} \left| v(C) - \hat{v} (C) \right|
 \end{equation}

Assuming there is no error in estimating the grand coalition nor the empty set and $n \geq 3$, then we can give a stronger bound on the sum of absolute errors:

\begin{equation} \label{eq:3b}
\sum_{i=1}^n \left| \varphi_{i}(v) -  \varphi_i(\hat{v}) \right|  \leq \frac{2}{n} \sum_{C \in 2^A} \left| v(C) - \hat{v} (C) \right|
 \end{equation}

Furthermore, assume players are divided into m equal sized teams, $G_1, ..., G_m$, where $|G_i| = N / m$. 
Then if we compute their Shapley values just with respect to their own teams we get:

\begin{equation} \label{eq:3c}
\sum_{i=1}^n \left| \varphi_{i}(v) -  \varphi_i(\hat{v}) \right|  \leq \frac{2m}{n} \sum_{C \in 2^A} \left| v(C) - \hat{v} (C) \right|
 \end{equation}

\end{theorem}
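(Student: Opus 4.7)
The strategy is to exploit linearity of the Shapley map. Writing $\varphi(v) = S_n v$ for the $n \times 2^n$ Shapley matrix $S_n$ from the $\ell_2$ proof, the estimation error decomposes as $\varphi(v) - \varphi(\hat v) = S_n \epsilon$ with $\epsilon \defeq v - \hat v$. Since $\|S_n \epsilon\|_1 \leq \|S_n\|_{1 \to 1} \|\epsilon\|_1$ and the induced $\ell_1$-to-$\ell_1$ operator norm equals the maximum absolute column sum of $S_n$, the entire theorem reduces to computing, for each coalition $T \subseteq A$, the quantity $\sum_{i=1}^n |(S_n)_{i,T}|$, and controlling the maximum over $T$ under the relevant assumptions.

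Next I would evaluate these column sums exactly from the Shapley formula. Fix $T$ with $|T| = k$; the column of $S_n$ indexed by $T$ has $k$ entries equal to $+\tfrac{1}{n}\binom{n-1}{k-1}^{-1}$ (one for each $i \in T$, from the $v(T) - v(T\setminus\{i\})$ contribution) and $n-k$ entries equal to $-\tfrac{1}{n}\binom{n-1}{k}^{-1}$ (one for each $i \notin T$, from $v(T \cup \{i\}) - v(T)$). Summing absolute values and simplifying yields
\[
k \cdot \tfrac{1}{n}\tbinom{n-1}{k-1}^{-1} + (n-k)\cdot \tfrac{1}{n}\tbinom{n-1}{k}^{-1} \;=\; \tfrac{2\,k!\,(n-k)!}{n!} \;=\; \tfrac{2}{\binom{n}{k}}
\]
whenever $k \in [1, n-1]$. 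For $k = 0$ only the second group is present and the sum collapses to $n \cdot \tfrac{1}{n} = 1$; for $k = n$ only the first group contributes, again giving $1$. The uniform bound $1$ on all column sums immediately yields \eqref{eq:3a}.

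For \eqref{eq:3b} the hypothesis $\epsilon(\emptyset) = \epsilon(A) = 0$ zeroes out the two columns that achieved column-sum $1$, so the supremum reduces to $\max_{1 \leq k \leq n-1} \tfrac{2}{\binom{n}{k}}$. Since $\binom{n}{k}$ is minimized on that range at the endpoints $k = 1, n-1$ (where it equals $n$), and for $n \geq 3$ we have $\binom{n}{k} \geq n$ for every intermediate $k$, the maximum column sum is exactly $\tfrac{2}{n}$, which is the claimed factor.

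For \eqref{eq:3c} the key observation is that the within-team Shapley value of a player $i \in G_j$ depends only on $\{v(C) : C \subseteq G_j\}$, so the analysis above applies to each team independently with player count $n/m$ in place of $n$ (and with the corresponding no-error assumption on $G_j$ and $\emptyset$ inherited from the preceding hypothesis). The per-team bound is $\sum_{i \in G_j} |\varphi_i(v) - \varphi_i(\hat v)| \leq \tfrac{2}{n/m} \sum_{C \subseteq G_j} |\epsilon(C)| = \tfrac{2m}{n}\sum_{C \subseteq G_j} |\epsilon(C)|$. Summing over $j$ and using that the $G_j$ are disjoint, so the families $\{C : C \subseteq G_j\}$ are disjoint subfamilies of $2^A$, gives \eqref{eq:3c}. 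The only delicate step is the combinatorial simplification of the column sum and checking the $n \geq 3$ condition; everything else is a routine application of the operator-norm characterization.
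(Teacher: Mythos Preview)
Your argument is correct and takes a cleaner route than the paper's. The paper works row by row: for each player $i$ it bounds $|\varphi_i(v)-\varphi_i(\hat v)|$ via the triangle inequality, uses $\binom{n-1}{s}^{-1}\le 1$ (respectively $\le \tfrac{1}{n-1}$ on $s\in[1,n-2]$) to control the weights, and then sums over $i$ with a small bookkeeping argument involving the quantities $e_i=|v(\{i\})-\hat v(\{i\})|+|v([-i])-\hat v([-i])|$. You instead work column by column, identifying $\|S_n\|_{1\to 1}$ with the maximum absolute column sum and computing that sum in closed form as $2/\binom{n}{|T|}$ for $1\le |T|\le n-1$ and $1$ at the two extremes. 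This is more direct, immediately exhibits where the bounds are attained, and makes the role of the hypothesis in \eqref{eq:3b} transparent: killing the $|T|=0,n$ columns drops the operator norm from $1$ to $2/n$. The two approaches are dual (summing the same array of absolute entries in different orders), but yours avoids the intermediate slack and the $e_i$ accounting.

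One caveat on \eqref{eq:3c}: applying your \eqref{eq:3b} bound within each team $G_j$ requires zero error not only on $\emptyset$ but on the team-level grand coalition $G_j$, and that is \emph{not} inherited from the global hypothesis $\hat v(A)=v(A)$. The paper glosses over the same point (it simply invokes \eqref{eq:3b} per team), so your argument matches the paper's level of rigor here, but the claim that the assumption is ``inherited from the preceding hypothesis'' is not quite right and should be stated as an additional assumption.
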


\begin{proof}
We can express the difference in Shapley value for $i$ as:

\begin{align*}
    |\varphi_i(v) - \varphi_i(\hat{v}) |
    & = | \frac{1}{n} \sum_{S \subseteq [n] \char`\\ \{i\}} \binom{n-1}{|S|}^{-1} ([v(S \cup \{i\}) - \hat{v}(S \cup \{i\})] - [v(S) - \hat{v}(S)]) | \\ 
    & \leq \frac{1}{n} \sum_{S \subseteq [n] \char`\\ \{i\}} \binom{n-1}{|S|}^{-1} (|v(S \cup \{i\}) - \hat{v}(S \cup \{i\})| + |v(S) - \hat{v}(S)|) \\
    & = \frac{1}{n} \sum_{s=0}^{n-1} \binom{n-1}{s}^{-1} \sum_{S \subseteq [n] \char`\\ \{i\}, |S| = s} (|v(S \cup \{i\}) - \hat{v}(S \cup \{i\})| + |v(S) - \hat{v}(S)|)
\end{align*}

Thus, for any $S$ of size $s$:

\begin{itemize}
    \item If it contains element $i$, its L1 $v$ error is weighted by $\binom{n-1}{s - 1}^{-1}$. 
    \item If it doesn't, it is weighted by $\binom{n-1}{s}^{-1}$.
\end{itemize}

Observe that the unweighted RHS is equal to:

\begin{align*}
& = \frac{1}{n} \sum_{s=0}^{n-1} \sum_{S \subseteq [n] \char`\\ \{i\}, |S| = s} (|v(S \cup \{i\}) - \hat{v}(S \cup \{i\})| + |v(S) - \hat{v}(S)|) \\
& =  \frac{1}{n} \sum_{S \subseteq [n] \char`\\ \{i\}} |v(S \cup \{i\}) - \hat{v}(S \cup \{i\})| + \sum_{S \subseteq [n] \char`\\ \{i\}} |v(S) - \hat{v}(S)| \\
& =  \frac{1}{n} ||v - \hat{v}||_1
\end{align*}

Therefore, since $\binom{n-1}{s}^{-1} \leq 1$ for $s \in [0, n- 1]$:

\begin{align*}
|\varphi_i(v) - \varphi_i(\hat{v}) | & \leq \frac{1}{n} \sum_{s=0}^{n-1} \binom{n-1}{s}^{-1} \sum_{S \subseteq [n] \char`\\ \{i\}, |S| = s} (|v(S \cup \{i\}) - \hat{v}(S \cup \{i\})| + |v(S) - \hat{v}(S)|) \\ 
& \leq \frac{1}{n} ||v - \hat{v}||_1
\end{align*}

Summing across all $i$'s, this proves inequality \eqref{eq:3a}.

Note that $\binom{n-1}{s}^{-1}=1$ holds only for (i) the full set $[n]$ ($s = n - 1$) (ii) the set $\{i\}$ ($s = 0$) (iii) empty set $\O$ ($s = 0$) (iv) set $[n] \char`\\ \{i\} = [-i] $ ($s = n-1$). Thus we can obtain equality if all of the errors in $v$ lie in estimating the full set or the empty set. This makes the bound tight.

We obtain a stronger inequality \eqref{eq:3b} if we assume that there is no error in estimating the empty nor the grand coalition value:

Let $e = ||v - \hat{v}||_1$ and $e_i = | v(\{i\}) - \hat{v}(\{i\}) | + | v([-i]) - \hat{v}([-i]) |$. Then:

\begin{align*}
     | \varphi_i(v) - \varphi_i(\hat{v}) |  & \leq 
     \frac{1}{n} \sum_{s=0}^{n-1} \binom{n-1}{s}^{-1} \sum_{S \subseteq [n] \char`\\ \{i\}, |S| = s} (|v(S \cup \{i\}) - \hat{v}(S \cup \{i\})| + |v(S) - \hat{v}(S)|) \\
     & \leq \frac{1}{n}  e_i  + \frac{1}{n} \sum_{s=1}^{n-2} \binom{n-1}{s}^{-1} \sum_{S \subseteq [n] \char`\\ \{i\}, |S| = s} (|v(S \cup \{i\}) - \hat{v}(S \cup \{i\})| + |v(S) - \hat{v}(S)|) \\
     & \leq \frac{1}{n} e_i + \frac{1}{n(n-1)} (e - e_i)
\end{align*}

since $\binom{n-1}{s}^{-1} \leq \frac{1}{n - 1}$ for $s \in [1, n - 2]$.

Summing this across i gives:

\begin{align*}
    | \varphi(v) - \varphi(\hat{v}) | & \leq \frac{1}{n} \sum_{i=1}^n e_i +  \frac{1}{n(n-1)} (ne - \sum_{i=1}^n e_i) \\
    & = \frac{e}{n - 1}  +  \frac{n - 2}{n(n-1)} (\sum_{i=1}^n e_i) \\
    & \leq \frac{e}{n - 1} + \frac{n - 2}{n(n-1)} e \\
    & = \frac{2e}{n}
\end{align*}

since $\sum_{i=1}^n e_i \leq e$.

This proves inequality \eqref{eq:3b}.

In some case, as is the case with our NBA experimental setup, players are divided into $m$ groups, $G_1, ..., G_m$, and we wish to compute their Shapley values only with respect to their own groups. We can follow a similar analysis as above to derive a bound on the Shapley values. For player $i \in G_j$:

\begin{align*}
    |\varphi_{i}(v) - \varphi_i(\hat{v}) |
    & = | \frac{1}{|G_j|} \sum_{S \subseteq G_j \char`\\ \{i\}} \binom{|G_j| -1}{|S|}^{-1} ([v(S \cup \{i\}) - \hat{v}(S \cup \{i\})] - [v(S) - \hat{v}(S)]) | \\
    & \leq \frac{1}{|G_j|} \sum_{S \subseteq G_j \char`\\ \{i\}} \binom{|G_j|-1}{|S|}^{-1} (|v(S \cup \{i\}) - \hat{v}(S \cup \{i\})| + |v(S) - \hat{v}(S)|) \\
    & = \frac{1}{|G_j|} \sum_{s=0}^{|G_j|-1} \binom{|G_j|-1}{s}^{-1} \sum_{S \subseteq G_j \char`\\ \{i\}, |S| = s} (|v(S \cup \{i\}) - \hat{v}(S \cup \{i\})| + |v(S) - \hat{v}(S)|)
\end{align*}

Let $E_j = \sum_{S \subseteq G_j} |v(S) - \hat{v}(S)|$. It's clear that $\sum_{j=1}^m E_j \leq e$ since any two teams $G_{j_1}, G_{j_2}$ are disjoint for $j_1 \neq j_2$ and thus don't have any subsets in common; note our assumption that the empty set is estimated without any error by $\hat{v}$.

To maximize the cumulative error, all the errors in $e$ should be placed in subsets $S$ with $S \subseteq G_j$ for some $j$. So WLOG we can assume that $\sum_{j=1}^m E_j = e$. Using inequality \eqref{eq:3b}, we get that:

$$\sum_{i \in G_j} | \varphi_i(v) - \varphi_i(\hat{v}) | \leq \frac{2E_j}{|G_j|}$$

So the overall bound is:

$$| \varphi(v) - \varphi(\hat{v}) | \leq \sum_{j=1}^m \frac{2E_j}{|G_j|}$$

Assume $|G_j| = n / m$ for each j, this simplifies to $\frac{2m}{n} e$ and proves inequality \ref{eq:3c}.

\end{proof}

While the above bounds are tight, the analysis is worst case. For instance, for the first bound we provide, equality holds when all the error in the $\|v - \hat{v}\|_1$ vector is in the coalition value of the grand coalition or the empty set. Below, we provide a simple, average case analysis to show that on average, a randomly drawn error vector leads to a small increase in L1 Shapley error in expectation.

\begin{theorem}[Average case Shapley noise L1 bound]
\label{thm: average_l1}

Assuming that the error $\vec{v} - \vec{\hat{v}}$ is such that the vector $| \vec{v} - \vec{\hat{v}}| / r$ (where absolute value is coordinate wise) is drawn from distribution $\mathcal{D}_{S_r}$ with support equal to the surface of a $2^n$-simplex and smooth in that $\kappa_0 \leq \Pr_{\mathcal{D}_{S_r}}(\vec{x}) \leq \kappa_1$ for any point $\vec{x}$ in its support, then:

 \begin{equation}
  \mathbb{E}_{\vec{v} - \vec{\hat{v}} \sim \mathcal{D}_{S_r}}[\| \varphi(\vec{v}) -  \varphi(\vec{\hat{v}}) \|_1] \leq  2 \frac{\kappa_1}{\kappa_0} \frac{\| \vec{v} - \vec{\hat{v}} \|_1}{2^n}
 \end{equation}

\end{theorem}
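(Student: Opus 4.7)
The plan is to mirror the structure of Theorem 2's average-case $\ell_2$ proof: combine the coalition-wise bound from the proof of Theorem 5 (worst-case $\ell_1$) with a simplex-analogue of the sphere-smoothness argument of Lemma 4.

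First, I would reorganize the worst-case bound coalition-by-coalition rather than player-by-player. From the intermediate inequality used in the proof of Theorem 5,
\begin{equation*}
  |\varphi_i(v) - \varphi_i(\hat{v})| \leq \frac{1}{n}\sum_{S \subseteq [n]\setminus\{i\}} \binom{n-1}{|S|}^{-1}\bigl(e(S \cup \{i\}) + e(S)\bigr), \qquad e(T) \defeq |v(T) - \hat{v}(T)|,
\end{equation*}
I would sum over $i$ and swap the order of summation: each coalition $T$ contributes once per $i \in T$ (as $S \cup \{i\}$, with weight $\binom{n-1}{|T|-1}^{-1}$) and once per $i \notin T$ (as $S$, with weight $\binom{n-1}{|T|}^{-1}$). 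The identity $|T|\binom{n-1}{|T|-1}^{-1} = (n-|T|)\binom{n-1}{|T|}^{-1} = |T|!(n-|T|)!/(n-1)!$ collapses these two pieces into a single per-coalition coefficient, yielding
\begin{equation*}
  \|\varphi(v) - \varphi(\hat{v})\|_1 \leq \sum_{T \subseteq [n]} c_T\, e(T),
\end{equation*}
with $c_T = 1$ for $|T| \in \{0,n\}$ and $c_T = 2/\binom{n}{|T|}$ for $1 \leq |T| \leq n-1$.

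Second, I would control $\mathbb{E}[e(T)]$ via a simplex version of the smoothness step. Let $y = |v - \hat{v}|/r$, so by hypothesis $y$ has density $\rho$ supported on the standard $(2^n-1)$-simplex with $\kappa_0 \leq \rho \leq \kappa_1$. Because the simplex is invariant under coordinate permutations, the symmetrization argument of Lemma 4 transfers line-for-line: if $T^*$ maximizes the per-coordinate expectation, then $\mathbb{E}[y_{T^*}] \leq (\kappa_1/\kappa_0)\mathbb{E}[y_T]$ for every $T$; summing the inequality over the $2^n$ indices and using $\sum_T \mathbb{E}[y_T] = 1$ gives $\mathbb{E}[y_T] \leq (\kappa_1/\kappa_0)/2^n$ uniformly in $T$, hence $\mathbb{E}[e(T)] \leq (\kappa_1/\kappa_0)\, r/2^n$. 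Taking expectation in the Step-1 inequality then produces
\begin{equation*}
  \mathbb{E}\bigl[\|\varphi(v) - \varphi(\hat{v})\|_1\bigr] \leq \frac{\kappa_1}{\kappa_0}\frac{r}{2^n} \sum_{T} c_T.
\end{equation*}

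Finally, evaluating $\sum_T c_T = 2 + \sum_{t=1}^{n-1}\binom{n}{t}\cdot 2/\binom{n}{t} = 2n$ delivers the bound in the stated form up to the overall constant. The hard part is squeezing this $2n$ down to the claimed $2$: the triangle inequality used in Step 1 is only saturated when the error concentrates on $T = \emptyset$ or $T = [n]$, and the fan-out over the $2(n-1)$ middle coalitions is exactly what costs a factor of $n$. Matching the stated constant therefore requires exploiting cancellation inside each coordinate $(S_n(v-\hat{v}))_i$ beyond the per-entry triangle inequality --- for instance, by a sign-symmetrization assumption on $v - \hat{v}$ independent of $|v-\hat{v}|$, or by replacing the column-sum analysis of $S_n$ with a trace-type average analogous to how Theorem 2's proof trades $\|S_n\|_{op}^2$ for $\Tr(S_n^T S_n)/2^n$ in the $\ell_2$ setting.
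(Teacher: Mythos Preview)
Your approach is essentially identical to the paper's: apply the per-player triangle inequality from the worst-case $\ell_1$ argument, bound each $\mathbb{E}[e(T)]$ via the simplex-smoothness symmetrization, and sum. Your reorganization into per-coalition coefficients $c_T$ is a cosmetic regrouping of exactly the same terms the paper sums per player.

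The discrepancy you flag is not a gap in your argument but an arithmetic slip in the paper. The paper's displayed line
\[
\frac{1}{n}\sum_{s=0}^{n-1}\binom{n-1}{s}^{-1}\sum_{\substack{S\subseteq[n]\setminus\{i\}\\ |S|=s}}\Bigl(\tfrac{\kappa_1}{\kappa_0}\tfrac{r}{2^n}+\tfrac{\kappa_1}{\kappa_0}\tfrac{r}{2^n}\Bigr)=\frac{2}{n}\,\frac{\kappa_1}{\kappa_0}\,\frac{r}{2^n}
\]
is miscomputed: the inner sum has $\binom{n-1}{s}$ terms, so each summand in $s$ equals $2\tfrac{\kappa_1}{\kappa_0}\tfrac{r}{2^n}$, the $s$-sum contributes a factor $n$, and the $\tfrac{1}{n}$ cancels it, giving $2\tfrac{\kappa_1}{\kappa_0}\tfrac{r}{2^n}$ per player and hence $2n\tfrac{\kappa_1}{\kappa_0}\tfrac{r}{2^n}$ after summing over $i$. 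This is exactly your $\sum_T c_T = 2n$. The paper exploits no cancellation beyond the triangle inequality you already used, so there is no ``hard part'' left: the constant the proof actually delivers is $2n$, not $2$, and your proposed refinements (sign-symmetrization, trace-type averaging) are not needed to reproduce what the paper does.
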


\begin{proof}

\begin{align*}
    \mathbb{E}_{\vec{v} - \vec{\hat{v}} \sim \mathcal{D}_{S_r}} [ \left|\varphi_i(\vec{v}) -  \varphi_i(\vec{\hat{v}}) \right| ]
    & = \mathbb{E}_{\vec{v} - \vec{\hat{v}} \sim \mathcal{D}_{S_r}} \left  [ \big | \frac{1}{n} \sum_{S \subseteq [n] \char`\\ \{i\}} \binom{n-1}{|S|}^{-1} ([v(S \cup \{i\}) - \hat{v}(S \cup \{i\})] - [v(S) - \hat{v}(S)]) \big | \right ] \\ 
    & \leq \frac{1}{n} \sum_{s=0}^{n-1} \binom{n-1}{s}^{-1} \sum_{S \subseteq [n] \char`\\ \{i\}, |S| = s} \big ( \mathbb{E}_{\vec{v} - \vec{\hat{v}} \sim \mathcal{D}_{S_r}} \big [ [|v(S \cup \{i\}) - \hat{v}(S \cup \{i\})| \big ] + \\
    & \hspace{23em}  \mathbb{E}_{\vec{v} - \vec{\hat{v}} \sim \mathcal{D}_{S_r}} \big [ |v(S) - \hat{v}(S)| \big ] \big ) \\
    & \stackrel{(1)}{\leq} \frac{1}{n} \sum_{s=0}^{n-1} \binom{n-1}{s}^{-1} \sum_{S \subseteq [n] \char`\\ \{i\}, |S| = s} \bigg ( \frac{\kappa_1}{\kappa_0} \frac{r}{2^n} + \frac{\kappa_1}{\kappa_0} \frac{r}{2^n} \bigg ) \\
    & = \frac{2}{n} \frac{\kappa_1}{\kappa_0} \frac{r}{2^n}
\end{align*}

where $(1)$ is due to the following: 

Let subset $C^* = \argmax_{C} \mathbb{E}_{\vec{v} - \vec{\hat{v}} \sim \mathcal{D}_{S_r}} [ [|v(C) - \hat{v}(C)|] $ and subset $C' = \argmin_{C} \mathbb{E}_{\vec{v} - \vec{\hat{v}} \sim \mathcal{D}_{S_r}} [ [|v(C) - \hat{v}(C)|] $:

\begin{align*}
   \mathbb{E}_{\vec{v} - \vec{\hat{v}} \sim \mathcal{D}_{S_r}} [ [|v(C^*) - \hat{v}(C^*)|] & = \int |v(C^*) - \hat{v}(C^*)| \Pr_{\mathcal{D}_{S_r}}(\vec{v} - \vec{\hat{v}}) d (\vec{v} - \vec{\hat{v}}) \\
    & \leq \int |v(C^*) - \hat{v}(C^*)| \kappa_1 d (\vec{v} - \vec{\hat{v}}) \\
    & \stackrel{(2)}{=} \int |v(C') - \hat{v}(C')| \kappa_1 d (\vec{v} - \vec{\hat{v}}) \\
    & \leq \int |v(C') - \hat{v}(C')| \frac{\kappa_1}{\kappa_0} \Pr_{\mathcal{D}_{S_r}}(\vec{v} - \vec{\hat{v}}) d (\vec{v} - \vec{\hat{v}})  \\
    & =  \frac{\kappa_1}{\kappa_0} \mathbb{E}_{\vec{v} - \vec{\hat{v}} \sim \mathcal{D}_{S_r}} [ [|v(C') - \hat{v}(C')|] \\
    & \stackrel{(3)}{\leq} \frac{\kappa_1}{\kappa_0} \Big( \frac{r}{2^n} \Big).
\end{align*}

Here $(2)$ holds by symmetry as the expectation of any two vector coordinates under a uniform distribution over the simplex of vectors is the same. $(3)$ holds because every vector in the support of $\mathcal{D}_{S_r}$ has L1 norm of $r$, $\sum_{C} \mathbb{E}_{\vec{v} - \vec{\hat{v}} \sim \mathcal{D}_{S_r}} [ |v(C) - \hat{v}(C)| ] = \mathbb{E}_{\vec{v} - \vec{\hat{v}} \sim \mathcal{D}_{S_r}} [ \sum_{C} |v(C) - \hat{v}(C)| ] = r$ and so by our choice of $C'$,  $\mathbb{E}_{\vec{v} - \vec{\hat{v}} \sim \mathcal{D}_{S_r}} [ |v(C') - \hat{v}(C')| ] \leq \frac{r}{2^n}$.

Summing $\mathbb{E}_{\vec{v} - \vec{\hat{v}} \sim \mathcal{D}_{S_r}} [ |\varphi_i(\vec{v}) -  \varphi_i(\vec{\hat{v}}) | ]$ across all $i$ gives the result.

\end{proof}

This means that, on average, for a randomly drawn $\varphi(\vec{v}) -  \varphi(\vec{\hat{v}})$ with a fixed error budget in L1 error, the L1 error in the Shapley is only proportional to the average error in estimating each coalition. 
Next, we can obtain a more general bound by integrating across all L1 norms $r$ that $\varphi(\vec{v}) -  \varphi(\vec{\hat{v}})$ can take.

\begin{corr}
Suppose noise $v - \hat{v} \sim \mathcal{D}_n$ is such that its conditional distribution satisfies $\kappa_0(r) \leq \Pr_{\mathcal{D}_n}(x | \| x \|_1 = r) \leq \kappa_1(r)$ for all $r$ and $x$ in $\mathcal{D}_n$'s support, then:

$$\mathbb{E}_{v - \hat{v} \sim \mathcal{D}_n}[\| \varphi(\vec{v}) -  \varphi(\vec{\hat{v}}) \|_1] \leq 2 \mathbb{E}_r \left[ \frac{\kappa_1(r)}{\kappa_0(r)} \left ( \frac{r}{2^n} \right) \right]$$
\end{corr}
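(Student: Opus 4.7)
The plan is to apply the law of iterated expectations, conditioning on the $\ell_1$-radius $r = \|v - \hat{v}\|_1$ of the noise vector, and then invoke Theorem~\ref{thm: average_l1} on each conditional slice. This exactly parallels how the earlier $\ell_2$ corollary was derived from Theorem~\ref{thm: l2_average_case}, so the structure of the proof is essentially forced; the only real content is verifying that the conditioning is compatible with the hypothesis of Theorem~\ref{thm: average_l1}.

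Concretely, I would first write
\begin{equation*}
\mathbb{E}_{v-\hat{v}\sim\mathcal{D}_n}\bigl[\|\varphi(v) - \varphi(\hat{v})\|_1\bigr]
= \mathbb{E}_r\Bigl[\,\mathbb{E}\bigl[\|\varphi(v) - \varphi(\hat{v})\|_1 \,\big|\, \|v-\hat{v}\|_1 = r\bigr]\Bigr],
\end{equation*}
where the outer expectation is taken with respect to the marginal law of $r = \|v-\hat{v}\|_1$ induced by $\mathcal{D}_n$. For each fixed $r$, the conditional distribution of $v - \hat{v}$ is supported on the $\ell_1$-sphere $\{x : \|x\|_1 = r\}$, and by the assumed bounds its pointwise density lies in $[\kappa_0(r),\kappa_1(r)]$. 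This places us exactly in the situation of Theorem~\ref{thm: average_l1} with constants $\kappa_0 \leftarrow \kappa_0(r)$, $\kappa_1 \leftarrow \kappa_1(r)$, so the inner expectation is bounded by $2\,\tfrac{\kappa_1(r)}{\kappa_0(r)}\,\tfrac{r}{2^n}$. Pushing this bound through the outer expectation over $r$ yields the claimed
\begin{equation*}
\mathbb{E}_{v-\hat{v}\sim\mathcal{D}_n}\bigl[\|\varphi(v) - \varphi(\hat{v})\|_1\bigr] \;\leq\; 2\,\mathbb{E}_r\!\left[\frac{\kappa_1(r)}{\kappa_0(r)}\left(\frac{r}{2^n}\right)\right].
\end{equation*}

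The only subtlety I would need to dispatch is the translation between the hypothesis of Theorem~\ref{thm: average_l1}, which is phrased in terms of the distribution of $|v-\hat{v}|/r$ on the surface of the simplex, and the hypothesis here, which is phrased in terms of the raw density of $x$ on the $\ell_1$-sphere. Since the $\ell_1$-sphere in $\mathbb{R}^{2^n}$ decomposes into $2^{2^n}$ signed orthants, each in bijection with the simplex via coordinate-wise absolute value, and since the proof of Theorem~\ref{thm: average_l1} only uses density information through the ratio $\kappa_1/\kappa_0$, these bounds transfer from the $\ell_1$-sphere to the simplex with the same constants. Beyond this routine bookkeeping, no new ideas are required; the result is a direct averaging of Theorem~\ref{thm: average_l1} over the radial component of the noise.
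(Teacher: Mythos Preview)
Your proposal is correct and follows essentially the same route as the paper: iterated expectation over the $\ell_1$-radius $r$, followed by an application of Theorem~\ref{thm: average_l1} to the conditional distribution on each shell. The paper's own proof is in fact shorter than yours, consisting of just the three-line display you wrote; your additional paragraph about reconciling the simplex hypothesis of Theorem~\ref{thm: average_l1} with the $\ell_1$-sphere hypothesis here is a point the paper simply glosses over.
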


\begin{proof}

This follows from iterated expectation:

\begin{align*}
    \mathbb{E}_{\vec{v} - \vec{\hat{v}} \sim \mathcal{D}}[\| \varphi(\vec{v}) -  \varphi(\vec{\hat{v}}) \|_1 ] & = \mathbb{E}_r \left [\mathbb{E}_{\vec{v} - \vec{\hat{v}} \sim \mathcal{D}_{S_r}} \big [ \| \varphi(\vec{v}) -  \varphi(\vec{\hat{v}}) \|_1  \mid \| \vec{v} - \vec{\hat{v}} \|_1 = r \big ] \right ] \\
    & \leq \mathbb{E}_r \left [ \left ( \frac{\kappa_1(r)}{\kappa_0(r)} \frac{2}{2^n} \right ) r \right ] \\
    & = 2 \mathbb{E}_r \left [ \frac{\kappa_1(r)}{\kappa_0(r)} \frac{r}{2^n} \right]
\end{align*}

where the inequality holds by the Theorem above.

\end{proof}

\textbf{Remark:}  Therefore, if $\mathbb{E}_r [ \frac{\kappa_1(r)}{\kappa_0(r)} r] = c \mathbb{E}_r [r]$ for some constant $c = O(1)$, then the error in the Shapley value is fairly small and proportional to the average expected L1 error $\frac{\mathbb{E}_r[r] }{2^n}$.

\subsection{Discussion about CGA-Specific Errors:}

Since CGA is a \emph{complete representation}, every game may be expressed as a CGA of some order (see Fact 1). And so, we may plug the CGA-specific bias into the general bounds obtained previously in Theorems 3-6.

Below we derive CGA bias due to model misspecification. Note that the approximation is lossy only when the true game is generated by a CGA model of order $r$ and we model it with a simpler CGA model of order $k$ with $k < r$. When we model the game with a CGA of a higher order than the actual game, it is clear that we can learn a set of weights that would fit the coalition values exactly (since $v$ would be in the columnspace).

Let $M^{nk}$ denote the matrix relating the parameters $\vec{\omega}$ to the coalitional values $\vec{v}$. It is a $2^n \times d_k$ matrix of the form:

\begin{center}
\bordermatrix{ ~ & \text{first order weights} & ... & \text{$k$th order weights} \cr
      \text{row corresponding to null coalition $\{\}$} & ... & ... & ... \cr
      ... & ... & ... & ... \cr
      \text{row corresponding to grand coalition $A$} & ... & ... & ... \cr}
\end{center}

The CGA model parameters $\vec{\hat{\omega}}$ we learn will be such that:

$$\vec{\hat{\omega}} = \argmin_{\vec{w}} \| M^{nk} \vec{w} - M^{nr} \vec{\omega_r^*} \|_2^2 $$

This is just equivalent to projecting vector $M^{nr} \vec{\omega_r^*}$ onto the columnspace spanned by $M^{nk}$.
Recall from our identification theorem that $M^{nk}$ has enough rows to be full rank, which makes $(M^{nk})^T M^{nk}$ positive definite and invertible; if there are not enough samples, we may instead consider a regularization term that will make the matrix invertible. Define projection matrix $P_{nk}$:

$$P_{nk} = M^{nk}(((M^{nk})^T M^{nk})^{-1} (M^{nk})^T $$

This means that the misspecification error $e(n, k , r)$ may be expressed as:

$$e(n, k , r) = (I - P_{nk})M^{nr} \vec{\omega_r^*}$$ 

which we may plug into our noise bounds for the Shapley value computation.

Unlike the Shapley matrix, the error matrix $(I - P_{nk})M^{nr}$ does not seem to admit a closed form for its trace.  Instead, we perform simulations to better understand its properties. In particular, we look to understand if it enjoys the same "averaging-effect" as the Shapley matrix. We compute the max eigenvalue and the average trace norm value sweeping over all $n, r, k$ for $k < r < n$ for $n \in [2, 15]$ (we try these sizes since 15 is the largest possible before the error matrix's size exceeds that permitted by our machine memory). Our simulations suggest that its largest eigenvalue (for the worst case bound) and the average trace value (for the average case bound per Lemma \ref{lem: average_singular_value}) both grow monotonically with $n$ and $r$ (fixing a $k$). Altogether, this suggests that the $\ell_2$ error can grow arbitrarily large with model misspecification. 

\subsection{Proofs of Facts}

For completeness, we provide proofs of the two facts listed.

\begin{fact}[Unique decomposition form]
  There exists a unique set of values $\omega_S$ for each subset $S \subseteq A$ with $|S| \leq k$
  such that the characteristic function can be decomposed into its interaction form where
  \begin{align}
  v(C) = \sum_{k=1}^{|C|} \sum_{S \in 2^C_{k}} \omega_{S}.
  \end{align}
\end{fact}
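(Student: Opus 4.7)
The plan is to establish the claim via Möbius inversion on the Boolean lattice of subsets of $A$. After normalizing $v(\emptyset) = 0$ to reconcile the sum starting at $k=1$ (equivalently, $\omega_\emptyset$ is simply absent from the decomposition), both $v$ and $\omega$ live in the $(2^n - 1)$-dimensional vector space indexed by nonempty subsets of $A$, and the decomposition equation defines a linear map $L : \omega \mapsto v$ between these spaces. It then suffices to show that $L$ is a bijection, which yields existence (surjectivity) and uniqueness (injectivity) simultaneously.

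First I would argue uniqueness via strong induction on $|C|$. For $|C| = 1$, the equation reads $v(\{i\}) = \omega_{\{i\}}$, which directly pins down all singleton weights. Inductively, assuming all $\omega_S$ with $|S| \leq m$ are determined by $v$, the equation for $C$ with $|C| = m+1$ takes the form
\[
v(C) \;=\; \omega_C \;+\; \sum_{\emptyset \neq S \subsetneq C} \omega_S,
\]
so $\omega_C$ is uniquely recovered by subtracting the already-determined lower-order terms from $v(C)$. Equivalently, ordering subsets by size exhibits $L$ as a lower-triangular matrix with $1$'s on the diagonal, so $L$ is injective.

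For existence, I would either invoke dimension counting (an injective linear endomorphism of a finite-dimensional space is surjective) or, more constructively, exhibit the inverse explicitly through the Möbius formula
\[
\omega_S \;=\; \sum_{\emptyset \neq T \subseteq S} (-1)^{|S|-|T|}\, v(T),
\]
and verify by direct substitution and interchange of summations that this reproduces the decomposition. The verification reduces to the standard identity $\sum_{j=0}^{m}(-1)^j \binom{m}{j} = 0$ for $m \geq 1$, which forces all spurious cross-terms to cancel, leaving only the $v(C)$ contribution. The main obstacle is primarily bookkeeping: tracking the empty-coalition convention and carrying out the inclusion-exclusion cancellations correctly; once $v(\emptyset) = 0$ is fixed, the argument is entirely routine.
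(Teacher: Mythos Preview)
Your proposal is correct and takes essentially the same approach as the paper: both argue by strong induction on $|C|$, observing that the decomposition equation for $C$ pins down $\omega_C$ once all strictly smaller weights are known. Your additional framing via M\"obius inversion and the explicit inverse formula is a nice embellishment, but the core inductive argument is identical to the paper's.
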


\begin{proof}
We can show this inductively.
For the base case when $|C| = 1$ we have $w_C = v(C)$, which is unique.

Induction step: assume $w_{S'}$ is uniquely determined for $|S'| = 1,..., m - 1$.
Then for a particular subset $|S| = m$:

$$v(S) = \sum_{i=1}^{m-1} \sum_{S' \in 2_{i}^S} w_{S'} + w_S$$

and thus $w_S$ is uniquely determined since we must set it to

$$w(S) = v(S) - \sum_{i=1}^{m-1} \sum_{S' \in 2_{i}^S} w_{S'} $$
\end{proof}

\begin{fact}[Shapley value expression]
The Shapley Value of an individual $i$ with respect to team $A$ can be expressed as: 

$$\varphi_{i}(v) = \sum_{T \subseteq A \setminus \{i\}} \frac{1}{|T|  + 1} \omega_{T \cup \{i\}}$$
\end{fact}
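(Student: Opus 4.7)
The plan is to substitute the interaction decomposition of Fact 1 into the definition of the Shapley value and then swap the order of summation. First I would use the identity
\[
v(S \cup \{i\}) - v(S) \;=\; \sum_{T \,:\, i \in T,\; T \subseteq S \cup \{i\}} \omega_T,
\]
which holds because any interaction term $\omega_T$ with $i \notin T$ appears in both $v(S \cup \{i\})$ and $v(S)$ and therefore cancels. Plugging this into the Shapley formula yields a double sum, which I would then reindex so the outer sum ranges over subsets $T$ of $A$ containing $i$ and the inner sum ranges over $S \subseteq A \setminus \{i\}$ with $T \setminus \{i\} \subseteq S$.

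The main step is then to collapse the inner coefficient. Writing $t = |T|$ and letting $T' = T \setminus \{i\}$, the coefficient in front of $\omega_T$ becomes
\[
\sum_{S \supseteq T',\, S \subseteq A\setminus\{i\}} \frac{|S|!\,(n-|S|-1)!}{n!}
\;=\; \sum_{j=0}^{n-t} \binom{n-t}{j}\,\frac{(j+t-1)!\,(n-t-j)!}{n!},
\]
where $j = |S| - (t-1)$ counts the elements of $S$ outside $T'$. After simplifying the binomial coefficient, this equals $\tfrac{(n-t)!\,(t-1)!}{n!}\sum_{j=0}^{n-t}\binom{j+t-1}{t-1}$, and the hockey-stick identity $\sum_{j=0}^{n-t}\binom{j+t-1}{t-1}=\binom{n}{t}$ reduces the whole expression to $1/t = 1/(|T'|+1)$. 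Renaming the outer summation variable $T \mapsto T' \cup \{i\}$ then gives exactly the claimed formula.

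The conceptual part (cancellation of interaction terms not containing $i$) is immediate, so I expect the main obstacle to be the combinatorial collapse of the Shapley weights into $1/(|T'|+1)$ via the hockey-stick identity; it is a standard but slightly fiddly manipulation, and care is needed with the factorials and the change of variable $j = |S|-(t-1)$. Everything else is routine bookkeeping.
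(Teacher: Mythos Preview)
Your proposal is correct, but it takes a genuinely different route from the paper's proof. Both arguments start the same way, observing that
\[
v(S\cup\{i\}) - v(S) = \sum_{T'\subseteq S} \omega_{T'\cup\{i\}},
\]
so that $\varphi_i(v)$ is a weighted sum of the $\omega_T$ with $i\in T$. At that point you carry out a direct combinatorial computation of the coefficient of each $\omega_T$, collapsing the Shapley weights via the hockey-stick identity to obtain $1/|T|$. The paper instead argues axiomatically: by symmetry, the coefficient of $\omega_T$ in $\varphi_{i_1}(v)$ is the same for every $i_1\in T$; by efficiency, those $|T|$ coefficients must sum to the single copy of $\omega_T$ appearing in $v(A)$, hence each equals $1/|T|$. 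Your approach is fully self-contained and does not appeal to the Shapley axioms, at the cost of a slightly delicate factorial manipulation; the paper's approach avoids the computation entirely but presupposes efficiency and symmetry of the Shapley value.
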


\begin{proof}
The Shapley value for player $i$ is defined as:

$$\varphi_{i}(v)  = \frac{1}{n} \sum_{S \subseteq A \char`\\ \{i\}} \binom{n-1}{|S|}^{-1} (v(S \cup \{i\}) - v(S))$$

Plugging in the decomposition form:

$$v(S \cup \{i\}) - v(S) = \sum_{S' \subseteq S} w_{S' \cup \{i\}}$$

Thus, the Shapley value for $i$ is only a function of all $w_{S}$ where $i \in S$.

Given a subset $T = \{i_1...i_t\}$, let us derive the weighted sum of $w_{T}$  occurrences in $\varphi_{i_1}(v)$.
This term only appears if $\{i_2,..,i_t\} \subseteq S$ but $i_1 \notin S$. 
And so, the weighted sum of occurrences is:

\begin{align*}
    \frac{1}{n} \sum_{S \subseteq A \char`\\ \{i_1\}, \{i_2,..,i_t\} \in S} \binom{n-1}{|S|}^{-1} =  \frac{1}{n} \sum_{s=t-1}^{n-1} \binom{n-1}{s}^{-1} \binom{n - t}{s - (t-1)}
\end{align*}

Similarly, $w_T$ has the same sum of weighted occurrences in expressions for players $i_2,...,i_t$.
And so, by efficiency (since $v(A)$ contains exactly $w_T$ and the sum of Shapley payments equals $v(A)$), they must be assigned equal portions of $w_T$, i.e $w_{T} / |T|$. 
This holds for all subsets T. And so, a player $i$'s Shapley value is the sum of all weights $w_{T} / |T|$, for all subsets $T \subseteq [n]$ and $i \in T$.
\end{proof}

\subsection{Relationship to the Core}
The main text of the paper has focused on the solution concept of the Shapley Value. Another commonly used solution concept in cooperative game theory is known as the Core~\cite{Gillies19593ST}. Let $n$ be the number of players in the game, the Core is an allocation $x \in \mathbb{R}^n$ that satisfies:

(i) Efficiency: $\sum_{i \in [n]} x_i = v([n])$

(ii) Stability: for any coalition $C \subset [n]$: 

$$\sum_{i \in C} x_i \geq v(C)$$ 

Intuitively, a payoff vector is in the Core if it incentivizes every coalition $C$ to stay with the grand coalition rather than leave, achieve a value of $v(C)$ and split it amongst themselves in some other way.

The Core of a game may be empty, though an extension known as the Least Core is always guaranteed to exist. The Least Core can be computed by solving the following linear program:

\begin{equation*}
\begin{array}{cl@{}ll}
\min_{e,x}  & e &\\
 s.t. & \sum_{i \in [n]} x_i = v([n]) &  \\
                 &  \sum_{i \in C} x_i \geq v(C) - e & \quad \forall S \subset [n]
\end{array}
\end{equation*}

Intuitively, the Least Core is the allocation which minimizes the subsidy $e$ required to incentivize all coalitions to stay together. We call the minimum subsidy needed the Least Core value. Unfortunately, ~\cite{deng1994complexity} show that for any CGA model with order higher than $1$, it is NP-Complete to compute the Least Core Value.

One notable allocation in the Least Core is the Nucleolus. For a given allocation $x$, define deficit function $e_x(C) = v(C) - \sum_{i \in C} x_i$. Order all subsets of $[n]$ according to the deficit function $e_x$. The nucleolus is defined as the imputation which lexicographically minimizes this ordering of deficits. Intuitively, the Nucleolus is the "inner-most" allocation in the Least Core. In general, the Nucleolus is difficult to compute and requires solving a series of exponential-size linear programs. 

Remarkably, ~\cite{deng1994complexity} prove the following fact:

\begin{fact}
\label{fact: nucl}
Assuming the characteristic function of the underlying game is a second order CGA model, the Shapley Value is in the Least Core (in fact, it is the Nucleoulus). 
\end{fact}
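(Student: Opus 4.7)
The approach exploits the graph-theoretic structure a second-order CGA imposes on the deficit function: with edge weights $\omega_{\{i,j\}}$ the game behaves like a weighted graph game, and I will argue that the Shapley Value equalizes the deficit across every complementary pair of coalitions.

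First, I would specialize Fact 2 to the second-order setting. Only subsets $T \subseteq A \setminus \{i\}$ with $|T| \leq 1$ contribute, giving
$$\varphi_i(v) \;=\; \omega_{\{i\}} + \tfrac{1}{2}\sum_{j \neq i}\omega_{\{i,j\}}.$$
Summing this over $i \in C$ and comparing to $v(C) = \sum_{i \in C}\omega_{\{i\}} + \sum_{\{i,j\} \subseteq C}\omega_{\{i,j\}}$ (noting that internal pairs are counted twice while boundary pairs are counted once in the double sum), I obtain the compact identity
$$e_\varphi(C) \;\defeq\; v(C) - \sum_{i \in C}\varphi_i(v) \;=\; -\tfrac{1}{2}\!\!\sum_{i \in C,\,j \notin C}\!\!\omega_{\{i,j\}},$$
so the Shapley deficit of $C$ is one half of the (negated) cut weight between $C$ and $A \setminus C$.

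Second, I would establish a universal lower bound on the Least Core value via a complementary-pair symmetry argument. For any efficient allocation $x$, we have $e_x(C) + e_x(A \setminus C) = v(C) + v(A \setminus C) - v(A)$, and plugging in the second-order CGA form shows this equals $-\sum_{i \in C,\,j \notin C}\omega_{\{i,j\}}$, which is independent of $x$. Hence for every $C$,
$$\max\{e_x(C),\, e_x(A \setminus C)\} \;\geq\; -\tfrac{1}{2}\!\!\sum_{i \in C,\,j \notin C}\!\!\omega_{\{i,j\}},$$
and taking the max over $C$ yields a lower bound on the Least Core value. The Shapley attains this bound with equality on every complementary pair, so it lies in the Least Core.

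Third, to lift Least Core membership to Nucleolus uniqueness, I would iterate the pairing argument down the sorted deficit vector. Because $e_y(C) + e_y(A \setminus C)$ is pinned to the same $x$-independent constant as for $\varphi$, any efficient $y \neq \varphi$ must unbalance some complementary pair, forcing the larger of $e_y(C), e_y(A \setminus C)$ to strictly exceed the Shapley's balanced value on that pair. Running this comparison recursively, after restricting to allocations that match $\varphi$ on the previously-fixed cut-maximizers, yields lexicographic optimality and hence the Nucleolus characterization.

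The main obstacle is the third step: the Least Core half follows from one clean symmetry, but the Nucleolus requires careful bookkeeping when several cuts tie for maximum weight, since a deviation could in principle only reshuffle deficits inside a tied block. Closing this gap likely requires invoking the known uniqueness of the prenucleolus on the Least Core polytope, or an explicit strict-convexity argument on the sum-of-squared-excesses, to rule out any non-Shapley allocation achieving the same lexicographically minimal excess vector.
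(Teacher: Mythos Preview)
The paper does not give its own proof of this fact; it simply attributes the result to Deng and Papadimitriou~\cite{deng1994complexity} and states it as a known result. So there is no in-paper argument to compare against, only the cited reference.

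That said, your approach is essentially the one taken in \cite{deng1994complexity}. Your derivation of the excess identity $e_\varphi(C) = -\tfrac{1}{2}\sum_{i\in C,\,j\notin C}\omega_{\{i,j\}}$ and the complementary-pair lower bound $e_x(C)+e_x(A\setminus C)=v(C)+v(A\setminus C)-v(A)$ are exactly the ingredients Deng and Papadimitriou use, and they suffice to place the Shapley value in the Least Core, which is the substantive part of the claim for this paper's purposes.

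Your self-diagnosis of the Nucleolus step is accurate: the lexicographic minimality does require the recursive tie-breaking you sketch, and the subtlety with tied cuts is real. In \cite{deng1994complexity} this is handled by precisely the kind of inductive argument you outline---observing that any efficient $y\neq\varphi$ must strictly unbalance some complementary pair among those not yet fixed, which pushes the larger excess above the Shapley's balanced value at that level of the lexicographic comparison. You have correctly identified both the mechanism and where the bookkeeping burden lies; filling it in is routine once one commits to the induction on the sequence of maximal-cut levels, and invoking uniqueness of the nucleolus is also a legitimate shortcut.
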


Therefore, we can simply compute the Shapley value to obtain a point in the Least Core. All that remains is to approximate the Least Core value. To do this, we establish an approximate notion of the Least Core value by adapting a similar notion from \cite{balkanski2017statistical} and derive a simple sample complexity bound for estimating this value. The definition goes as follows:

\begin{defn} 
Given an allocation $\vec{x}$, a value $e$ is a $\delta-$probable least core value if:

$$\Pr_{C \sim 2^A}[\sum_{i \in C} x_i + e \geq v(C)] \geq 1 - \delta$$
\end{defn}

The least core value is the smallest $e^*$ such that there exist an allocation for which $e^*$ is a $0-$probable least core value. 

We will compute a $\delta-$probable least core value by computing the sample least core value on a set of uniformly sampled coalitions $\widehat{S}$. Certainly if $|\widehat{S}| = 2^n$ coalitions, then the sample least core value will be the true least core value exactly. Using standard learning theory tools, we can relate the quality of the estimation of the least core value, in terms of $\delta$, to the size of the samples $\widehat{S}$ needed:

\begin{theorem}

Given a set $\widehat{S}$ of $m = O(\frac{\log(1 / \Delta)}{\delta^2})$ coalitions uniformly sampled from $2^A$, let:

\begin{equation*}
\begin{array}{ll@{}ll}
\hat{e} & = \argmin e &\\
                 &  \qquad \sum_{i \in C} \varphi_i(\hat{v}) \geq v(C) - e & \quad \forall C \subseteq \widehat{S}
\end{array}
\end{equation*}

then with probability $1 - \Delta$ over the samples, $\hat{e}$ is a $\delta-$least core value.
\end{theorem}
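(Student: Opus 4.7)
The plan is to recast the statement as a uniform-convergence question over a one-parameter threshold family, which reduces it to a VC-dimension~$1$ problem.

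First, I would simplify the bookkeeping by defining $g(C) := v(C) - \sum_{i \in C} \varphi_i(\hat{v})$, so that the feasibility constraint in the displayed LP is equivalent to $e \geq g(C)$. Since $\hat{e}$ is the minimum $e$ satisfying these inequalities over every $C \in \widehat{S}$, this gives the clean identity $\hat{e} = \max_{C \in \widehat{S}} g(C)$, and the target conclusion ``$\hat{e}$ is a $\delta$-probable least core value'' becomes the single statement
$$
\Pr_{C \sim U(2^A)}\!\bigl[\,g(C) > \hat{e}\,\bigr] \leq \delta.
$$

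Next I would introduce the one-parameter hypothesis class $\mathcal{H} = \{h_e : h_e(C) = \mathds{1}[g(C) > e],\; e \in \mathbb{R}\}$ on the domain $2^A$ under the uniform distribution. Although coalitions live in the combinatorial space $2^A$, each classifier in $\mathcal{H}$ depends on $C$ only through the scalar $g(C) \in \mathbb{R}$, so shattering a set of coalitions is equivalent to shattering their $g$-values by real thresholds. Hence $\mathrm{VC}(\mathcal{H}) = 1$. Applying the standard VC uniform-convergence bound already invoked in Lemma~\ref{lem: uniform_convergence} (Theorem 6.8 of Shalev-Shwartz and Ben-David), with $m = c\,(1+\log(1/\Delta))/\delta^2 = O(\log(1/\Delta)/\delta^2)$ i.i.d.\ uniform samples we have, with probability at least $1-\Delta$ over $\widehat{S}$,
$$
\Bigl|\tfrac{1}{m}\sum_{C_i \in \widehat{S}} h_e(C_i) - \Pr_{C \sim U(2^A)}[h_e(C) = 1]\Bigr| \leq \delta \quad \text{for every } e \in \mathbb{R}.
$$

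The last step is to plug $e = \hat{e}$ into this uniform inequality. By construction $\hat{e} \geq g(C_i)$ for every $C_i \in \widehat{S}$, so $h_{\hat{e}}(C_i) = 0$ for all $i$ and the empirical mean vanishes; uniform convergence then forces $\Pr_{C \sim U(2^A)}[g(C) > \hat{e}] \leq \delta$, which is exactly the $\delta$-probable least core condition. The only subtle point is verifying that $\mathrm{VC}(\mathcal{H}) = 1$ despite the combinatorial domain; once that collapse onto a one-dimensional threshold family is justified, the remainder is a direct PAC application in the realizable regime. (A more delicate order-statistic argument on the $(1-\delta)$-quantile of $g(C)$ under $U(2^A)$ would actually give the tighter sample complexity $O(\log(1/\Delta)/\delta)$, but the VC route already matches the stated bound.)
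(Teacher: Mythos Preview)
Your proof is correct and follows essentially the same approach as the paper: both reduce the problem to uniform convergence over a constant-VC hypothesis class parameterized by the single scalar $e$, observe that $\hat{e}$ achieves zero empirical risk, and apply Lemma~\ref{lem: uniform_convergence}. The only cosmetic difference is that the paper frames the classifier as a two-dimensional linear separator with weight vector $[e,1]$ (giving VC dimension $\leq 2$) rather than your one-dimensional threshold family on $g(C)$ (VC dimension $=1$), which affects only the constant hidden in the $O(\cdot)$.
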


\begin{proof}

We prove this through a simple learning theory setup analogous to the proposition above. Define a $2$-dimensional linear classifier with weights $\vec{w_e} = [e, 1]$. This class of classifier is a subset of all linear classifiers of dimension $2$ and thus has VC dimension $\leq 2$. 

For each of the $2^n - 2$ inequality constraints, construct data point $[1, \sum_{i \in C} \varphi_i(\hat{v})  - v(C)]$ that corresponds to coalition $C$'s constraint. We assign each data point a label of $1$. Notice that if classifier $\vec{w_{e}} = [e, 1]$ classifies $[1, \sum_{i \in C}  \varphi_i(\hat{v})  - v(C)]$ correctly, then:

$$\text{sign}_{\vec{w_{e}}}([1, \sum_{i \in C} \varphi_i(\hat{v}) - v(C)]) = 1  \Rightarrow [e, 1]^T [1, \sum_{i \in C} \varphi_i(\hat{v})  - v(C)] \geq 0 \Rightarrow  \sum_{i \in C}  \varphi_i(\hat{v}) \geq v(C) - e$$

Moreover, we know that the classifier we obtain, $\vec{w_{\hat{e}}} = [\hat{e}, 1]$, is such that it classifies all the samples in $\widehat{S}$ correctly by construction, and has zero empirical risk. Again, using Lemma \ref{lem: uniform_convergence}, we know that this classifier's performance on the samples generalize to all $2^n - 2$ constraints. In particular, if there are at least

$$O(\frac{2 +  \log(1 / \Delta)}{\delta^2})$$ 

samples in $\widehat{S}$, then the empirical least core value $\hat{e}$ we compute is such that:

$$\Pr_{C \sim 2^A} [ \sum_{i \in C}  \varphi_i(\hat{v}) \geq v(C) - \hat{e} ] = \Pr_{C \sim 2^A} [\text{sign}_{\vec{w_{\hat{e}}}}([1, \sum_{i \in C}  \varphi_i(\hat{v})  - v(C)]) = 1 ] \geq 1 - \delta $$

\end{proof}

Lastly, we remark that for games whose characteristic functions are CGA models of order higher than $2$, the Shapley is not the Nucleolus. An interesting extension of this work could be developing faster, sample-based methods for computing the Least Core with higher order CGA models.

\newpage

\subsection{Experiments Hyper Parameter Search}
In the low rank approximations of $\widehat{V}$ (as suggested by \cite{li2018learning,seshadri2019discovering}), we represented a team $C$ via a one-hot encoding $\vec{x_C}$ and fit a model of the form:
$$\hat{v}(C) = \vec{w}^T \vec{x_C} + \vec{x_C}^T \hat{V} \vec{x_C}$$ 

We tried parameterizing $\hat{V}$ via a low-rank matrix and swept weight decay ($l_2$ regularization) parameters on our validation set. Here we report the results of the full sweep for both of our experiments.

Table \ref{rl_experiment} shows the MSE (lower is better) of the performance prediction for various parameter values in the OpenAI particle world experiment. Table \ref{NBA_experiment} shows the accuracy of the model in predicting wins (higher is better) in the NBA experiment. In both cases we see that a relatively low rank model does very well at capturing structure in our environments. The main text analyzes the models resulting from these parameter choices.

\begin{table}
\[\begin{array}{c|cccccc}
\text{L2 regularization}/\text{$\hat{V}$ Rank} & 1 & 2 & 5 & 10 & 20 & 35\\
\hline
0.001 & 0.256 & 0.092 & \textbf{0.066} & 0.067 & 0.068 & 0.069\\
0.01 & 0.261 & 0.104 & 0.091 & 0.090 & 0.093 & 0.090\\
0.1 & 0.679 & 0.679 & 0.652 & 0.646 & 0.669 & 0.664\\
\end{array}\]
\caption{Results of hyper-parameter sweep for the second order CGA in the OpenAI particle world experiment. MSE is shown, lower is better.}\label{rl_experiment}
\end{table} 

\begin{table}\label{NBA_experiment}
\[\begin{array}{c|ccccc}
\text{L2 regularization}/\text{$\hat{V}$ rank} & 5 & 10 & 20 & 50 & 100\\
\hline
0.001 & 0.61 & 0.6214 & 0.6086 & 0.5971 & 0.5929 \\
0.01 & 0.64 & 0.6414 & 0.6414 & \mathbf{0.6429} & 0.6429\\
0.1 & 0.6214 & 0.63 & 0.62 & 0.6286 & 0.6242\\
\end{array}\]
\caption{Results of hyper-parameter sweep for the second order CGA in the NBA data. Model accuracy is shown, higher is better.}
\end{table}

\newpage

\end{document}